\author[1]{Burak Var\i c\i}
\author[3]{Karthikeyan Shanmugam \thanks{The project was started when the author was at IBM Research, Thomas J. Watson Research Center, Yorktown Heights, NY.}}
\author[2]{Prasanna Sattigeri}
\author[1]{Ali Tajer}
\affil[1]{Rensselaer Polytechnic Institute}
\affil[2]{MIT-IBM Watson AI Lab}
\affil[3]{Google Research India}
\DeclareMathOperator*{\argmax}{arg\,max}
\DeclareMathOperator*{\argmin}{argmin}
\def \C {{\rm c }} 
\newcommand{\norm}[1]{\left\lVert#1\right\rVert}
\newcommand{\smin}[1]{\sigma_{\min}\left(#1\right)}
\newcommand{\smax}[1]{\sigma_{\max}\left(#1\right)}
\newcommand{\lmin}[1]{\lambda_{\min}\left(#1\right)}
\newcommand{\lmax}[1]{\lambda_{\max}\left(#1\right)}
\newcommand{\lminn}[2]{\lambda_{\min}^{#2}\big(#1\big)}
\newcommand{\lmaxx}[2]{\lambda_{\max}^{#2}\big(#1\big)}
\newcommand{\sminn}[2]{\sigma_{\min}^{#2}\big(#1\big)}
\newcommand{\smaxx}[2]{\sigma_{\max}^{#2}\big(#1\big)}
\newcommand{\kl}{\;\|\;}
\newcommand{\inner}[2]{\left\langle #1 ,\; #2 \right\rangle}
\newcommand{\floor}[1]{\left \lfloor #1 \right \rfloor}
\newcommand{\algonameTS}{LinSEM-TS}
\newcommand{\algonameUCB}{LinSEM-UCB}
\begin{document}

\title{Causal Bandits for Linear Structural Equation Models}
\date{}

\maketitle

\begin{abstract}
This paper studies the problem of designing an optimal sequence of interventions in a causal graphical model to minimize cumulative regret with respect to the best intervention in hindsight. This is, naturally, posed as a causal bandit problem. The focus is on causal bandits for linear structural equation models (SEMs) and soft interventions. It is assumed that the graph's structure is known and has $N$ nodes. Two linear mechanisms, one soft intervention and one observational, are assumed for each node, giving rise to $2^N$ possible interventions. Majority of the existing causal bandit algorithms assume that at least the interventional distributions of the reward node's parents are fully specified. However, there are $2^N$ such distributions (one corresponding to each intervention), acquiring which becomes prohibitive even in moderate-sized graphs. This paper dispenses with the assumption of knowing these distributions or their marginals. Two algorithms are proposed for the frequentist (UCB-based) and Bayesian (Thompson Sampling-based) settings. The key idea of these algorithms is to avoid directly estimating the $2^N$ reward distributions and instead estimate the parameters that fully specify the SEMs (linear in $N$) and use them to compute the rewards. In both algorithms, under boundedness assumptions on noise and the parameter space, the cumulative regrets scale as $\tilde{\cal O} (d^{L+\frac{1}{2}} \sqrt{NT})$, where $d$ is the graph's maximum degree, and $L$ is the length of its longest causal path. Additionally, a minimax lower of $\Omega(d^{\frac{L}{2}-2}\sqrt{T})$ is presented, which suggests that the achievable and lower bounds conform in their scaling behavior with respect to the horizon $T$ and graph parameters $d$ and $L$.
\end{abstract}

\section{Introduction}\label{sec:introduction}
Multi-armed bandit (MAB) settings provide a rich theoretical context for formalizing and analyzing sequential experimental design procedures. Each arm represents one experiment in a canonical MAB setting, the stochastic outcome of which is represented by a random reward. The objective of a learner is to design a sequence of arm selections (i.e., experiments) that maximizes the cumulative reward over a time horizon. Bandit problems have a growing list of applications in various domains such as marketing campaigns \citep{Sawant2018ContextualMB}, clinical trials \citep{liu2020reinforcement}, portfolio management \citep{shen2015portfolio}, recommender systems \citep{zhou2017large}. Various assumptions on the statistical dependencies among the arms have led to different classes of problems, such as linear bandits \citep{abbasi-yadkori2011,dani2008linear,rusmevichientong2010linearly}, combinatorial bandits \citep{cesa2012combinatorial}, and contextual bandits~\citep{tewari2017ads}. In this paper, we focus on \emph{causal bandits}~\citep{lattimore2016causal} as another instance of a bandit setting in which reward models are assumed to be the results of causal relationships.

Causal Bayesian networks represent the cause-effect relationships through a directed acyclic graph (DAG). Each node of the causal graph represents a random variable, the edges encode the statistical dependencies among them, and the directed edges signify causal relationships. Recently, the effectiveness of DAGs in encoding non-trivial dependencies among random variables has led to an interest in the study of causal bandits. In a causal bandit framework, \emph{interventions} on the nodes of a DAG are modeled as arms, and the post-intervention stochastic observations are modeled as the arm rewards. Instead of a cumulative reward, generally, the observation of one node is regarded as the reward value. This node is usually selected to be one without any descendants. The structure of the underlying DAG induces statistical dependencies among the arms' rewards and their sequential selection. Furthermore, different interventions on different subsets of nodes result in distinct reward distribution for the reward node. In such a causal bandit framework, the objective is to minimize the cumulative regret with respect to the best intervention in hindsight. 

Causal bandits are effective for modeling several complex real-world problems. For instance, in drug discovery, dosages of multiple drugs can be adjusted adaptively to identify a desirable clinical outcome \citep{liu2020reinforcement}. Likewise, in advertisements campaigns, an advertisement approach can adaptively adjust its strategies that can be modeled as interventions on their advertisement system to maximize their return on advertisement investment~\citep{lu2020regret,lu2021causal,nair2021budgeted}. In each of these applications, several variables affect the observed reward. In parallel, these variables also have cause-effect relationships among them. Hence, an optimal design of experiments in such settings often necessitate simultaneously performing interventions on multiple variables. \vspace{.1 in}

\noindent\textbf{Literature review.} Designing a causal bandit problem hinges critically on the extent of assumptions on the topology of the underlying DAG and the probability distribution of its random variables. The existing literature can be categorized based on different combinations of these assumptions. 

The majority of the existing literature assumes that both the topology and the distributions of the reward node's parents are fully specified under all possible intervention models~\citep{lattimore2016causal,sen17,lu2020regret,nair2021budgeted}. Among these works, the initial studies posed the causal bandit problem as best arm identification, where the learner does not incur a regret for exploration~\citep{bareinboim2015bandits,lattimore2016causal,sen17}. The impact of exploration was further accounted for by \citet{lu2020regret}. Specifically, \citet{lu2020regret} proposed algorithms that capitalize on the causal information to improve the cumulative regrets compared to the algorithms that do not use this information. \citet{nair2021budgeted} extended the previous work by providing an instance-dependent regret bound that, in the worst case, grows with the number of interventions provided. The shared assumption of these works is the a priori knowledge of interventional distributions of parents of the reward node. This is a rather strong assumption since the number of such distributions can be prohibitive. \textbf{In contrast, we assume that the interventional distributions, or their marginals, are unknown.}

The setting without the knowledge of the topology and the structure of the interventional distributions has been studied more recently \citep{de2022causal,lu2021causal,bilodeau2022adaptively}. However, the algorithm of \citet{de2022causal} relies on auxiliary separating set algorithms, and its improvement upon non-causal algorithms is shown only empirically. \citet{lu2021causal} provide an improved regret result for a similar setting, though only for the atomic interventions. \citet{bilodeau2022adaptively} consider a different case and establish adaptive regret guarantees with respect to an unknown causal structure. They assume access to an estimate of the interventional distributions, which can be learned offline. Each possible arm is played a certain number of times to ensure that these estimates are sufficiently accurate. Notably, no assumptions are made about the structure of the distributions. Furthermore, their proposed adaptive algorithm is agnostic to the causal structure and incurs sublinear regret even in the presence of unobserved confounders. The main distinction of our work compared to \citet{bilodeau2022adaptively} is that their focus is on the adaptivity of the regret guarantees, and their algorithm's regret upper bound grows with the cardinality of the intervention space.

One setting that lies in between the previous two settings assumes that the topology is known while the statistical models are unknown. This setting was first studied by \citet{yabe18causal}, and they obtained a simple regret guarantee that scales polynomially in the graph size. However, the scope of this study is limited to binary random variables and the best arm identification objective. \citet{maiti2022causal} propose an algorithm that takes a semi-Markovian graph as input and achieves a simple regret that is almost optimal for a certain class of graphs. They also propose an algorithm that achieves an improved cumulative regret compared to the non-causal bandit algorithms when all variables are observed. The setting, however, is focused on atomic interventions on binary random variables. \citet{xiong2022pure} propose adaptive algorithms with gap-dependent sample complexity results. However, they focus on atomic interventions and their main result is for parallel graphs in which all the non-reward nodes are parents of the reward node without other causal relationships. Their extension to more general graphs is limited to binary generalized linear models. Unlike \citet{maiti2022causal} and \citet{xiong2022pure}, in which the cardinality of the intervention space is at most $N$, in this paper, the intervention space consists of all subsets of the graph's nodes with cardinality $2^N$. \citet{feng2022combinatorial} posed the non-atomic interventions as a combinatorial bandit problem of finding the best set of nodes to intervene with a pre-specified cardinality. They consider only binary random variables similar to \citet{yabe18causal} and \citet{maiti2022causal}.
Furthermore, they assume that the causal mechanisms follow a generalized linear model with $\{0,1\}$ Boolean variables. In their paper, they consider only hard interventions, unlike our case. Therefore, the Gram matrix associated with their local least-squares estimation is always about estimating the observational causal mechanism. In our case, due to the soft intervention model, eigenvalues of the Gram matrices for estimating the different interventional mechanisms are not straightforward to control, given that one needs to account for other simultaneous interventions. In this paper, we assume that only the DAG topology is known, which is a setup similar to that of \citet{yabe18causal} and \citet{feng2022combinatorial}. Nevertheless, we have significant differences in the intervention model, the space of interventions, and the causal mechanisms assumed.

\paragraph{Motivation for soft interventions.} We note that most of the existing causal bandit algorithms assume deterministic hard interventions, i.e., $do()$ interventions, for minimizing cumulative regret. The exceptions include the study by \citet{sen17}, which considers soft interventions on a single node for best arm identification. \citet{yabe18causal} remark that it can be extended to Boolean variables too. For minimizing cumulative regret under unknown interventional distributions, which is the goal of this paper, the only exception is the adaptive algorithm of \citet{bilodeau2022adaptively}.

In hard interventions, the variables are forced to take fixed values, and parental effects are completely absent. In contrast, soft interventions, which modify the conditional distributions of the target variables, can be more realistic in certain applications and have found recent interest in causal inference literature \citep{jaber2020causal,varici2021scalable}. Hence, in this paper we focus on soft interventions. In addition, our results can be readily extended to hard interventions, which we discuss in Appendix~\ref{sec:hard}.

A motivating example of soft interventions is the computational advertising studied by \citet{bottou2013counterfactual}. The elements of an advertising system (e.g., query, prices, and click rates) and their interactions can be modeled by a causal graph with a known structure, and the reward variable accounts for the resulting revenue. However, the strengths of the interactions between the nodes are unknown. Interventions are a set of click-rate prediction algorithms. Figure~\ref{fig:soft-motivation} illustrates the described model. In this causal system, the interventions are not hard since they do not remove the causal effects.

\begin{figure}[t]
    \centering
    \includegraphics[width=0.7\linewidth]{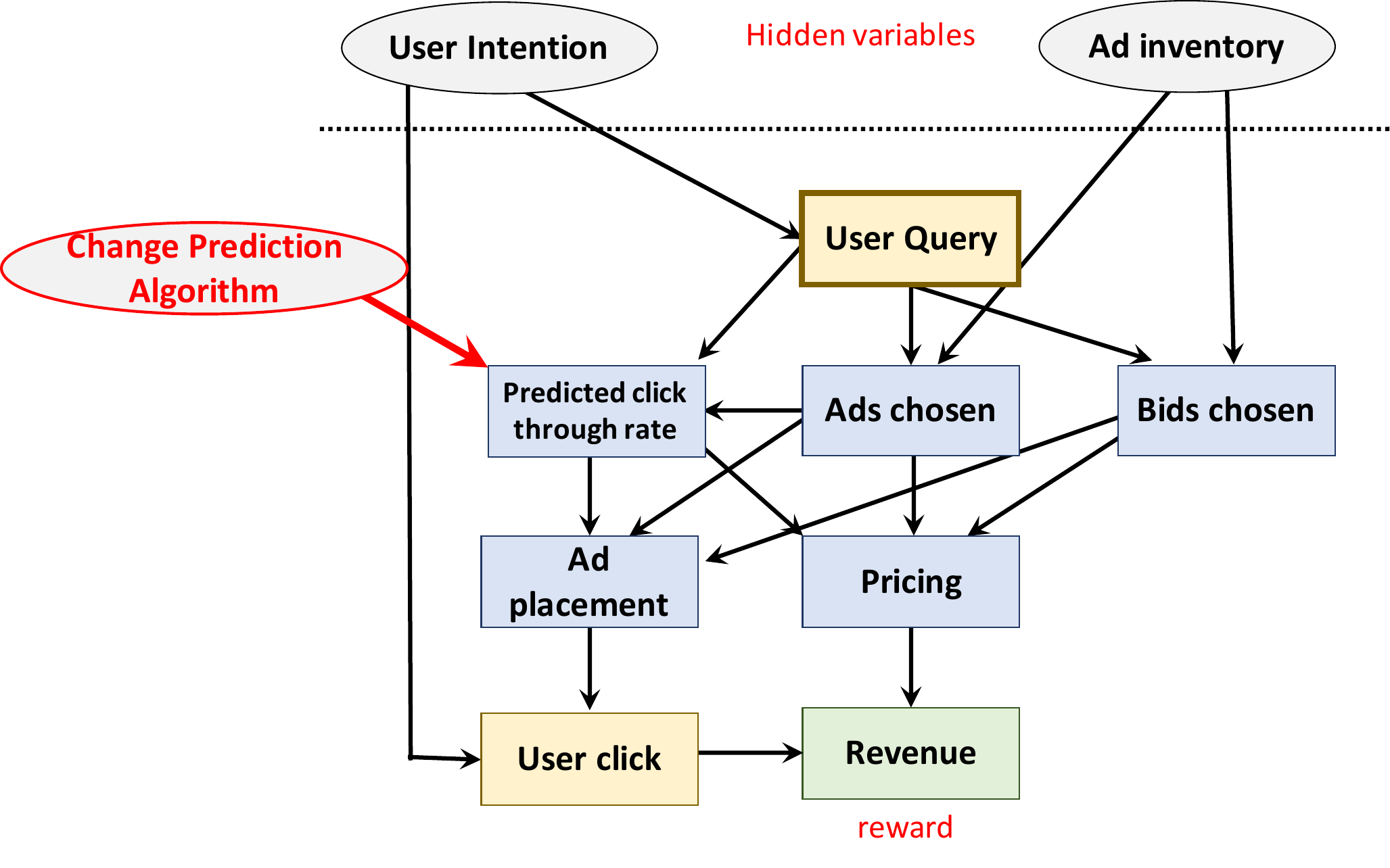}
    \caption{A computational advertising example borrowed from \citep{bottou2013counterfactual}. The interventions are changes in the prediction algorithm for the click rate. Since they do not remove any edges, they are soft interventions.}
    \label{fig:soft-motivation}
\end{figure}

\vspace{.1 in}
\noindent\textbf{Contributions.} Motivated by the limitations of the existing works, our objective is to answer the following question: \textbf{Can we use only the graph structure without the knowledge of interventional distributions, allow soft interventions on continuous variables, and obtain cumulative regret guarantees that scales optimally with the horizon ($\sqrt{T}$) while not growing with the cardinality of the intervention space?} We answer this question affirmatively when the causal system follows a linear structural equation model (SEM) with unknown parameters and the interventions inherit that structure.

We consider per-node interventional mechanisms, giving rise to~$2^N$ possible intervention models. Standard linear bandit algorithms also are inapplicable as feature vectors of arms are unknown in our case. We propose the {\bf Lin}ear {\bf SEM} {\bf U}pper {\bf C}onfidence {\bf B}ound the (\algonameUCB{}) algorithm that achieves the cumulative regret $\tilde{\cal O} (d^{L+\frac{1}{2}} \sqrt{NT})$ in which $d$ is the maximum degree and $L$ is the length of the longest causal path in the graph. Importantly, the number of interventions $2^N$ does not appear in our regret result. Instead, we have $d^{L+\frac{1}{2}}$ term that depends on the topology. We also propose the {\bf Lin}ear {\bf SEM} {\bf T}hompson {\bf S}ampling (\algonameTS{}) algorithm for the Bayesian setting, and its Bayesian regret scales similar to that of \algonameUCB{}, i.e., $\tilde{\cal O} (d^{L+\frac{1}{2}} \sqrt{NT})$. Finally, we establish a regret lower bound that is a constant factor of $d^{\frac{L}{2}-2}\sqrt{T}$, which matches the behavior of our achievable regret $\tilde \mcO(d^{L+\frac{1}{2}}\sqrt{NT})$ in terms of the horizon $T$ and graph parameters $d$ and $L$.

\vspace{.1 in}
\noindent\textbf{Key technical challenges.} 
The central piece of our analysis pertains to characterizing the hitherto uninvestigated cumulative regret when we have continuous variables, an intervention space that is the power set of the graphs' nodes, and unknown interventional distributions. This analysis faces a number of challenges while developing bandit algorithms with regret guarantees. First, the linear SEM, i.e., weight matrices and the noise model, are unknown. Since these pieces of information are needed for specifying the interventional distributions, not knowing them renders these distributions unknown. A direct (naive) approach to regret analysis involves estimating all the unknown distributions, the complexity of which grows quickly. Motivated by the fact that estimating $2^{N}$ interventional distributions explicitly bears significant redundancy, to circumvent the computational challenge, we start by expressing the reward as a function of the intervention's weight matrix and the random noise vector, which is independent of the action. We note that the measurements from the nodes are observed \emph{only after} an intervention is applied. Hence, it is important for these noise vectors to be independent of the actions to reduce the dependency on interventions to the weight matrices. This relationship is subsequently leveraged to estimate the unknown weight matrices containing at most $2Nd$ parameters. Finally, these estimates are leveraged to estimate the rewards.

A second challenge is due to the non-linearity of the relationship between the reward values observed and the unknown weight matrices. Specifically, even though the causal model is linear, interventions have a compounding effect on the mean values of the nodes inducing non-linearity. This renders the problem fundamentally different from the linear bandit problems. The third challenge is due to applying soft interventions. Under soft interventions, unlike hard interventions, the effects of the intervention targets' ancestors on the reward node are not nullified. In addition, having continuous variables (instead of binary, e.g., \citet{yabe18causal,feng2022combinatorial}) necessitates a careful examination of the compounded effect of the estimation errors through the causal paths. 

Our regret analysis naturally decomposes the total regret into two separate terms, one accounting for the effect of the topology through the maximum degree and the longest causal path of the graph, and the other term captures the effect of the intervention space and the underlying data distributions. For establishing lower bounds, we construct two bandit instances that differ by just one edge but have very different expected rewards due to the aggregation of the difference in one edge through $L$ layers.

\vspace{.1 in}
\noindent\textbf{Organization.} The rest of the paper is organized as follows. Section \ref{sec:problem_setup} introduces the graphical model, intervention model, and the causal bandit model that is built on them. Performance measures and the quantities that will affect the regret guarantees are also defined. Section \ref{sec:methodology} outlines our approach to the problem, and establishes the estimation procedure. Section \ref{sec:UCB} develops an upper confidence bound-based algorithm and regret guarantees. Section \ref{sec:TS} develops a Thompson Sampling-based algorithm and benefits from the results developed in Section \ref{sec:UCB} to present a similar regret result. Also, a more practical algorithm is proposed in Section \ref{sec:TS} that is used in the numerical studies in Section \ref{sec:simulations}. Section~\ref{sec:lower-bound} presents a minimax lower bound on the regret which shows that the scaling behavior of our regret guarantees is necessary. The central pieces of the proofs are provided in the main body of the paper, and the rest of the proofs are relegated to the appendix.

\section{Problem Setup}\label{sec:problem_setup}
{\bf Notations.} Vectors are denoted by upper-case letters, e.g., $X$, where $X_{i}$ denotes the $i$-th element of vector $X$. Matrices are denoted by bold upper-case letters, e.g., $\bA$, where $[\bA]_{i}$ denotes the $i$-th column vector of $\bA$ and $[\bA]_{i,j}$ denotes the entry at row $i$ and column $j$ of $\bA$. For a positive integer $N$, we define $[N] \triangleq \{1,\dots,N\}$. Sets and events are denoted by calligraphic letters. For a subset $\mcS \subseteq [N]$, we define $X_{\mcS} \triangleq X\odot\mathbf{1}(\mcS)$, where $\odot$ denotes the Hadamard product and the $N$-dimensional binary vector $\mathbf{1}({\mcS})\in \{0,1\}^N$ is specified such that its elements at the coordinates included in $\mcS$ are set to~1, and the rest are 0. Norm $\norm{\cdot}$ denotes the Euclidean norm when applied on a vector and the spectral norm when applied to a matrix. For the latter, the spectral norm of a matrix is equal to the largest eigenvalue of the matrix. The $\bA$-norm of a vector $X$ associated with the positive semidefinite matrix $\bA$ is denoted by $\norm{X}_{\bA}=\sqrt{X^{\top}\bA X}$.
We denote the singular values of a matrix $\bA \in \mathbb{R}^{M \times N}$, where $M \geq N$,  by
\begin{align}
    \sigma_{1}(\bA) &\geq \sigma_{2}(\bA) \geq \dots \geq \sigma_{N}(\bA) \ .
\end{align}
In this paper, we often work with zero-padded vectors. As a result, the matrices that contain these vectors have non-trivial \emph{null space} leading to zero singular values. In such cases, we use the \emph{effective} smallest singular value that is non-zero. We denote the largest and smallest eigenvalues that correspond to effective dimensions of a positive semidefinite matrix $\bA$ with rank $k$ by
\begin{align}
    \smax{\bA} \triangleq \sigma_{1}(\bA) \ , \quad \mbox{and} \quad \smin{\bA} \triangleq \sigma_{k}(\bA) \ .
\end{align}
Similarly, for a square matrix of the form $\bV = \bA \bA^{\top} \in \mathbb{R}^{N \times N}$, we denote the largest and smallest eigenvalues by
\begin{align}
    \lmax{\bV} &\triangleq \lmax{\bA \bA^{\top}} 
    = \sigma_{\max}^2(\bA) \ , \\
    \mbox{and} \qquad \lmin{\bV} &\triangleq \lmin{\bA \bA^{\top}} 
    = \sigma_{\min}^2(\bA) \ . 
\end{align}
Finally, the notation $\tilde \mcO$ is used to ignore constant and poly-logarithmic factors. We also have a table at Appendix~\ref{appendix:notations} that summarizes commonly used notations throughout the paper.

\vspace{.1 in}
\noindent {\bf Graphical model.} We consider a directed acyclic graph with the known structure $\mcG \triangleq (\mcV,\mcB)$, where $\mcV \triangleq [N]$ is the set of nodes, and $\mcB$ is the set of edges. The vector of random variables associated with the nodes is denoted by $X \triangleq (X_1,\dots,X_N)^\top$. We refer to the set of parents of node $i\in\mcV$ by $\Par(i)$. We denote the maximum degree of the graph by $d \triangleq \{\max_{i} |\Par(i)|\}$, and use $L$ to denote the length of the longest directed path in the graph. For the causal model, we consider a linear SEM, according to which
\begin{align}
    X = \bH^\top X + \nu + \epsilon \ ,  \label{eq:linear_sem_pre} 
\end{align}
where $\bH \in \mathbb{R}^{N\times N}$ is the edge weights matrix, and it is strictly upper triangular, $\nu$ is the constant vector of unknown affine terms, $\epsilon$ accounts for the zero-mean model noise, and it is independent of $X$ and $\nu$. The term $\nu+\epsilon$ can be alternatively viewed as a noise term with an unknown mean value. In this model, $[\bH]_{i}$ captures the weights of the causal effects of $\Par(i)$ on node $i$. Consequently, the edge weight $[\bH]_{j,i}$ is non-zero if and only if $j \in \Par(i)$. The elements of the noise vector $\epsilon\triangleq (\epsilon_1,\dots,\epsilon_N)^\top$ are statistically independent, and are assumed to be 1-sub-Gaussian. The noise vector is assumed to satisfy $\norm{\epsilon}\leq m_\epsilon$, where $m_\epsilon\in \mathbb{R}^{+}$ is an unknown constant.

Next, we re-arrange the terms in \eqref{eq:linear_sem_pre} such that the unknowns terms, i.e., $\bH$ and $\nu$, can be presented more compactly. We create a \emph{dummy} node $X_0 = 1$ and an associated dummy noise term $\epsilon_0 = 1$. Accordingly, we augment $X$ and $\epsilon$ and redefine them as $X \triangleq (1,X_1,\dots,X_N)^\top$ and $\epsilon\triangleq (1,\epsilon_1,\dots,\epsilon_N)^\top$, respectively. Hence, for any node $i \in [N]$, the dummy node $0$ acts as a parent with edge weight $\nu_i$. Subsequently, we create the matrix $\bB \in \mathbb{R}^{(N+1)\times (N+1)}$, in which we use $0$ to refer to the index of the dummy node, i.e.,
\begin{align}
    [\bB]_{i,j} = \begin{cases}
        [\bH]_{i,j} &\ , \quad \forall i \in [N], \ \forall j \in [N] \ , \\
        \nu_i &\ , \quad i = 0 , \ \forall j \ \in [N] \ , \\
        0 &\ , \quad j = 0 , \ \forall i \ \in [N] \cup \{0\} \ .
    \end{cases} 
\end{align}
Hence, \eqref{eq:linear_sem_pre} becomes
\begin{align}
    X = \bB^\top X + \epsilon \ .  \label{eq:linear_sem} 
\end{align}
Next, note that the constant $X_{0}=1$ acts as a parent for all nodes. Hence, we expand $\Par(i)$ with the dummy node and define 
\begin{align}
\Pa(i) &\triangleq \Par(i) \cup \{0\}  \ , \quad \forall i \in [N]\ .
\end{align}
Therefore, the effective maximum degree becomes $d+1$, the effective longest path-length becomes $L+1$, the weight between the dummy node $0$ and node $i$ is $\nu_i=[\bB]_{0,i}$.

\vspace{.1 in}
\paragraph{Soft intervention model.} The conditional distribution of $X_i$ given its parents is denoted by $\P(X_i | X_{\Par(i)})$. A \emph{soft intervention on node $i$} refers to an action that induces a change in the conditional distribution $\P(X_i | X_{\Par(i)})$. An intervention can impact one or more nodes simultaneously, and we distinguish the intervention actions based on the set of nodes that they impact. We denote the intervention space by $\mcA \triangleq 2^{\mcV}$.  

To formalize the impact of our soft interventions, we model the effect of a soft intervention on node $i$ as an alteration in $[\bB]_{i}$ such that $[\bB]_{i}$ changes to $[\bB^{*}]_{i} \neq [\bB]_{i}$. We define $\bB^{*}$ as a weight matrix formed by columns $\{[\bB^{*}]_{i}: i \in \mcV\}$. We refer to $\bB$ and $\bB^{*}$ as the observational and interventional weight matrices, respectively. We note that the changes can occur on either the weights $[\bH]_{i}$ or the affine term $\nu_i$, and this covers all possible soft interventions on linear models. The main difference between soft and hard interventions is that under a deterministic hard intervention on node $i$, we have a zero vector $[\bB^{*}]_{i}$, and $\epsilon_i$ is not random anymore but a constant value assigned by the intervention. Under a soft intervention, $[\bB^{*}]_{i}$ can take non-zero values and $\epsilon$ remains a random vector.

Since an intervention can impact multiple nodes simultaneously, each intervention action imposes changes in multiple columns of $\bB$. This leads to a distinct linear SEM model for each possible intervention action $a \in \mcA$. To capture the impact of a specific intervention action $a \in \mcA$, we define $\bB_{a}$ as its corresponding post-intervention weight matrix. $\bB_{a}$ is constructed according to:
\begin{align} \label{eq:Ba_construct}
    [\bB_{a}]_{i} = \mathbbm{1}_{\{i \in a\}} [\bB^{*}]_{i} + \mathbbm{1}_{\{i \notin a\}} [\bB]_{i} \ .
\end{align}
We use $\P_{a}$ to denote the distribution of the random variables under intervention $a \in \mcA$ induced by linear SEMs in \eqref{eq:linear_sem} with weight matrix $\bB_{a}$. For given matrices $\bB$ and $\bB^{*}$ we define
\begin{align}
    m_B\triangleq \max_{i\in\mcV,a\in\mcA}\{\norm{[\bB_{a}]_{i}}\}\ ,
\end{align}
Since the maximum degree of a node~$i$ is $d$ and we have augmented the weight vectors with $\nu_i$'s, we have $\norm{[\bB_{a}]_{i}}_0 \leq d+1$. The boundedness of $\|[\bB_{a}]_{i}\|$ and $\|\epsilon\|$ in conjunction with the SEM in~\eqref{eq:linear_sem} imply that there exists a constant $m\in\R^+$ such that $\|X\|\leq m$. We formalize the boundedness assumptions as follows.

\begin{assumption}[Boundedness]\label{assumption:boundedness}
For given matrices $\bB$ and $\bB^{*}$, $\max_{i\in\mcV,a\in\mcA}\{\norm{[\bB_{a}]_{i}}\} \leq m_B$. Furthermore, $X$ satisfies $\norm{X}\leq m$ for some known $m \in \mathbb{R}^{+}$.
\end{assumption}

\paragraph{Causal bandit model.} In the setting described, a learner performs a sequence of repeated interventions. Each intervention is represented by an arm, including the null intervention (i.e., a pure observation). The set of interventions can be abstracted by a multi-arm bandit setting with $2^N$ arms, one arm corresponding to one specific intervention. Following the related literature, e.g., \citep{lattimore2016causal,sen17,lu2020regret,nair2021budgeted,yabe18causal}, we assume the structure of the graph $\mcG$ is given, and without loss of generality, we designate node $N$ as the reward node and accordingly $X_N$ as the reward variable. Similar to linear bandits, the reward $X_N$ is a linear function of some other variables, namely the parents of the reward node, $X_{\Pa(N)}$. These parent variables $X_{\Pa(N)}$, in turn, depend on their causal ancestors according to the linear SEM in~\eqref{eq:linear_sem}. Even though each of the parents contributes linearly to its immediate descendants, their compounding effects induce a non-linearity in the overall model such that $X_N$ varies non-linearly with respect to the entries of $\bB$ and $\bB^{*}$. We denote the expected reward based on intervention (action) $a \in \mcA$ by 
\begin{align}
     \mu_a \triangleq \E_{a}[X_N] \ , \label{eq:expected_reward}
\end{align}
where $\E_{a}$ denotes expectation under $\P_a$. Accordingly, we define the optimal action $a^*$ as 
\begin{align}
    a^* &\triangleq \argmax_{a \in \mcA} \mu_a \ . \label{eq:optimal_action}
\end{align}
The sequence of interventions over time is denoted by $\{a_t\in\mcA: t\in\mathbb{N}\}$. Upon intervention $a_t$ in round $t$, the learner observes $X(t) \triangleq (1,X_1(t),\dots, X_N(t))^\top \sim \P_{a_t}$ and collects the reward $X_N(t)$. We denote the random noise vector in round $t$ by $\epsilon(t) \triangleq (1,\epsilon_1(t),\dots,\epsilon_N(t))^\top$, which is independent of the intervention $a_t$. The learner does not know the observational or interventional matrices $\bB$ and $\bB^{*}$. More importantly, the interventional distributions of the parents of the reward variable, i.e., $\{\P_{a}(\Pa(N)):a\in\mcA\}$ are unknown. This is in contrast to the earlier studies in~\citep{lattimore2016causal,sen17,lu2020regret,nair2021budgeted,bilodeau2022adaptively}, which assume that these distributions are known.

Let us denote the second moment of the parents of a node $i$ under intervention $a \in \mcA$ by 
\begin{align}
    \Sigma_{i,a} \triangleq \E_{X \sim \P_{a}}\left[X_{\Pa(i)} X_{\Pa(i)}^\top\right]\ . \label{eq:second_moment_definition}
\end{align}
This second moment matrix is a function of intervention $a$, and the unknown weight matrices $\bB$ and $\bB^{*}$. Accordingly, we denote the lower and upper bounds on the minimum and maximum singular values of these moments by
\begin{align}
    \kappa_{i,\min} &\triangleq \min_{a \in \mcA} \smin{\Sigma_{i,a}} \ , \quad  
    \kappa_{\min} \triangleq \min_{i \in [N]} \kappa_{i,\min}  \label{eq:kappa_min} \ , \\
    \kappa_{i,\max} &\triangleq \max_{a \in \mcA} \smin{\Sigma_{i,a}} \ , \quad  
    \kappa_{\max} \triangleq \min_{i \in [N]} \kappa_{i,\max}  \label{eq:kappa_max} \ .
\end{align}
We do not assume to know these moments. Note that having a zero singular value implies that there is a deterministic relationship among the elements of $X_{\Pa(i)}$. However, random variables of a causal model, such as the model described in this section, cannot have such a deterministic relationship. Furthermore, the addition of constant $X_0=1$ does not violate this property since it is the only non-random variable in vector $X$, and any random variable $X_i$ still does not possess a deterministic relationship with the other elements of $X_{\Pa(i)}$. Hence, in our setting we have $\kappa_{\min} > 0$. Furthermore, since $\norm{X}^2 \leq m^2$, we have 
\begin{align}
    0 < \kappa_{\min} \leq \kappa_{i,\min} \leq \kappa_{i,\max} \leq \kappa_{\max} \leq m^2 \ . \label{eq:kappa_m2}
\end{align}
The learner's objective is to design a policy for sequentially selecting the interventions over time so that a measure of cumulative regret is minimized. For this purpose, we define $r(t)\triangleq \mu_{a^*} - \mu_{a_t}$ as the average regret incurred at time $t$. In this paper, we consider the following two canonical expected cumulative regret measures. 
\begin{enumerate}
    \item \textbf{Frequentist regret:} By denoting the cumulative regret up to time $T$ by $R(T)\triangleq \sum_{t=1}^T r(t)$, the expected cumulative regret is given by
    \begin{align}\label{eq:frequentist_regret}
        \E[R(T)] \triangleq T\mu_{a^*} - \E \left[ \sum_{t=1}^T X_N(t) \right] \ .
    \end{align}
    \item \textbf{Bayesian regret:} We define $\bW \triangleq [\bB \ \ \bB^{*}] \in \R^{N \times 2N}$ to capture the entire parameterization of the observational and interventional distributions. We define the domain of $\bW$ by $\mcW \triangleq \{\Theta \in R^{N \times 2N} : \norm{[\Theta]_{i}}\leq 1, \ \ \forall i \in [2N]\}$. By denoting the cumulative regret associated with~$\bW$ up to time $T$ by $R_{\bW}(T)$, the Bayesian regret is given by
    \begin{align}\label{eq:bayesian_regret}
        {\rm BR}(T) \triangleq \E_{\mcW} \E_{\epsilon} [R_{\bW}(T)]\ ,
    \end{align}
    where $\E_{\mcW}$ is the expectation with respect to the Bayesian prior over $\mcW$.
\end{enumerate}

\section{Causal Bandits Methodology}\label{sec:methodology}

Before presenting the algorithms and performance guarantees, we note that despite similarities to the conventional bandit problems and the existing causal bandits' literature, treating our problem necessitates a distinct approach to algorithm design and analysis. As discussed in Section \ref{sec:introduction}, this is primarily due to the lack of information about the joint distribution of parents under different interventions  $\{\P_a(\Pa(N)):a\in\mcA\}$. Naively, the moments of these distributions can be estimated from the data. However, it becomes prohibitive even for moderate values of $N$, which gives rise to $2^N$ possible interventions.

We take a different approach and use the fact that reward variable $X_N$ under intervention $a$ is a function of the weight matrix $\bB_{a}$ and the noise vector $\epsilon$ (which is independent of $a$). We show that the reward $X_N$ is a linear function of the entries of $\epsilon$, and the coefficients of the noise terms are non-linear functions of the entries of $\bB_{a}$. Hence, causal relationships are captured by these coefficients. We compute these coefficients by solving each linear problem $X_i = X^\top [\bB_{a}]_{i} + \epsilon_i$ separately, and combining the estimates for $[\bB_{a}]_{i}$ according to the graph structure. We show how these steps work, propose algorithms, and provide their regret analysis in the rest of the paper.

\paragraph{Reward Modeling.} In linear SEMs, each random variable $X_i$ can be specified as a linear function of the exogenous noise variables $\epsilon$ via recursive substitution of the structural equations. This can easily be seen by rearranging \eqref{eq:linear_sem} to obtain $X = (I_{N+1}-\bB^{\top})^{-1}\epsilon$. The second observation is that, $ (I_{N+1}-\bB^{\top})^{-1}$ has a simple expansion since $\bB$ is strictly upper triangular. Specifically,
\begin{align}
     \left(I_{N+1}-\bB^{\top}\right)^{-1} &= (I_{N+1}-\bB)^{-\top} \\
     &= \left(I_{N+1} + \sum_{i=1}^\infty \bB^i \right)^{\top} \\
     & = \left(\sum_{\ell=0}^{L+1} \bB^{\ell} \right)^{\top} \ , \label{eq:expansion}
\end{align}
where the last equality holds since $B^{\ell}$ becomes a zero matrix for $\ell \geq L+2$. Finally, note that the entry $[\bB^{\ell}]_{i,j}$ is the sum of the weighted products along $\ell$-length directed paths from node $j$ to node $i$. Hence, using \eqref{eq:expansion}, the multiplier of $\epsilon_i$ in the expansion of $X_N$ has a simple description: the sum of the products of weights along a path that traces from an upstream node $i$ and ends at the reward node $N$. The following lemma characterizes this linear relationship between the reward node $X_N$ and the noise variables $\epsilon$. The relationship is further simplified for the expected reward.

\begin{lemma}\label{lm:expected_reward}
Consider the linear SEM associated with intervention $a$ with weight matrix $\bB_{a}$. The reward $X_N$ is related to the noise vector $\epsilon$ via 
\begin{align}
    X_N &= \sum_{\ell=0}^{L+1} \inner{\big[\bB_{a}^{\ell}\big]_{N}}{\epsilon} \ ,
\end{align}
in which, $L$ is defined as the length of the longest path in $\mcG$.
Furthermore, since $\{\epsilon_i:i\in [N]\}$ have zero mean values and $\epsilon_{0}=1$, the expected reward under intervention $a$ is
\begin{align}
    \mu_a & = f(\bB_{a}) \triangleq \sum_{\ell=1}^{L+1} \big[\bB_{a}^{\ell}\big]_{0,N}  \ .
\end{align}
\end{lemma}
\begin{proof}
    The first result immediately follows from \eqref{eq:expansion} as follows:
    \begin{align}
        X \overset{\eqref{eq:linear_sem}}&{=} (I_{N+1}-\bB_a^{\top})^{-1} \epsilon \overset{\eqref{eq:expansion}}{=} \left(\sum_{\ell=0}^{L+1} \bB_a^{\ell} \right)^{\top}\epsilon \ , \\
        X_N &= \sum_{\ell=0}^{L+1}\inner{[\bB_a^{\ell}]_N}{\epsilon} \ . \label{eq:lm1-1} 
    \end{align}
    Note that $\epsilon$ and $\bB_a$ are independent, the expectation of each $\epsilon_i$ is $0$ for $i \in [N]$, and dummy noise constant $\epsilon_0=1$. Then, we obtain
    \begin{align}
        \mu_a = \E[X_N] &= \sum_{\ell=0}^{L+1} \E\left[\inner{[\bB_a^{\ell}]_N}{\epsilon}\right] \\
        &=  \sum_{\ell=0}^{L+1} \sum_{i=0}^{N} \left([\bB_a^{\ell}]_{i,N} \E[\epsilon_i] \right) \\
        & = \sum_{\ell=0}^{L+1} [\bB_a^{\ell}]_{0,N} \ . \label{eq:lm1-2}
    \end{align}
    Note that $\bB_a^{0} = I_{N+1}$, and the summand for $\ell=0$ in \eqref{eq:lm1-2} is zero. Hence, by defining $f(\bB_a) = \sum_{\ell=1}^{L+1} [\bB_a^{\ell}]_{0,N}$, we obtain the desired result $\mu_a = f(\bB_a)$.
\end{proof}
Lemma \ref{lm:expected_reward} indicates that, given $\bB_{a}$, the expected reward can be computed from the sum of $L+1$ components. Next, we focus on estimating the weight vectors, which are for constructing the matrices $\{\bB_{a}:a \in\mcA\}$.

\paragraph{Estimating parameter vectors under causal bandit model.} 
We use the ordinary least-squares method to estimate vectors $\{[\bB]_{i}, [\bB^{*}]_{i} : i \in [N] \}$. For estimating $[\bB]_{i}$ and $[\bB^{*}]_{i}$, we should use the data samples from the rounds in which $i$ is non-intervened, and intervened, respectively. Hence, our estimates at time $t$, which are denoted by $\{ [\bB(t)]_{i}, [\bB^{*}(t)]_{i} : i \in [N] \}$, are computed according to:
\begin{align}
 [\bB(t)]_{i} & \triangleq [\bV_{i}(t)]^{-1} \sum_{s \in [t] : i \notin a_{s}} X_{\Pa(i)}(s) X_i(s) \ , \label{eq:estimate_obs} \\
\mbox{and} \qquad   [\bB^{*}(t)]_{i} & \triangleq [\bV^{*}_{i}(t)]^{-1}\sum_{s \in [t] : i \in a_{s}} X_{\Pa(i)}(s) X_i(s) \ , \label{eq:estimate_int}
\end{align}
where we have defined
\begin{align}
\bV_{i}(t) & \triangleq \sum_{s \in [t] : i \notin a_{s}} X_{\Pa(i)}(s) X_{\Pa(i)}^\top(s) + I_{N+1} , \label{eq:define_V_obs} \\
\mbox{and} \qquad  \bV^{*}_{i}(t) & \triangleq\sum_{s \in [t] : i \in a_{s}}  X_{\Pa(i)}(s) X_{\Pa(i)}^\top(s)+I_{N+1} \ . \label{eq:define_V_int}
\end{align}
Note that, first entry of $X_{\Pa(i)}=1$ since dummy node $0$ is contained in $\Pa(i) = \Par(i) \cup \{0\}$ for every node $i \in [N]$. Accordingly, we denote our estimate of $\bB_{a}$ at time $t$ by $\bB_{a}(t)$. These estimates are formed similarly to~\eqref{eq:Ba_construct} and according to:
\begin{align} \label{eq:Bat_construct}
    [\bB_{a}(t)]_{i} \triangleq \mathbbm{1}_{\{i \in a\}} [\bB^{*}(t)]_{i} + \mathbbm{1}_{\{i \notin a\}} [\bB(t)]_{i}  \ .
\end{align}
Note that for estimating $[\bB_{a}(t)]_{i}$, based on whether node $i$ is contained in $a$, we use either the observational data (when $i\notin a$)   or the interventional data (when $i\in a$). Furthermore, at time~$t$, we construct two distinct $t \times (N+1)$ data matrices $\bD_{i}(t)$ and $\bD^{*}_{i}(t)$ from the observational and interventional data, respectively, to store the data from these two cases separately. Specifically, if node $i$ is intervened at time $s \in [t]$, then the $s$-th row of $\bD^{*}_{i}(t)$ stores $X_{\Pa(i)}^{\top}(s)$, and the $s$-th row of $\bD_{i}(t)$ is a zero vector. This construction is reversed when $i$ is not intervened at time~$s$. Hence,
\begin{align}
    \big[\bD_{i}^{\top}(t)\big]_{s} &\triangleq \mathbbm{1}_{\{i \notin a_s\}} X_{\Pa(i)}^{\top}(s) \ , \label{eq:D_it_obs} \\
\mbox{and} \qquad     \big[{\bD^{*}_{i}}^{\top}(t)\big]_{s} &\triangleq \mathbbm{1}_{\{i \in a_s\}} X_{\Pa(i)}^{\top}(s) \ .  \label{eq:D_it_int}
\end{align}
Similarly to \eqref{eq:Ba_construct}, we denote the relevant data and Gram matrices for node $i$ under intervention $a$ by
\begin{align}
    \bD_{i,a}(t) &\triangleq \mathbbm{1}_{\{i \in a\}} \bD^{*}_{i}(t) +  \mathbbm{1}_{\{i \notin a\}} \bD_{i}(t) \ ,  \label{eq:D_ita} \\
 \mbox{and} \qquad    \bV_{i,a}(t) &\triangleq \mathbbm{1}_{\{i \in a\}} \bV^{*}_{i}(t) +  \mathbbm{1}_{\{i \notin a\}} \bV_{i}(t) \ . \label{eq:V_ita} 
\end{align}
The constructions in \eqref{eq:D_it_obs}-\eqref{eq:V_ita} yield 
\begin{align}
    \bV_{i,a}(t) &= \bD_{i,a}^{\top}(t) \bD_{i,a}(t) + I_{N+1} \ . \label{eq:V_ita_D_ita}
\end{align}
Note that matrices $\bV_{i}(t)$ and $\bV^{*}_{i}(t)$ are positive definite with their smallest eigenvalue being at least $1$ due to the regularization constant $I_{N+1}$. Hence, $\lmin{\bV_{i,a}(t)}\geq 1$.

Let $N^{*}_{i}(t)$ denote the number of times that node $i$ is intervened up to time $t$, for each $i \in [N]$ and $t \in [T]$. Similarly, denote the number of times that $i$ is not intervened by $N_{i}(t) = t - N^{*}_{i}(t)$. Formally,
\begin{align} \label{eq:def_N_it}
    N^{*}_{i}(t) &\triangleq \sum_{s=1}^t \mathbbm{1}_{\{i \in a_s\}} \ , \quad \mbox{and} \quad
    N_{i}(t) \triangleq \sum_{s=1}^t \mathbbm{1}_{\{i \notin a_s \}} = t - N^{*}_{i}(t) \ .
\end{align}
Accordingly, for node $i$ under intervention $a$ we define
\begin{align} \label{eq:def_N_iat}
    N_{i,a}(t) &\triangleq \mathbbm{1}_{\{i \in a\}} N^{*}_{i}(t) + \mathbbm{1}_{\{i \notin a\}} N_{i}(t) \ , 
\end{align}
and denote the estimation error at time $t$ for matrix $\bB_{a}$ and its columns by
\begin{align}\label{eq:def_delta_ait}
    \Delta_{a}(t) & \triangleq \bB_{a}(t) - \bB_{a} \ , \quad \mbox{and}  \quad [\Delta_{a}(t)]_{i} \triangleq [\bB_{a}(t)]_{i} - [\bB_{a}]_{i} \ , \qquad \forall i \in [N]\ .
\end{align} 

Our analysis will show that (Lemma \ref{lm:expected_reward}), regret analysis involves the powers of $\bB_{a}(t)$ matrices. To get insight into the matrix powers, consider a node $j \in \mcV \setminus \Pa(i)$ that is not a parent of node $i$. By construction, vector $X_{\Pa(i)}$ has a zero at its $j$-th entry. Then, the $j$-th rows of $[\bV_{i}(t)]^{-1}$ and $[\bV^{*}_{i}(t)]^{-1}$ will consist of only zeros, except for their $j$-th entries, which are $1$, accounting for the addition of $I_{N+1}$. Since $j \notin \Pa(i)$, we have $[X_{\Pa(i)}]_{j}(s)=0$ and $[\bB_{a}(t)]_{j,i}=0$ based on~\eqref{eq:estimate_obs}. Similarly, $[\bB^{*}]_{i}$ has non-zero entries at only entries $k \in \Pa(i)$, and $[\Delta_{a}(t)]_{i}$ will be at most $(d+1)$-sparse. We denote the estimation error of the $\ell$-th power of $\bB_{a}$ for each $\ell \in [L]$ and $i \in [N]$ by
\begin{align}\label{eq:def_delta_alit}
    \Delta_{a}^{(\ell)}(t) & \triangleq \bB_{a}^{\ell}(t) - \bB_{a}^{\ell} \ , \quad \mbox{and} \quad \big[\Delta_{a}^{(\ell)}(t)\big]_{i} \triangleq \big[\bB_{a}^{\ell}(t)\big]_{i} - \big[\bB_{a}^{\ell}\big]_{i}\ .
\end{align} 
Based on the approach described and the estimated quantities specified, in the next section, we present the main algorithm. 

\section{\algonameUCB{} Algorithm}\label{sec:UCB}
Upper confidence bound ({\rm UCB})-based algorithms are effective in a wide range of bandit settings. Their general principle is to, sequentially and adaptively to the data, compute upper confidence bounds on the reward of each arm. A learner, subsequently, in each round selects the arm with the largest upper confidence bound. Confidence intervals for the estimated parameters are leveraged to compute these bounds. In our problem, we have $2N$ unknown weight vectors $\{[\bB]_{i}, [\bB^{*}]_{i} : i \in [N] \}$ that specify the weight matrices $\{\bB_{a} : a \in \mcA\}$, which in turn, characterize the reward variables uniquely. Hence, we design a ${\rm UCB}$-based algorithm by maintaining confidence intervals for these $2N$ vectors. We describe the details of the algorithm next.

\paragraph{Algorithm details.} Algorithm \ref{alg:ucb_algorithm} presents our main bandit algorithm referred to as {\bf Lin}ear {\bf SEM} {\bf U}pper {\bf C}onfidence {\bf B}ound (\algonameUCB{}). As its inputs, it takes the graph structure $\mcG$ (set of parents for each node), action set $\mcA$, horizon $T$, and parameter $\beta_{T}$. We build the confidence intervals centered on the empirical estimates for observational and interventional weight vectors as follows:
\begin{align}
    \mcC_{i}(t) & \triangleq  \left\{\theta \in \mathbb{R}^N : \norm{\theta}\leq m_B , \norm{\theta - [\bB(t-1)]_{i}}_{\bV_{i}(t-1)} \leq \beta_{T} \right\}  \ , \label{eq:conf_obs} \\
    \mbox{and} \quad \mcC^*_{i}(t) & \triangleq \left\{ \theta \in \mathbb{R}^N : \norm{\theta}\leq m_B, \norm{\theta - [\bB^{*}(t-1)]_{i}}_{\bV^{*}_{i}(t-1)} \leq \beta_{T} \right\} \label{eq:conf_int} \ ,
\end{align}
where $\beta_{T}\in\R_+$ controls the size of the confidence intervals. Accordingly, we define the relevant confidence interval for node $i$ under intervention $a$ as 
\begin{align}
    \mcC_{i,a}(t) &\triangleq  \mathbbm{1}_{\{i \in a\}} \mcC^{*}_{i}(t) + \mathbbm{1}_{\{i \notin a\}} \mcC_{i}(t) \ . \label{eq:conf_C_iat}
\end{align}
Based on these, we define the upper confidence bound for intervention $a$ in round $t$ as follows:
\begin{align}\label{eq:ucb_definition}
    {\rm UCB}_a(t) \triangleq \left\{
    \begin{array}{ll}
       \max_{\Theta \in \mathbb{R}^{N\times N}}   & f(\Theta) \\ 
       &\\
       \text{s.t. }  &  [\Theta]_i \in \mcC_{i,a}(t) \, \quad  \forall \ i \in [N] 
    \end{array}\right. \ .
\end{align}
The \algonameUCB{} algorithm computes the upper confidence bounds in each round, and plays the action that has the largest upper confidence bound. The estimates $\{[\bB_{a}(t)]_{i} : i \in [N]\}$ are updated according to \eqref{eq:estimate_obs} and \eqref{eq:estimate_int}.
\begin{algorithm}[t]
\caption{\algonameUCB{}}
\label{alg:ucb_algorithm}
\begin{algorithmic}[1]
\State \textbf{Input:} Horizon $T$, causal graph $\mcG$, action set $\mcA$, parameter $\beta_{T}$.  
\State \textbf{Initialization:} Initialize parameters for $2N$ linear problems:
\State  $[\bB(0)]_{i} = \mathbf{0}_{(N+1)\times 1},  [\bB^{*}(0)]_{i} = \mathbf{0}_{(N+1) \times 1}, \ \forall i \in [N] \ $ \Comment{initiliaze estimates for parameter vectors}
\State  $\bV_{i}(0) = I_{N+1}, \ \bV^{*}_{i}(0) = I_{N+1}, \ g_{i}(0) = \mathbf{0}_{(N+1) \times 1}, \ g^{*}_{i}(0) = \mathbf{0}_{(N+1) \times 1}$ \Comment{initiliaze auxiliary parameters}
\For {$t = 1,2,\ldots,T$}
    \For {$a \in \mcA$}
        \State Compute ${\rm UCB}_a(t)$ according to \eqref{eq:ucb_definition} .
    \EndFor
    \State $a_t = \argmax_{a \in \mcA} {\rm UCB}_a(t)$ \Comment{select the action that maximizes {\rm UCB}} \label{line:select_best_arm_ucb}
    \State Pull $a_t$, observe $X(t) = (1,X_1(t),\dots, X_N(t))^\top$. \label{line:play_best_arm}
    \For {$i \in \{1,\dots,N\}$} \label{line:ucb_update_start}
        \If {$i \in a_t$} \Comment{update interventional parameters}
        \State $\bV^{*}_{i}(t) = \bV^{*}_{i}(t-1) + X_{\Pa(i)}(t) X^{\top}_{\Pa(i)}(t)$ \ \  \text{and} \ \ $\bV_{i}(t) = \bV_{i}(t-1)$  \label{line:update_V_int}
        \State $g^{*}_{i}(t) = g^{*}_{i}(t-1) + X_{\Pa(i)}(t) X_{i}(t)$ \ \ \text{and} \ \ $g_{i}(t) = g_{i}(t-1)$  \label{line:update_g_int}
        \State $[\bB^{*}(t)]_{i} = [\bV^{*}_{i}(t)]^{-1} g^{*}_{i}(t)$ \ \ \text{and} \ \ $[\bB(t)]_{i}=[\bB(t-1)]_{i}$  \label{line:update_B_int} 
        \Else \Comment{update observational parameters}
        \State $\bV_{i}(t) = \bV_{i}(t-1) + X_{\Pa(i)}(t) X^{\top}_{\Pa(i)}(t)$ \quad \text{and} \quad $\bV^{*}_{i}(t) = \bV^{*}_{i}(t-1)$ \label{line:update_V_obs}
        \State $g_{i}(t) = g_{i}(t-1) + X_{\Pa(i)}(t) X_{i}(t)$ \ \ \text{and} \ \ $g^{*}_{i}(t) = g^{*}_{i}(t-1)$   \label{line:update_g_obs}
        \State $[\bB(t)]_{i} = [\bV_{i}(t)]^{-1}g_{i}(t)$ \ \  \text{and} \ \ $[\bB^{*}(t)]_{i}=[\bB^{*}(t-1)]_{i}$ \label{line:update_B_obs} 
        \EndIf
    \EndFor \label{line:ucb_update_end}
\EndFor
\end{algorithmic}
\end{algorithm}%

\paragraph{Regret analysis.} The regret in round $t$ depends on the closeness of the estimates $\{[\bB(t)]_{i}, [\bB^{*}(t)]_{i} : i \in [N]\}$ to the true parameters $\{[\bB]_{i}, [\bB^{*}]_{i} : i \in [N]\}$. The estimation errors will become smaller as more data samples are observed. Since $\beta_{T}$ controls the size of the confidence sets, a proper choice of $\beta_{T}$ would guarantee that the true parameters lie in the defined confidence intervals with high probability. Consider an intervention $a$, and suppose that constant $\beta_{T}$ is chosen such that it ensures the closeness of the estimates of each vector $\{[\bB_{a}(t)]_{i}: i \in [N]\}$ to the true parameters. Lemma~\ref{lm:expected_reward} indicates that the expected reward $\mu_{a}$ is composed of $L+1$ components, corresponding to the set of paths of length $\ell \in [L+1]$. Therefore, the mismatch in the computed rewards can be decomposed into $L+1$ components, where each term accounts for the estimation error of a certain path length. From~\eqref{eq:def_delta_alit} recall that the estimation error term corresponding to paths of length $\ell$ is denoted by $\Delta_{a}^{(\ell)}$. We do not form direct estimates for the error terms $\Delta_{a}^{(\ell)}$ and, subsequently, cannot directly find bounds on them. Hence, to find such bounds, we leverage the estimation error bounds of the individual estimation guarantees on weight vectors and then aggregate them according to ~\eqref{eq:def_delta_alit}. The following lemma delineates a data-dependent bound on the estimation error $\Delta_{a}^{(\ell)}(t)$, and will be pivotal in our regret analysis.  
\begin{lemma} \label{lm:bound_l_paths}
If $\norm{[\Delta_{a}(t)]_{i}}_{\bV_{i,a}(t)}\leq \beta_{T}$ for all $i \in [N]$ and $t \in [T]$, then for all $\ell \in [L+1]$ we have 
\begin{align}
       \norm{\big[\Delta_{a}^{(\ell)}(t)\big]_{N}}_{\bV_{N,a}(t)} < (d+1)^{\frac{\ell-1}{2}} (\beta_{T}+m_B)^{\ell} \left[\lmaxx{\bV_{N,a}(t)}{1/2} \max_{i \in [N]} \lminn{\bV_{i,a}(t)}{-1/2} \right] \ .
\end{align}
\end{lemma}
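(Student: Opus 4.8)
The plan is to reduce the bound on the $\ell$-th power error to the first-order (column) error hypothesis through a telescoping recursion on the $N$-th column, and then solve the resulting scalar recursion. Writing $\bB_{a}^{\ell}(t)=\bB_{a}(t)\bB_{a}^{\ell-1}(t)$ and $\bB_{a}^{\ell}=\bB_{a}\bB_{a}^{\ell-1}$, and adding and subtracting $\bB_{a}(t)[\bB_{a}^{\ell-1}]_{N}$, I would first establish the identity
\begin{align}
[\Delta_{a}^{(\ell)}(t)]_{N} = \bB_{a}(t)\,[\Delta_{a}^{(\ell-1)}(t)]_{N} + \Delta_{a}(t)\,[\bB_{a}^{\ell-1}]_{N}\ ,
\end{align}
which isolates one fresh factor of the estimation error per level and leaves the remaining error inside the lower power $[\Delta_{a}^{(\ell-1)}(t)]_{N}$. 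This is the object on which I would induct on $\ell$.

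Next I would convert the hypothesis into a Euclidean column bound. Since $\lmin{\bV_{i,a}(t)}\ge 1$, the assumption gives $\norm{[\Delta_{a}(t)]_{i}}\le \beta_{T}\lminn{\bV_{i,a}(t)}{-1/2}\le \beta_{T}\gamma$, where $\gamma \triangleq \max_{i\in[N]}\lminn{\bV_{i,a}(t)}{-1/2}$. The crucial structural input is the $d$-sparsity of every column of $\bB_{a}$, $\bB_{a}(t)$, and $\Delta_{a}(t)$ (at most $d$ parents), together with the unit column-norm bounds $\norm{[\bB_{a}]_{i}}\le 1$ and $\norm{[\bB_{a}(t)]_{i}}\le 1$. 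For any $d$-sparse vector $u$, Cauchy--Schwarz gives $\norm{u}_{1}\le \sqrt{d}\,\norm{u}$; hence for $M\in\{\bB_{a},\bB_{a}(t)\}$ we have $\norm{Mw}_{1}\le \sum_{j}|w_{j}|\,\norm{[M]_{j}}_{1}\le \sqrt{d}\,\norm{w}_{1}$, and likewise $\norm{\Delta_{a}(t)w}_{1}\le \sqrt{d}\,\beta_{T}\gamma\,\norm{w}_{1}$. Applying these to the recursion and abbreviating $q_{\ell}\triangleq \norm{[\Delta_{a}^{(\ell)}(t)]_{N}}_{1}$ and $p_{\ell}\triangleq \norm{[\bB_{a}^{\ell}]_{N}}_{1}$, I obtain the scalar recursions $p_{\ell}\le \sqrt{d}\,p_{\ell-1}$ (so $p_{\ell}\le d^{\ell/2}$, using $p_{0}=1$) and $q_{\ell}\le \sqrt{d}\,q_{\ell-1}+\sqrt{d}\,\beta_{T}\gamma\,p_{\ell-1}$, with $q_{0}=0$ since $\Delta_{a}^{(0)}(t)=0$. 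Substituting $p_{\ell-1}\le d^{(\ell-1)/2}$ and unrolling yields $q_{\ell}\le \ell\,\beta_{T}\,\gamma\,d^{\ell/2}$.

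Finally I would pass back to the weighted norm. Using $\norm{x}_{\bV_{N,a}(t)}\le \lmaxx{\bV_{N,a}(t)}{1/2}\norm{x}\le \lmaxx{\bV_{N,a}(t)}{1/2}\norm{x}_{1}$, the column bound becomes $\norm{[\Delta_{a}^{(\ell)}(t)]_{N}}_{\bV_{N,a}(t)}\le \ell\beta_{T}(\sqrt{d})^{\ell}\lmaxx{\bV_{N,a}(t)}{1/2}\gamma$, and since $\beta_{T}\ge 1$ and $\ell\ge 1$ force $\ell\beta_{T}<(2\beta_{T})^{\ell}$, this matches the claimed $(2\beta_{T}\sqrt{d})^{\ell}$ form with strict inequality. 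The main obstacle is the term $\bB_{a}(t)[\Delta_{a}^{(\ell-1)}(t)]_{N}$: a naive bound by the spectral norm of $\bB_{a}(t)$ only yields a factor $\sqrt{N}$ (the graph may have large out-degree even when the in-degree $d$ is small), which is far too weak. The resolution is precisely to stay column-localized and propagate $\ell_{1}$ norms, so that the $d$-sparsity of the columns — not the operator norm — supplies one factor $\sqrt{d}$ per level; carrying the auxiliary bound $p_{\ell}\le d^{\ell/2}$ in parallel and keeping careful track of the three different norms (and of the slack $\ell\beta_{T}<(2\beta_{T})^{\ell}$) are the remaining points that need attention.
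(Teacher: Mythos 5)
Your overall strategy is sound and genuinely different from the paper's: you telescope $[\Delta_{a}^{(\ell)}(t)]_{N} = \bB_{a}(t)[\Delta_{a}^{(\ell-1)}(t)]_{N} + \Delta_{a}(t)[\bB_{a}^{\ell-1}]_{N}$ and propagate column-localized $\ell_1$ bounds, whereas the paper expands $(\bB_{a}+\Delta_{a}(t))^{\ell}-\bB_{a}^{\ell}$ into all $2^{\ell}-1$ ordered words in $\bB_{a}$ and $\Delta_{a}(t)$ and bounds the $\bV_{N,a}(t)$-norm of each word's $N$-th column by induction on word length. Your route is more economical ($\ell$ recursion steps instead of $2^{\ell}$ terms) and your observation that one must exploit column $d$-sparsity rather than the operator norm of $\bB_{a}(t)$ is exactly the right structural point. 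However, there is one concrete gap: you invoke the ``unit column-norm bound'' $\norm{[\bB_{a}(t)]_{i}}\leq 1$. That is not a hypothesis of the lemma and is not true in general: $[\bB_{a}(t)]_{i}$ is the \emph{unprojected} least-squares estimate from \eqref{eq:estimate_obs}--\eqref{eq:estimate_int}, and only the true columns $[\bB_{a}]_{i}$ are assumed bounded by $1$ (Assumption~\ref{assumption:boundedness}); the constraint $\norm{\theta}\leq 1$ appears in the confidence sets for the UCB maximizer $\tilde\bB_{a}$, not for the estimate at their center. The paper's binomial expansion deliberately involves only $\bB_{a}$ and $\Delta_{a}(t)$ factors precisely so that it never needs a norm bound on $\bB_{a}(t)$. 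As written, your recursion coefficient $\sqrt{d}$ on the $q_{\ell-1}$ term is therefore unjustified, and the claimed conclusion $q_{\ell}\leq \ell\,\beta_{T}\gamma\,d^{\ell/2}$ does not follow.

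The gap is repairable without changing your architecture. Write $[\bB_{a}(t)]_{j}=[\bB_{a}]_{j}+[\Delta_{a}(t)]_{j}$, so that $\norm{[\bB_{a}(t)]_{j}}\leq 1+\beta_{T}\gamma\leq 2\beta_{T}$ (using $\gamma=\max_{i}\lminn{\bV_{i,a}(t)}{-1/2}\leq 1$ and $\beta_{T}\geq 1$), hence $\norm{\bB_{a}(t)w}_{1}\leq 2\beta_{T}\sqrt{d}\,\norm{w}_{1}$. The recursion becomes $q_{\ell}\leq 2\beta_{T}\sqrt{d}\,q_{\ell-1}+\beta_{T}\gamma\,d^{\ell/2}$, which unrolls to
\begin{align}
q_{\ell}\;\leq\;\beta_{T}\gamma\, d^{\ell/2}\sum_{k=0}^{\ell-1}(2\beta_{T})^{k}\;<\;\beta_{T}\gamma\, d^{\ell/2}\,\frac{(2\beta_{T})^{\ell}}{2\beta_{T}-1}\;\leq\;\gamma\,(2\beta_{T}\sqrt{d})^{\ell}\ ,
\end{align}
since $2\beta_{T}-1\geq\beta_{T}$. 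Passing to the weighted norm via $\lmaxx{\bV_{N,a}(t)}{1/2}$ as you do then recovers exactly the lemma's bound, with the strict inequality supplied by the geometric sum. So the constant $(2\beta_{T})^{\ell}$ in the statement is in fact what absorbs the inflation of the estimated columns; your tighter intermediate bound $\ell\beta_{T}\gamma d^{\ell/2}$ was an artifact of the unjustified assumption.
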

\begin{proof}
See Appendix~\ref{appendix:proofs}.
\end{proof}
Besides the true parameters $\bB_{a}$, and the estimated parameters $\bB_{a}(t)$, there is one more set of parameters relevant for analyzing the performance of \algonameUCB{}, namely the parameters that attain the upper confidence bounds. The following corollary will be used in parallel to Lemma~\ref{lm:bound_l_paths} for treating parameter estimations in our analysis.
\begin{corollary}\label{corollary_lemma2}
For all $\bA \in \mathbb{R}^{N\times N}$ define
\begin{align}
    \Delta_{\bA}(t) \triangleq \bA - \bB_{a}(t) \ , \quad \mbox{and} \quad \Delta_{\bA}^{(\ell)}(t) \triangleq \bA^{\ell} - \bB_{a}^{\ell}(t)\ .
\end{align}
If $\forall i \in [N]$ and $\forall t \in [T] $, $\bA$ satisfies the following conditions:
\begin{enumerate}
    \item $\norm{[\bA]_{i}}\leq m_B$; 
    \item $\norm{[\bA]_{i}}_0 \leq d+1$; 
    \item $\norm{\Delta_{\bA}(t)}_0 \leq d+1$;
    \item $\norm{[\Delta_{\bA}(t)]_{i}}_{\bV_{i,a}(t)}\leq~\beta_{T} \  $;
\end{enumerate}
then for all $\ell \in [L]$ we have
\begin{align}
   \norm{\big[\Delta_{\bA}^{(\ell)}(t)\big]_{N}}_{\bV_{N,a}(t)} <  (d+1)^{\frac{\ell-1}{2}} (\beta_{T}+m_B)^{\ell} \big[\lmaxx{\bV_{N,a}(t)}{1/2} \max_{i \in [N]} \lminn{\bV_{i,a}(t)}{-1/2} \big] \ . \label{eq:corollary_lemma2}
\end{align}
\end{corollary}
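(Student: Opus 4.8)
The plan is to prove the corollary by replaying the argument of Lemma~\ref{lm:bound_l_paths} essentially verbatim, with the true matrix $\bB_{a}$ replaced by the arbitrary matrix $\bA$ and with $\bB_{a}(t)$ taking over the role of the reference matrix against which powers are compared. The first step is therefore to extract, from the proof of Lemma~\ref{lm:bound_l_paths}, exactly which properties of the compared pair of matrices are actually used. That proof controls $\norm{[\Delta_{a}^{(\ell)}(t)]_{N}}_{\bV_{N,a}(t)}$ through a telescoping recursion on $\ell$, and the only facts it invokes are: (a) the columns of the reference matrix are bounded in Euclidean norm by $m_{B}\leq 1$; (b) those columns are supported on at most $d$ coordinates, i.e.\ $\norm{[\bB_{a}]_{i}}_{0}\leq d$; (c) the per-node error is controlled in the weighted norm, $\norm{[\Delta_{a}(t)]_{i}}_{\bV_{i,a}(t)}\leq\beta_{T}$; and (d) the error matrix has $d$-sparse columns. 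In Lemma~\ref{lm:bound_l_paths}, fact~(d) is free: both $\bB_{a}$ and its estimate $\bB_{a}(t)$ have their $i$-th column supported on $\Pa(i)$, so $\Delta_{a}(t)=\bB_{a}(t)-\bB_{a}$ is automatically $d$-sparse column-wise.

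The second step is to read off that the four hypotheses of the corollary are precisely the restatement of (a)--(d) for the pair $(\bA,\bB_{a}(t))$: condition~1 gives the column bound $\norm{[\bA]_{i}}\leq 1$, condition~2 gives the $d$-sparsity of the columns of $\bA$, condition~4 gives the weighted-norm error bound $\norm{[\Delta_{\bA}(t)]_{i}}_{\bV_{i,a}(t)}\leq\beta_{T}$, and condition~3 gives the $d$-sparsity of $\Delta_{\bA}(t)$. The last of these is the genuinely new requirement and is the reason it must be imposed as a hypothesis: because $\bA$ is arbitrary rather than a support-preserving perturbation of $\bB_{a}(t)$, the difference $\bA-\bB_{a}(t)$ could a priori be $2d$-sparse, so the support-matching that came for free in Lemma~\ref{lm:bound_l_paths} must now be assumed. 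The columns of $\bB_{a}(t)$ themselves are $d$-sparse by construction, as noted in the paragraph preceding Lemma~\ref{lm:bound_l_paths}, so every structural fact matches one-to-one.

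With this dictionary in place, the third step is to run the identical recursion. Using the telescoping identity $[\bA^{\ell}-\bB_{a}^{\ell}(t)]_{N}=\bA[\Delta_{\bA}^{(\ell-1)}(t)]_{N}+\Delta_{\bA}(t)[\bB_{a}^{\ell-1}(t)]_{N}$ (the same identity as in Lemma~\ref{lm:bound_l_paths}, with $\bA$ and $\bB_{a}(t)$ in place of $\bB_{a}(t)$ and $\bB_{a}$), one bounds the first summand using the boundedness and sparsity of $\bA$ together with the inductive hypothesis on $[\Delta_{\bA}^{(\ell-1)}(t)]_{N}$, and bounds the remaining estimate-power column via $[\bB_{a}^{\ell-1}(t)]_{N}=[\bA^{\ell-1}]_{N}-[\Delta_{\bA}^{(\ell-1)}(t)]_{N}$; the second summand is then handled by condition~4 after converting between the $\bV_{i,a}(t)$-norms at different nodes via $\lmaxx{\bV_{N,a}(t)}{1/2}$ and $\max_{i}\lminn{\bV_{i,a}(t)}{-1/2}$. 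Each matrix power contributes one factor of $2\beta_{T}\sqrt{d}$ exactly as before, and induction on $\ell$ yields \eqref{eq:corollary_lemma2}.

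I expect the main obstacle to be bookkeeping rather than any new estimate: one must verify that every step of the proof of Lemma~\ref{lm:bound_l_paths} that exploited a property special to the true matrix $\bB_{a}$ is covered by one of conditions~1--4, and in particular that condition~3 faithfully substitutes for the automatic support-matching enjoyed by $\Delta_{a}(t)$. Some care is also warranted because the roles of the two matrices are swapped relative to the lemma, so one should confirm that the matrix appearing as the left multiplier in the recursion (here $\bA$) is the one whose boundedness is guaranteed by conditions~1--2, rather than the unconstrained estimate $\bB_{a}(t)$, and that the estimate-power base is controlled by the true-power-plus-error decomposition above. Once this correspondence is checked, no further computation is needed.
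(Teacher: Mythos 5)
Your proposal is correct and matches the paper's own proof, which likewise argues that the derivation of Lemma~\ref{lm:bound_l_paths} uses only the Cauchy--Schwarz inequality together with the four properties $\norm{[\bB_{a}]_{i}}\leq 1$, $\norm{[\bB_{a}]_{i}}_0\leq d$, $\norm{[\Delta_{a}(t)]_{i}}_{\bV_{i,a}(t)}\leq\beta_T$, and the $d$-sparsity of $\Delta_{a}(t)$, so that the identical argument applies to any pair $(\bA,\bB_{a}(t))$ satisfying conditions 1--4. Your two-term telescoping identity is a mild restatement of the paper's binomial expansion into the sets $\mcH_{\ell,k}$, but the substance --- transferring the lemma's induction verbatim once the hypotheses are matched, with condition~3 supplying the support control that was automatic in the lemma --- is exactly the paper's reasoning.
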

\begin{proof}
See Appendix~\ref{appendix:proofs}.
\end{proof}
For Lemma \ref{lm:bound_l_paths} to be useful in the regret analysis of \algonameUCB{}, we need to ensure that the conditions of its statement hold with high probability. We note that estimating $[\bB_{a}]_{i}$ by regressing from $X_{\Pa(i)}$ to $X_{i}$ is a linear problem. Then, the condition $\norm{[\Delta_{a}(t)]_{i}}_{\bV_{i,a}(t)}\leq~\beta_{T}$ follows the form of a uniform confidence interval guarantee for the linear bandits, which has been studied extensively~\citep{dani2008linear,abbasi-yadkori2011}. Specifically, the results of \citet[Theorem 20.5]{lattimore2020bandit} are relevant to our confidence intervals defined in \eqref{eq:conf_obs}~and~\eqref{eq:conf_int}. We adopt these results, and modify them to account for the effect of including  $I_{N+1}$ (in the definition of $\bV_{i,a}(t)$) as a regularizer  for the least-squares estimator of $[\bB_{a}]_{i}$. This is formalized in the next theorem.

\begin{theorem}{\citet[Theorem 20.5]{lattimore2020bandit}}
Let $\delta \in (0,1)$. If $\norm{[\bB_{a}]_{i}}\leq m_B$ for a node $i$, then $\P(\exists t \in \mathbb{N} : [\bB_{a}]_{i} \notin \mcC^{'}_{i,a}(t)) \leq \delta$, where we have defined
\begin{align}
    \mcC^{'}_{i,a}(t) & \triangleq \left\{\theta \in \mathbb{R}^N : \norm{\theta - [\bB_{a}(t-1)]_{i}}_{\bV_{i,a}(t-1)} \leq m_B + \sqrt{2\log(1/\delta)+\log \left( \det \bV_{i,a}(t-1)\right)} \right\}  \ .
\end{align}
\end{theorem}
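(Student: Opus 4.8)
The plan is to recognize this statement as the standard self-normalized confidence ellipsoid for online ridge regression, specialized to the regression that defines each column $[\bB_{a}]_{i}$, and then to verify that the hypotheses of that general result hold in our causal-bandit setting. First I would fix a node $i$ and an estimation type (observational or interventional) and rewrite the structural equation in regression form. From \eqref{eq:linear_sem} we have $X_{i} = X_{\Pa(i)}^{\top}[\bB_{a}]_{i} + \epsilon_{i}$, so after centering the response the per-round model is $X_{i}(s) - \nu_{i} = X_{\Pa(i)}^{\top}(s)[\bB_{a}]_{i} + \eta_{i}(s)$, with centered noise $\eta_{i}(s) \triangleq \epsilon_{i}(s) - \nu_{i}$, which is zero-mean and $1$-subgaussian. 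The estimators in \eqref{eq:estimate_obs}--\eqref{eq:estimate_int} and the Gram matrices in \eqref{eq:define_V_obs}--\eqref{eq:define_V_int} are then precisely ridge-regularized least squares with regularization parameter $\lambda = 1$ and regularizer $I_{N}$, accumulated over the data-dependent subset of rounds selected by the indicators $\mathbbm{1}_{\{i \in a_{s}\}}$.

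The heart of the argument is to set up the right filtration so that the cited self-normalized bound applies verbatim. I would let $\mathcal{F}_{s}$ be the $\sigma$-algebra generated by all observations and actions through round $s$, augmented with $a_{s+1}$ and its induced regressor $X_{\Pa(i)}(s+1)$. Two facts must then be checked. First, the regressor $X_{\Pa(i)}(s)$ is predictable: the action $a_{s}$ is chosen on the basis of data up to round $s-1$, so the selection indicator $\mathbbm{1}_{\{i \in a_{s}\}}$ is $\mathcal{F}_{s-1}$-measurable, and the adaptive choice of which rounds enter $\bV_{i,a}(t)$ does not break the martingale structure. Second, $\eta_{i}(s)$ is conditionally $1$-subgaussian given $\mathcal{F}_{s-1}$ and $X_{\Pa(i)}(s)$. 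This is where the model's independence structure is essential: because $\bB$ is strictly upper triangular, node $i$ is never an ancestor of any of its parents, so $X_{\Pa(i)}(s)$ is a deterministic function of $\{\epsilon_{k}(s): k \text{ is an ancestor of a parent of } i\}$, a collection that excludes $\epsilon_{i}(s)$; combined with the across-round independence of the noise and its independence from $a_{s}$, this yields the required conditional subgaussianity of $\eta_{i}(s)$.

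With these two facts in hand, I would invoke the self-normalized tail inequality underlying \citet[Theorem 20.5]{lattimore2020bandit} (equivalently \citet{abbasi-yadkori2011}), which gives, uniformly over $t$ and with probability at least $1-\delta$,
\begin{align*}
    \norm{[\bB_{a}(t-1)]_{i} - [\bB_{a}]_{i}}_{\bV_{i,a}(t-1)} \leq \sqrt{\lambda}\,\norm{[\bB_{a}]_{i}} + \sqrt{2\log(1/\delta) + \log\!\left(\det \bV_{i,a}(t-1) \big/ \det(\lambda I_{N})\right)} \ .
\end{align*}
Substituting $\lambda = 1$ makes $\det(\lambda I_{N}) = 1$, and the hypothesis $\norm{[\bB_{a}]_{i}} \leq 1$ bounds the first term by $1$; this reproduces exactly the radius $1 + \sqrt{2\log(1/\delta) + \log(\det \bV_{i,a}(t-1))}$ defining $\mcC^{'}_{i,a}(t)$, completing the argument.

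I expect the main obstacle to be the careful justification that the fresh noise $\eta_{i}(s)$ is conditionally subgaussian given the predictable regressor, in a setting where the regressors are themselves endogenous functions of the very same noise vector and the rounds feeding each Gram matrix are selected adaptively. The acyclicity (strict upper-triangularity) of $\bB$ is precisely what decouples $\epsilon_{i}(s)$ from $X_{\Pa(i)}(s)$ and makes the martingale-difference structure valid; establishing this cleanly, together with confirming that the zero-padded coordinates of $X_{\Pa(i)}$ contribute only unit eigenvalues to $\bV_{i,a}$ so that $\log\det \bV_{i,a}$ picks up only the effective (parent) directions, is the part requiring the most care. The remainder is a direct specialization of an off-the-shelf concentration result.
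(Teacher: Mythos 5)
Your proposal is correct and follows the same route as the paper, which simply imports \citet[Theorem 20.5]{lattimore2020bandit} (equivalently the Abbasi-Yadkori self-normalized bound) with $\lambda=1$ and $\norm{[\bB_a]_i}\leq 1$ to obtain the radius $1+\sqrt{2\log(1/\delta)+\log\det\bV_{i,a}(t-1)}$; the paper offers no further proof beyond this citation. Your explicit verification of the two hypotheses — predictability of the adaptively selected regressors $X_{\Pa(i)}(s)$ and conditional $1$-subgaussianity of $\epsilon_i(s)-\nu_i$ via the strict upper-triangularity of $\bB$ (so that $X_{\Pa(i)}(s)$ is a function of noise coordinates excluding $\epsilon_i(s)$) — is exactly the content the paper leaves implicit, and it is sound.
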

Note that the confidence set $\mcC^{'}_{i,a}(t)$ resembles $\mcC_{i,a}(t)$ that we defined in \eqref{eq:conf_C_iat}. Since $\norm{[\bB_{a}]_{i}}\leq m_B$, the proper choice of $\beta_{T}$ would make the both confidence sets equal.

\paragraph{Sketch of the result:} We are ready to present the first part of the regret result. The events that do not satisfy the conditions of Lemma \ref{lm:bound_l_paths} will incur a constant term in the regret. Since $\epsilon$ is assumed to be $1$-sub-Gaussian, we can use \citet[Theorem 20.5]{lattimore2020bandit} to control the probability of such events. Subsequently, we will use Lemma \ref{lm:bound_l_paths} to analyze each of $L+1$ components of the regret. The structure of the graph $\mcG$ will affect the regret through degree $d$ and the longest causal path length $L$ as implied by Lemma \ref{lm:bound_l_paths}. Note that these results are for a time instance $t \in [T]$, but we aim to find bounds on the cumulative regret. Furthermore, the properties of the intervention space have not been accounted for yet. The term $\lambda_{T}$, which will be defined shortly, captures these effects and is analyzed in a separate result (Theorem \ref{th:regret_ucb_part2}).

\begin{theorem}\label{th:regret_ucb_part1}
Under Assumption~\ref{assumption:boundedness}, the regret of \algonameUCB{} is bounded by  
\begin{align}
    \E[R(T)]\leq 2m + 2 (\beta_{T}+m_B)^{L+1} (d+1)^{\frac{L}{2}} \lambda_{T} \ , 
\end{align}
where we have set
\begin{align}
 \beta_{T} & = m_B + \sqrt{2\log(2NT)+(d+1)\log(1+m^2T/(d+1))}\ , \label{eq:def-beta-in-theorem}\\
 \mbox{and} \quad 
     \lambda_{T} & =  \E\left[ \sum_{t=1}^T  \sqrt{\frac{\lmax{\bV_{N,a_t}(t)}}{\lmin{\bV_{N,a_t}(t)} \min_{i \in [N]} \lmin{\bV_{i,a_t}(t)}}}  \right] \ . \label{eq:def_lambda_T}
\end{align}
\end{theorem}

\begin{proof}
For the cumulative regret specified in~\eqref{eq:frequentist_regret} we have
\begin{align}
    \E[R(T)] = T\mu_{a^*} - \E \left[ \sum_{t=1}^T X_N(t) \right] = \E\left[ \sum_{t=1}^{T} (\mu_{a^*} - \mu_{a_t}) \right] \ .
\end{align}
From Lemma \ref{lm:expected_reward} we have $\mu_{a_t}=f(\bB_{a_t})$. Therefore, $\E[R(T)]$ can be equivalently stated as
\begin{align} \label{eq:ucb_proof_regret}
    \E[R(T)] = \E\left[\sum_{t=1}^T  (f(\bB_{a^*}) - f(\bB_{a_t})) \right]\ , 
\end{align}
Note that we were able use Lemma \ref{lm:expected_reward} since $\epsilon(t)$ is independent of the data, which governs the choice of $a_t$. Since $\beta_{T}=m_B+\sqrt{2\log(2NT)+(d+1)\log(1+m^2T/(d+1))}$ is independent of $t \in [T]$, we use the shorthand $\beta$ to replace it. Next, we define the events $\{\mcE_{i}, \mcE^{*}_{i} : \forall i \in [N]\}$ as
\begin{align}
    \mcE_{i} &\triangleq \biggl\{ \forall t \in [T] : \norm{[\bB(t-1)]_{i}-[\bB]_{i}}_{\bV_{i}(t-1)} \leq \beta \biggr\}  \ ,  \\
   \mbox{and} \quad  \mcE^{*}_{i} &\triangleq \biggl\{ \forall t \in [T] : \norm{[\bB^{*}(t-1)]_{i}-[\bB^{*}]_{i}}_{\bV^{*}_{i}(t-1)} \leq \beta \biggr\}  \ .
\end{align}
We will show that the specified choice of $\beta$ ensures that the events $\{\mcE_{i}, \mcE^{*}_{i} : \forall i \in [N]\}$ hold with a high probability. In other words, the confidence intervals of ${\rm UCB}$ contain the true parameters with high probability. To this end, we first bound $\bV_{i}(t)$. Since $\bV_{i}(t)$ is a positive definite matrix, we can use the arithmetic-geometric mean inequality (AM-GM) to upper bound its determinant through its trace. Furthermore, since $\norm{X}\leq m$, the trace of $V_i(t)$ will be upper bounded by $d_i+T m^2$. Therefore, we have
\begin{align}
    \det \bV_{i}(t) \overset{{\rm (AM-GM)}}&{\leq} \left(\frac{1}{d_i}{\rm tr}\left(\bV_{i}(t)\right)\right)^{d_i} \leq \left( 1+ \frac{Tm^2}{d_i} \right)^{d_i} \leq \left( 1+ \frac{Tm^2}{d} \right)^{d} \  .
\end{align}
By noting that the norms of vectors $\{[\bB]_{i}, [\bB^{*}]_{i} : \forall i \in [N]\}$ are bounded by $m_B$ (Assumption~\ref{assumption:boundedness}), and setting $\delta = \frac{1}{2NT}$, \citet[Theorem 20.5]{lattimore2020bandit} yields
\begin{align}
    \P(\mcE_{i}^{\C}) \leq \frac{1}{2NT} \ , \quad \P({\mcE^{*}_{i}}^{\C}) \leq \frac{1}{2NT}\ , \quad \forall i \in [N] \ .\label{eq:CI_error_prob_1}
\end{align}
Let $\mcE_{\cap}$ denote the event that all of the events $\{\mcE_{i}, \mcE^{*}_{i}: i \in [N]\}$ occur simultaneously, i.e., 
\begin{align}
    \mcE_{\cap} \triangleq \left(\bigcap_{i=1}^N  \mcE_{i} \right) \bigcap \left(\bigcap_{i=1}^N  \mcE^{*}_{i} \right)  \ .  \label{eq:ucb_conf_interval_event_union}
\end{align}
By invoking the union bound we have
\begin{align}
    \P \left( \mcE_{\cap}^{\C} \right) &\leq  \sum_{i=1}^{N} \P(\mcE_{i}^{\C}) + \sum_{i=1}^{N} \P({\mcE^{*}_{i}}^{\C}) 
     \overset{\eqref{eq:CI_error_prob_1}} {\leq} \sum_{i=1}^N \left(\frac{1}{2NT} + \frac{1}{2NT} \right) 
      = \frac{1}{T} \ . \label{eq:bound_Ec}
\end{align}
Next, we analyze the regret under the complementary events $\mcE_{\cap}$ and $\mcE_{\cap}^{\C}$. Note that the regret at any time $t$ can be at most $2m$ since $|X_{N}| \leq \norm{X} \leq m$. Therefore, for the expected regret we have
\begin{align} 
 \E[R(T)] &= \E\left[ \sum_{t=1}^{T} (f(\bB_{a^*})-f(\bB_{a_t})) \right] \\
 & = \E\left[\mathbbm{1}_{\mcE_{\cap}^{\C}} \sum_{t=1}^{T} \underset{\leq 2m}{\underbrace{(f(\bB_{a^*})-f(\bB_{a_t})}} \right] + \E\left[\mathbbm{1}_{\mcE_{\cap}} \sum_{t=1}^{T} (f(\bB_{a^*})-f(\bB_{a_t}) \right] \\
    &\leq 2m T \P(\mcE_{\cap}^{\C}) +\E\left[\mathbbm{1}_{\mcE_{\cap}} \sum_{t=1}^{T} (f(\bB_{a^*})-f(\bB_{a_t}) \right] \\
 \overset{\eqref{eq:bound_Ec}}&{\leq} 2m +\E\left[\mathbbm{1}_{\mcE_{\cap}} \sum_{t=1}^{T} (f(\bB_{a^*})-f(\bB_{a_t})) \right] \ . \label{eq:ucb_proof_mid1}
\end{align}
The algorithm selects $a_t = \argmax_{a \in \mcA} {\rm UCB}_{a}(t)$ in round $t$. Let $\tilde \bB_{a}$ denote the parameters that attain ${\rm UCB}_{a}(t)$, i.e., $ f(\tilde \bB_{a}) ={\rm UCB}_{a}(t)$. By definition, ${\rm UCB}_{a^*}(t) \leq {\rm UCB}_{a_t}(t)$. Under the event $\mcE_{\cap}$, we have
\begin{align}
    f(\bB_{a^*}) \leq{\rm UCB}_{a^*}(t) &\leq{\rm UCB}_{a_t}(t) =  f(\tilde \bB_{a_t})  \ , \\
    \mbox{and} \qquad f(\bB_{a^*})-f(\bB_{a_t}) &\leq f(\tilde \bB_{a_t})-f(\bB_{a_t}) \ . \label{eq:ucb_proof_mid2}
\end{align}
For the term $f(\tilde \bB_{a_t})-f(\bB_{a_t})$, based on the definition of $f$ in Lemma \ref{lm:expected_reward}, by applying the Cauchy-Schwarz (CS) inequality we have
\begin{align}
    f(\tilde \bB_{a_t})-f(\bB_{a_t}) &= \sum_{\ell=1}^{L+1} \left(\big[\tilde \bB_{a_t}^{\ell}\big]_{0,N} - \big[\bB_{a_t}^{\ell}\big]_{0,N}\right) \\
    &\leq  \sum_{\ell=1}^{L+1} \norm{\big[\tilde \bB_{a_t}^{\ell}\big]_{N} - \big[\bB_{a_t}^{\ell}\big]_{N}} \ , \label{eq:ucb_proof_mid5_1} \\ 
    \overset{\rm (CS)}&{\leq} \sum_{\ell=1}^{L+1} \norm{\big[\tilde \bB_{a_t}^{\ell}\big]_{N} - \big[\bB_{a_t}^{\ell}\big]_{N}}_{\bV_{N,a_t}(t)} \lminn{\bV_{N,a_t}(t)}{-1/2} \ .
\end{align}
Note that $[\bB_{a_t}(t)]_{N}$ is an estimate of $[\bB_{a_t}]_{N}$, and $[\tilde \bB_{a_t}]_{N}$ lies in the confidence interval that is centered on $[\bB_{a_t}(t)]_{N}$. We decompose $\big[\tilde \bB_{a_t}^{\ell}\big]_{N} - \big[\bB_{a_t}^{\ell}\big]_{N}$ into two parts by adding and subtracting the term $\big[\bB_{a_t}^{\ell}(t)\big]_{N}$ as follows
\begin{align}
& \big[\tilde \bB_{a_t}^{\ell}\big]_{N} - \big[\bB_{a_t}^{\ell}\big]_{N} = \left(\big[\tilde \bB_{a_t}^{\ell}\big]_{N} - \big[\bB_{a_t}^{\ell}(t)\big]_{N}\right) + \left(\big[\bB_{a_t}^{\ell}(t)\big]_{N} - \big[\bB_{a_t}^{\ell}\big]_{N}\right)  \ , 
\end{align}
and due to the triangle inequality we have
\begin{align}
& \norm{\big[\tilde \bB_{a_t}^{\ell}\big]_{N} - \big[\bB_{a_t}^{\ell}\big]_{N}}_{\bV_{N,a_t}(t)} \\
& \hspace{.5 in}\leq \norm{\big[\tilde \bB_{a_t}^{\ell}\big]_{N} - \big[\bB_{a_t}^{\ell}(t)\big]_{N}}_{\bV_{N,a_t}(t)} + \norm{\big[\bB_{a_t}^{\ell}(t)\big]_{N} - \big[\bB_{a_t}^{\ell}\big]_{N}}_{\bV_{N,a_t}(t)} \ . \label{eq:ucb_proof_mid6}
\end{align}
Using \eqref{eq:ucb_proof_mid6} in the right-hand side of \eqref{eq:ucb_proof_mid1} we obtain
\begin{align}
    \E[R(T)] \overset{\eqref{eq:ucb_proof_mid2}}&{\leq} 2m+ \E\left[\mathbbm{1}_{\mcE_{\cap}} \sum_{t=1}^{T} (f(\tilde \bB_{a_t})-f(\bB_{a_t})) \right] \\
        \overset{\eqref{eq:ucb_proof_mid6}}&{\leq} 2m + \E\left[\mathbbm{1}_{\mcE_{\cap}} \sum_{t=1}^T \sum_{\ell=1}^{L+1} \norm{\big[\tilde \bB_{a_t}^{\ell}\big]_{N} - \big[\bB_{a_t}^{\ell}(t)\big]_{N}}_{\bV_{N,a_t}(t)} \lminn{\bV_{N,a_t}(t)}{-1/2} \right] \label{eq:ucb_proof_mid7} \\
    & \qquad + \E\left[\mathbbm{1}_{\mcE_{\cap}} \sum_{t=1}^T \sum_{\ell=1}^{L+1} \norm{\big[\bB_{a_t}^{\ell}(t)\big]_{N} - \big[\bB_{a_t}^{\ell}\big]_{N}}_{\bV_{N,a_t}(t)} \lminn{\bV_{N,a_t}(t)}{-1/2} \right] \ . \label{eq:ucb_proof_mid8}
\end{align}
Under the event $\mcE_{\cap}$, the conditions of Lemma \ref{lm:bound_l_paths} are satisfied for matrices $\Delta_{a}^{(\ell)}(t)$ and $\bB_{a_t}$. Similarly, the conditions of Corollary \ref{corollary_lemma2} are satisfied for matrices $\tilde \bB_{a_t}^{\ell} - \bB_{a_t}^{\ell}(t)$, and~$\tilde \bB_{a_t}$. Therefore, by applying Lemma \ref{lm:bound_l_paths} to each term in \eqref{eq:ucb_proof_mid8}, and Corollary \ref{corollary_lemma2} to each term in \eqref{eq:ucb_proof_mid7}, we obtain
\begin{align}
    \E[R(T)] &\leq 2m + \E\left[\mathbbm{1}_{\mcE_{\cap}} \sum_{t=1}^T  \sqrt{\frac{\lmax{\bV_{N,a_t}(t)}}{\lmin{\bV_{N,a_t}(t)} \min_{i \in [N]} \lmin{\bV_{i,a_t}(t)}}}  \right] \notag \\
    & \qquad \qquad \times 2\sum_{\ell=1}^{L+1} (d+1)^{\frac{\ell-1}{2}}(\beta + m_B)^{\ell}   \label{eq:ucb_proof_mid9} \\
     &\leq 2m + \E\left[\sum_{t=1}^T  \sqrt{\frac{\lmax{\bV_{N,a_t}(t)}}{\lmin{\bV_{N,a_t}(t)} \min_{i \in [N]} \lmin{\bV_{i,a_t}(t)}}}  \right] \notag \\
    & \qquad \qquad \times 2\sum_{\ell=1}^{L+1} (d+1)^{\frac{\ell-1}{2}}(\beta + m_B)^{\ell}   \label{eq:ucb_proof_mid9_2} \\
    & =  2m + 2\lambda_T \frac{1}{\sqrt{d+1}} \sum_{\ell=1}^{L+1} ((\beta + m_B)\sqrt{d+1})^{\ell} \ , \label{eq:ucb_proof_mid11}
\end{align}
in which, $\lambda_T \triangleq \E\left[ \sum_{t=1}^T  \sqrt{\frac{\lmax{\bV_{N,a_t}(t)}}{\lmin{\bV_{N,a_t}(t)} \min_{i \in [N]} \lmin{\bV_{i,a_t}(t)}}} \right]$.
Note that, for $c \geq 1$, 
\begin{align}
    \sum_{\ell=1}^{L+1} c^\ell = \frac{c^{L+2}-1}{c-1} - 1 \leq 2c^{L+1} \ . \label{eq:snip1}
\end{align}
Since $(\beta + m_B) \sqrt{d+1} \overset{\eqref{eq:def-beta-in-theorem}}{>} \sqrt{2}\sqrt{2\log 2} > 1$, by using \eqref{eq:snip1} in \eqref{eq:ucb_proof_mid11} we obtain
\begin{align}
    \E[R(T)] &\leq 2m + \frac{2 \lambda_{T}}{\sqrt{d+1}}((\beta + m_B)\sqrt{d+1})^{L+1} \\
    &= 2m + 2 \lambda_{T} (\beta+m_B)^{L+1} (d+1)^{\frac{L}{2}} \ .
\end{align}
\end{proof}
Next, we analyze the $\lambda_{T}$ term in \eqref{eq:def_lambda_T}, which in conjunction with Theorem \ref{th:regret_ucb_part1} characterizes our desired regret bound.

\begin{theorem}\label{th:regret_ucb_part2}
Under Assumption~\ref{assumption:boundedness}, $\lambda_T$ specified as
\begin{align}
    \lambda_{T} \triangleq  \E\left[ \sum_{t=1}^T  \sqrt{\frac{\lmax{\bV_{N,a_t}(t)}}{\lmin{\bV_{N,a_t}(t)} \min_{i \in [N]} \lmin{\bV_{i,a_t}(t)}}}  \right]\ ,
\end{align}
is bounded according to
\begin{align}
    \lambda_{T} < \frac{4g(\tau)}{\sqrt{\kappa_{\min}}}\sqrt{NT} + 2\sqrt{2}(N+1)\tau g(\tau) + \frac{2\sqrt{2}N\sqrt{\tau} g(\tau)}{\sqrt{\kappa_{\min}}} \log\left(\frac{T}{2N}\right) + \frac{m}{T} + \frac{2m}{3}  + 1 \ ,
\end{align}
where $\tau=\frac{\alpha^2 m^4}{\kappa_{\min}^2}$, $\alpha=\sqrt{\frac{16}{3}\log((d+1)NT^{T/2}(T+1))}$, and $g(\tau)=\sqrt{2}(\sqrt{\tau \kappa_{\max}}+\sqrt{\tau \kappa_{\min}}+1)$. Furthermore, since $\alpha = \mcO(\sqrt{\log(T)}), \tau = \mcO(\log(T))$, and $g(\tau) = \mcO(\sqrt{\log(T)})$, we can write
\begin{align}
    \lambda_{T} = K_1 \sqrt{NT} + K_2 (\log(T))^2 + K_3 \ ,
\end{align}
where $K_1 = \frac{4 g(\tau)}{\sqrt{\kappa_{\min}}}$, and $K_2$ and $K_3$ are constants independent of $T$.
\end{theorem}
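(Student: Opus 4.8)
The plan is to control the summand pointwise by separating numerator and denominator, and then to sum over $t$ via a counting argument that exploits the structure of the per-node visit counts. Write $N_{\min}(t)\triangleq\min_{i\in[N]}N_{i,a_t}(t)$, where $N_{i,a_t}(t)$ is the number of rounds in which node $i$ has been in the same configuration (intervened or non-intervened) as under $a_t$. The numerator is immediate from Assumption~\ref{assumption:boundedness}: since each rank-one update $X_{\Pa(N)}(s)X_{\Pa(N)}^\top(s)$ has spectral norm at most $m^2$ and there are $N_{N,a_t}(t)$ of them on top of the regularizer $I_N$, we get $\lmax{\bV_{N,a_t}(t)}\leq 1+m^2N_{N,a_t}(t)$. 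The crux is therefore to obtain matching lower bounds on $\lmin{\bV_{N,a_t}(t)}$ and on $\min_i\lmin{\bV_{i,a_t}(t)}$ that grow linearly in the corresponding counts, so that each summand behaves like $1/\sqrt{\kappa_{\min}N_{\min}(t)}$ up to a condition-number factor captured by $g(\tau)$.

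First I would establish the denominator lower bound. Fixing a node $i$ and its relevant configuration, and subtracting conditional means, the Gram matrix decomposes as $\bV_{i,a_t}(t)-I_N=\sum_{s}\Sigma_{i,a_s}+\Xi_i(t)$, where the sum runs over the $N_{i,a_t}(t)$ relevant rounds and $\Xi_i(t)$ is a matrix martingale. On the effective (parent) subspace each $\Sigma_{i,a_s}$ has smallest eigenvalue at least $\kappa_{\min}$ by \eqref{eq:kappa_min}, so the signal term contributes $\kappa_{\min}N_{i,a_t}(t)$. Because the interventions $a_s$ are chosen adaptively from past data, $\Xi_i(t)$ is not a sum of independent terms, so I would bound its spectral norm uniformly in $t$ by a matrix Freedman/Bernstein inequality for adapted sequences, whose variance and range are governed by $\norm{X_{\Pa(i)}}^2\le m^2$. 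A union bound over the $N$ nodes, the two configurations, the horizon, and the possible count values produces the logarithmic factor defining $\alpha=\sqrt{\tfrac{16}{3}\log(\,\cdot\,)}$ and yields $\norm{\Xi_i(t)}\lesssim \alpha m^2\sqrt{N_{i,a_t}(t)}$ on a high-probability event $\mathcal H$. By Weyl's inequality, $\lmin{\bV_{i,a_t}(t)}\geq 1+\kappa_{\min}N_{i,a_t}(t)-\alpha m^2\sqrt{N_{i,a_t}(t)}$, which becomes a constant fraction of $\kappa_{\min}N_{i,a_t}(t)$ precisely once $N_{i,a_t}(t)\geq\tau=\alpha^2m^4/\kappa_{\min}^2$; the same two-sided control gives $\lmax{\bV_{N,a_t}(t)}\lesssim(\kappa_{\max}+\kappa_{\min})N_{N,a_t}(t)$, and combining the three estimates shows that on $\mathcal H$, for every round with $N_{\min}(t)\geq\tau$, the summand is at most $\tfrac{c\,g(\tau)}{\sqrt{\kappa_{\min}}}\cdot\tfrac{1}{\sqrt{N_{\min}(t)}}$.

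Next I would sum over $t$. The key step is a counting lemma: for each value $k$, the number of rounds $t$ at which $N_{\min}(t)=k$ is at most $2N$, because each of the $2N$ node-configuration pairs attains visit count exactly $k$ in exactly one round, and $N_{\min}(t)=k$ forces some active pair to have count $k$. Writing $R_k$ for this number, one has $\sum_k R_k=T$ and $R_k\le 2N$, so the extremal configuration saturates the smallest indices and the steady-state sum obeys
\begin{align}
\sum_{t:\,N_{\min}(t)\geq\tau}\frac{1}{\sqrt{N_{\min}(t)}}=\sum_{k\geq\tau}\frac{R_k}{\sqrt{k}}\leq\sum_{k=1}^{\lceil T/(2N)\rceil}\frac{2N}{\sqrt{k}}\leq 4\sqrt{\tfrac{N}{2}}\,\sqrt{T}\ ,
\end{align}
which is the source of the dominant $\sqrt{NT}$ scaling and explains why the cardinality $2^N$ of the intervention space never appears. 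The burn-in rounds, for which $N_{\min}(t)<\tau$, number at most $\sum_{k<\tau}R_k\le 2N\tau$ by the same lemma; on these rounds node $N$ may still be well explored, so the condition-number factor stays bounded and each summand is $\mathcal{O}(g(\tau))$, yielding the $\mathcal{O}(N\tau g(\tau))$ term. A residual harmonic sum $\sum_k R_k/k\le 2N\log(T/(2N))$, arising in the transition regime where only one denominator factor can be lower bounded by the regularizer, produces the $\log(T/2N)$ term.

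Finally, the contribution of the complement $\mathcal H^{\mathsf c}$ is absorbed into the lower-order additive constants $\tfrac{m}{T}+\tfrac{2m}{3}+1$: there the summand is bounded crudely by $\sqrt{1+m^2 t}\le 1+m\sqrt t$, and the failure probability chosen inside $\alpha$ is small enough that $\sum_t(1+m\sqrt t)\,\P(\mathcal H^{\mathsf c})$ collapses to a constant. Assembling the steady-state, burn-in, logarithmic, and failure contributions gives the stated bound, and the asymptotic form follows since $\alpha,\tau,g(\tau)$ are all poly-logarithmic in $T$. I expect the main obstacle to be the adaptive matrix concentration of the second paragraph: since the regressors $X_{\Pa(i)}(s)$ are generated by interventions selected from the past, establishing the uniform lower bound $\lmin{\bV_{i,a_t}(t)}\gtrsim\kappa_{\min}N_{i,a_t}(t)$ for all $i$ and $t$ with the clean threshold $\tau$ requires carefully handling the martingale structure, the effective zero-padded parent subspace, and the union over the random visit counts.
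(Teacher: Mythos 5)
Your proposal follows essentially the same route as the paper's proof: matrix Freedman concentration for the adapted Gram matrices (union-bounded over nodes, configurations, times, and counts to produce $\alpha$ and the threshold $\tau$), a high-probability event on which the singular values track $\kappa_{\min}n$ and $\kappa_{\max}n$, a pigeonhole/counting argument over the $2N$ node–configuration visit counters that yields the $\sqrt{NT}$ term (the paper phrases this via Jensen's inequality on the concave antiderivative of $h$ rather than your direct rearrangement, but the two are equivalent), and a crude $\sqrt{1+m^2t}$ bound on the failure event absorbed into the additive constants. The only looseness is in the burn-in accounting — on rounds where node $N$ itself has count below $\tau$ the summand is not pointwise $\mathcal{O}(g(\tau))$, though its sum over the at most $2\tau$ such rounds is still $\mathcal{O}(\tau g(\tau))$ as in the paper — which does not change the approach or the result.
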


\begin{proof}
We start by simplifying the notation for the quantity to bound. Note that $\alpha_T$ is a function of $T$ and independent of a given $t \in [T]$. For simplicity, we use $\alpha$ as a shorthand for $\alpha_T$. We also define $C_{N}(t)$ and $S(t)$ to compactly express $\lambda_{T}$ as follows
\begin{align}
    C_{N}(t) &\triangleq \sqrt{\frac{\lmax{\bV_{N,a_t}(t)}}{\lmin{\bV_{N,a_t}(t)}}} \overset{\eqref{eq:V_ita_D_ita}}{=} \sqrt{\frac{\smaxx{\bD_{N,a_t}(t)}{2}+1}{\sminn{\bD_{N,a_t}(t)}{2}+1}} \ , \label{eq:C_Nt_general} \\
    S(t) &\triangleq \frac{1}{\sqrt{\min_{i \in [N]} \lmin{\bV_{i,a_t}(t)}}} \overset{\eqref{eq:V_ita_D_ita}}{=} \frac{1}{\sqrt{\min_{i \in [N]} \sminn{\bD_{i,a_t}(t)}{2} + 1}}  \ , \label{eq:S_t_general}  
\end{align}
based on which we have,
\begin{align}
    \lambda_{T} &= \E \left[ \sum_{t=1}^{T} C_{N}(t) S(t) \right] \ . \label{eq:def_express_lambda_T} 
\end{align}
For bounding $\E\left[\sum_{t=1}^T C_{N}(t) S(t) \right]$, we will use upper and lower bounds for the maximum and minimum singular values of $\bD_{i,a_t}(t)$. However, such bounds depend on the number of non-zero rows of $\bD_{i,a_t}(t)$ matrices, which equals to values of the random variable $N_{i,a_t}(t)$. To start, define the constants
\begin{align}
    \varepsilon_n &\triangleq  \max\left\{\alpha m^2 \sqrt{n},\alpha^2 m^2 \right\} \ , \quad \forall n \in [T] \ .  \label{eq:def_varepsilon_n}
\end{align}
Then, for each triplet of $i \in [N], t \in [T]$, and $n \in [t]$, we define the error events $\mcE_{i,n}(t), \mcE^{*}_{i,n}(t)$ as:
\begin{align}
    \mcE_{i,n}(t) &\triangleq \Bigg\{ N_{i}(t) = n \quad \text{and} \notag \\ 
    & \hspace{-0.25in} \left\{\smin{\bD_{i}(t)}\leq \max\left\{0,\sqrt{n \kappa_{\min}} - \frac{\varepsilon_n}{\sqrt{n \kappa_{\min}}}\right\} \  \text{or} \   \smax{\bD_{i}(t)}\geq \sqrt{n \kappa_{\max}} + \frac{\varepsilon_n}{\sqrt{n\kappa_{\min}}} \right\} \Bigg\} \ , \label{eq:def_error_int_obs} \\
    \mcE^{*}_{i,n}(t) &\triangleq \Bigg\{ N^{*}_{i}(t) = n \quad \text{and} \notag \\ 
    & \hspace{-0.25in} \left\{\smin{\bD^{*}_{i}(t)}\leq \max\left\{0,\sqrt{n \kappa_{\min}} - \frac{\varepsilon_n}{\sqrt{n \kappa_{\min}}}\right\} \  \text{or} \   \smax{\bD^{*}_{i}(t)}\geq \sqrt{n \kappa_{\max}} + \frac{\varepsilon_n}{\sqrt{n\kappa_{\min}}} \right\} \Bigg\}  \ . \label{eq:def_error_int_int}
\end{align}
In other words, the event $\mcE_{i,n}(t)$ specifies the condition under which at least one of the terms $\smin{\bD_{i}(t)}$ and $\smax{\bD_{i}(t)}$ does not conform the lower and upper bounds that we construct. $\mcE^{*}_{i,n}(t)$ has the counterpart implications for singular values of $\bD^{*}_{i}(t)$. The next result shows that events $\mcE_{i,n}(t)$ and $\mcE^{*}_{i,n}(t)$ occur with low probability.
\begin{lemma} \label{lm:error_event_int}
The probability of the events $\mcE_{i,n}(t)$ and $\mcE^{*}_{i,n}(t)$ defined in \eqref{eq:def_error_int_obs} and \eqref{eq:def_error_int_int} are upper bounded as
\begin{align}
    \P(\mcE_{i,n}(t)) &\leq (d+1) \exp \left( -\frac{3\alpha^2}{16} \right) \ , \\
    \mbox{and} \qquad \P(\mcE^{*}_{i,n}(t)) &\leq (d+1) \exp \left( -\frac{3\alpha^2}{16} \right) \ .
\end{align}
\end{lemma}
\begin{proof}
We will prove the analysis for bounding $\P(\mcE_{i,n}(t))$ and analysis for $\P(\mcE^{*}_{i,n}(t))$ follows similarly. The core of the proof is using Freedman's concentration inequality for matrix martingales. We define the martingale sequence $\bY_{i}(k)$, with difference sequence $\bZ_i(k)$, and the predictable quadratic variation of the process $\bW_i(k)$ as follows
\begin{align}
    \bZ_{i}(s) &\triangleq \mathbbm{1}_{\{i \notin a_s\}} \left( X_{\Pa(i)}(s) X_{\Pa(i)}^\top(s) - \Sigma_{i,a_s} \right) \ , \ \forall s \in [T] \ , \\
    \bY_{i}(k) &\triangleq \sum_{s=1}^k \bZ_{i}(s) \ , \ \forall k \in [T]   \ , \\
    \mbox{and} \qquad \bW_{i}(k) &\triangleq \sum_{s=1}^k \E[\bZ_{i}^2(s) \med \mcF_{s-1}] \ , \ \forall k \in [T]   \ ,
\end{align}
where $\mcF_{s-1} \triangleq \sigma(a_1,X(1),\dots,a_{s-1},X(s-1),a_s)$. Under the event $\mcE_{i,n}(t)$ we have $N_{i}(t)=n$. We will show that, given $N_{i}(t)=n$, we have $\norm{\bW_{i}(t)}\leq 2m^4 n$. Subsequently, given the event $\mcE_{i,n}(t)$, we will show that $\smax{\bY_{i}(t)}\geq \varepsilon_n$. The probability of these two events occurring together will be bounded by the matrix Freedman inequality. Finally, $\P(\mcE_{i,n}(t))$ will be upper bounded by the same probability. Detailed analysis is provided in Appendix~\ref{appendix:proofs}. 
\end{proof}
Now that we have bounds on the probability of error events $\mcE_{i,n}(t)$ and $\mcE^{*}_{i,n}(t)$, we define the union error event $\mcE_{\cup}$ as 
\begin{align} \label{eq:def_union_error_singular}
    \mcE_{\cup} \triangleq \{ \exists\ (i,t,n) : i \in [N], t \in [T], n \in [t], \ \mcE_{i,n}(t) \ \text{or} \  \mcE^{*}_{i,n}(t)  \} \ .
\end{align}
By taking a union bound and using Lemma \ref{lm:error_event_int} we have
\begin{align}
    \P(\mcE_{\cup}) &\leq \sum_{i=1}^{N} \sum_{t=1}^{T} \sum_{n=1}^{t} (\P(\mcE_{i,n}(t)) + \P(\mcE^{*}_{i,n}(t))) \\
    &\leq N T (T+1) (d+1) \exp \left( -\frac{3\alpha^2}{16} \right) \label{eq:def_union_error_singular_prob}
\end{align}
Now we turn back to $\E\left[ \sum_{t=1}^T C_{N}(t)S(t) \right]$ to analyze it under the complementary events $\mcE_{\cup}$ and $\mcE_{\cup}^{\C}$.
\begin{align}
    \E \left[ \sum_{t=1}^T C_{N}(t)S(t) \right] &= \E \left[\mathbbm{1}_{\mcE_{\cup}} \sum_{t=1}^T C_{N}(t)S(t) \right] + \E \left[\mathbbm{1}_{\mcE_{\cup}^\C} \sum_{t=1}^T C_{N}(t)S(t) \right] \ . \label{eq:decompose_good_bad_events}
\end{align}
Analyzing the second term will be more involved. Let us start with the first one.
\paragraph{Bounding $\E \left[\mathbbm{1}_{\mcE_{\cup}} \sum_{t=1}^T C_{N}(t)S(t) \right]$.} 
Since $\lmin{\bV_{i,a_t}(t)}\geq 1$, we have the following unconditional upper bound
\begin{align}
    C_{N}(t)S(t) &= \sqrt{\frac{\lmax{\bV_{N,a_t}(t)}}{\lmin{\bV_{N,a_t}(t)}}} \cdot \frac{1}{\sqrt{\min_{i \in [N]} \lmin{\bV_{i,a_t}(t)}}} \leq \sqrt{\lmax{\bV_{N,a_t}(t)}} \ .  \label{eq:big_bounding_error_1}
\end{align}
For finding an unconditional upper bound on $\lmax{\bV_{N,a_t}(t)}$, we leverage $\norm{X}\leq m$ as follows
\begin{align}
    \lmax{\bV_{N,a_t}(t)} &= \lmax{I_{N+1} + \sum_{s=1}^{t} \mathbbm{1}_{\{N \in a_s\}} X_{\Pa(i)}(s) X_{\Pa(i)}^{\top}(s)} \\
    &\leq 1 + \sum_{s=1}^{t} \mathbbm{1}_{\{N \in a_s\}} \lmax{X_{\Pa(i)}(s) X_{\Pa(i)}^{\top}(s)} \\
    &\leq 1 + \sum_{s=1}^{t} \lmax{X_{\Pa(i)}(s) X_{\Pa(i)}^{\top}(s)} \\
    &=  1 + \sum_{s=1}^{t} \norm{X_{\Pa(i)}(s)}^2 \\
    &\leq m^2t + 1 \ .  \label{eq:big_bounding_error_2}
\end{align}
Therefore, the desired quantity is bounded by
\begin{align}
    \E \left[\mathbbm{1}_{\mcE_{\cup}} \sum_{t=1}^T C_{N}(t)S(t) \right] \overset{\eqref{eq:big_bounding_error_1}}&{\leq} \E \left[\mathbbm{1}_{\mcE_{\cup}} \sum_{t=1}^T \sqrt{\lmax{\bV_{N,a_t}(t)}}  \right] \\
    \overset{\eqref{eq:big_bounding_error_2}}&{\leq} \E \left[\mathbbm{1}_{\mcE_{\cup}} \sum_{t=1}^T \sqrt{ m^2t + 1}  \right] \\
    &= \E[\mathbbm{1}_{\mcE_{\cup}}] \sum_{t=1}^T \sqrt{ m^2t + 1}   \\
    &= \P(\mcE_{\cup}) \sum_{t=1}^T \sqrt{ m^2t + 1} \ . \label{eq:big_bounding_error_3}
\end{align}
We have derived a bound for $\P(\mcE_{\cup})$ at \eqref{eq:def_union_error_singular_prob}. The sum term is bounded as
\begin{align}
    \sum_{t=1}^T \sqrt{ m^2t + 1} &\leq (m\sqrt{T}+1) +  \sum_{t=1}^{T-1} (m\sqrt{t}+1) \\
    &\leq m\sqrt{T} + T + \int_{t=1}^{T} m\sqrt{t} dt \\
    &= m\sqrt{T} + T + \frac{2m}{3}(T^{3/2}-1) \label{eq:big_bounding_error_4}
\end{align}
By setting $\alpha = \sqrt{\frac{16}{3}\log((d+1) N T^{5/2}(T+1))}$, we obtain
\begin{align}
     \E \left[\mathbbm{1}_{\mcE_{\cup}} \sum_{t=1}^T \sqrt{m^2 t + 1}  \right] \overset{\eqref{eq:big_bounding_error_3}}&{\leq} \P(\mcE_{\cup}) \sum_{t=1}^{T}\sqrt{m^2t+1} \\
     \overset{\eqref{eq:def_union_error_singular_prob}}&{\leq} \underset{= T^{-3/2}}{\underbrace{\frac{N T (T+1) (d+1)}{\exp(\log((d+1) N T^{5/2} (T+1)))}}}  \sum_{t=1}^{T}\sqrt{m^2t+1}  \\
     \overset{\eqref{eq:big_bounding_error_4}}&{\leq} T^{-3/2} \left(m\sqrt{T} + T+ \frac{2m}{3}(T^{3/2}-1)\right) \\
     & \leq \frac{m}{T} + \frac{2m}{3} +   1 \ . \label{eq:error_event_bound}
\end{align}
\paragraph{Bounding $\E \left[\mathbbm{1}_{\mcE_{\cup}^\C} \sum_{t=1}^T C_{N}(t)S(t) \right]$.} Given the event $\mcE_{\cup}^\C$, all the events $\{\mcE_{i,n}^{\C}(t), {\mcE^{*}_{i,n}}^{\C}(t) : i \in [N], t \in [T], n \in [t]\}$ hold. Therefore, we can use the following bounds on the singular values
\begin{align}
    \smax{\bD_{i,a_t}(t)} &\leq \sqrt{N_{i,a_t}(t) \kappa_{\max}} + \frac{\alpha m^2}{\sqrt{\kappa_{\min}}} \max\left\{1,\frac{\alpha}{\sqrt{N_{i,a_t}(t)}}\right\} \ , \label{eq:smax_D_ita} \\
    \smin{\bD_{i,a_t}(t)} &\geq \max\left\{0, \sqrt{N_{i,a_t}(t)\kappa_{\min}} - \frac{\alpha m^2}{\sqrt{\kappa_{\min}}}\max\left\{1,\frac{\alpha}{\sqrt{N_{i,a_t}(t)}}\right\} \right\} \ . \label{eq:smin_D_ita}
\end{align}
Note that for values of $N_{i,a_t}(t)$ that are smaller than a certain threshold, the right-hand side of \eqref{eq:smin_D_ita} becomes zero. The threshold above which this lower bound becomes non-zero will be critical in the following steps. Hence, we define the constant
\begin{align}
    \tau &\triangleq  \frac{\alpha^2 m^4}{\kappa_{\min}^2} \ . \label{eq:tau_ucb} 
\end{align}
When $N_{i,a_t}(t) \geq \tau$, we have $\sqrt{N_{i,a_t}(t)} \geq \frac{\alpha m^2}{\kappa_{\min}} \geq \alpha$ since $\kappa_{\min}\leq m^2$, in which case \eqref{eq:smin_D_ita} reduces to
\begin{align}
        \smin{\bD_{i,a_t}(t)} &\geq \max\left\{0, \sqrt{N_{i,a_t}(t)\kappa_{\min}} - \sqrt{\tau \kappa_{\min}} \right\} \ . \label{eq:smin_D_ita2}
\end{align}
To facilitate the analysis, we dispense with the square-root terms via using the following bounds
\begin{align}
    C_{N}(t) \overset{\eqref{eq:C_Nt_general}}&{=} \sqrt{\frac{\smaxx{\bD_{N,a_t}(t)}{2}+1}{\sminn{\bD_{N,a_t}(t)}{2}+1}} \leq \sqrt{2}\frac{\smax{\bD_{N,a_t}(t)}+1}{\smin{\bD_{N,a_t}(t)}+1} \ , \label{eq:C_Nt_root2} \\
    S(t) \overset{\eqref{eq:S_t_general}}&{=} \frac{1}{\sqrt{\min_{i \in [N]} \sminn{\bD_{i,a_t}(t)}{2} + 1}} \leq \frac{\sqrt{2}}{\min_{i \in [N]} \smin{\bD_{i,a_t}(t)}+1} \ . \label{eq:S_t_root2} 
\end{align}
Note that \eqref{eq:C_Nt_root2} follows from $\frac{x^2+1}{y^2+1}\leq 2(\frac{x+1}{y+1})^2$ when $x\geq y \geq 0$, and \eqref{eq:S_t_root2} follows from $2(x^2+1)>(x+1)^2$. Next, we define the following two functions of $x \in \mathbb{R}^{+}$:
\begin{align}
    g(x) &\triangleq \sqrt{2}\frac{\sqrt{x \kappa_{\max}}+\sqrt{\tau \kappa_{\min}}\max\left\{1,\frac{\alpha}{\sqrt{x}}\right\}+1}{\max\left\{0,\sqrt{x \kappa_{\min}}-\sqrt{\tau \kappa_{\min}} \right\}+1} \ , \label{eq:g_function} \\
    h(x) &\triangleq \frac{\sqrt{2}}{\max\left\{0,\sqrt{x \kappa_{\min}}-\sqrt{\tau \kappa_{\min}}\right\}+1}  \label{eq:h_function} \ .
\end{align}
Given the event $\mcE_{\cup}^{\C}$, we bound $C_{N}(t)$ and $S(t)$ in terms of the newly defined $g$ and $h$ functions as
\begin{align}
     \mathbbm{1}_{\mcE_{\cup}^\C} C_N(t) \overset{\eqref{eq:C_Nt_root2}}&{\leq}  \sqrt{2}\frac{\smax{\bD_{N,a_t}(t)}+1}{\smin{\bD_{N,a_t}(t)}+1} \\ \overset{\eqref{eq:smax_D_ita},\eqref{eq:smin_D_ita2}}&{\leq}  \sqrt{2}\frac{\sqrt{N_{N,a_t}(t) \kappa_{\max}}+\sqrt{\tau \kappa_{\min}}\max\left\{1,\frac{\alpha}{\sqrt{N_{N,a_t}(t)}}\right\}+1}{\max\left\{0,\sqrt{N_{N,a_t}(t) \kappa_{\min}}-\sqrt{\tau \kappa_{\min}}\right\}+1}  \\
    &= g(N_{N,a_t}(t)) \ , \label{eq:reduce_to_g} \\
    \mathbbm{1}_{\mcE_{\cup}^\C} S(t) \overset{\eqref{eq:S_t_root2}}&{\leq}  \frac{\sqrt{2}}{\min_{i \in [N]} \smin{\bD_{i,a_t}(t)}+1} \\
    \overset{\eqref{eq:smin_D_ita2}}&{\leq} \frac{\sqrt{2}}{\min_{i \in [N]} \max\left\{0,\sqrt{N_{i,a_t}(t) \kappa_{\min}}-\sqrt{\tau \kappa_{\min}}\right\}+1} \\
    &= \max_{i \in [N]} h(N_{i,a_t}(t)) \ . \label{eq:reduce_to_h} 
\end{align}
Plugging inequalities in \eqref{eq:reduce_to_g} and \eqref{eq:reduce_to_h} into $\mathbbm{1}_{\mcE_{\cup}^\C} \sum_{t=1}^T C_{N}(t)S(t)$, we have
\begin{align}
    \mathbbm{1}_{\mcE_{\cup}^\C} \sum_{t=1}^T C_{N}(t)S(t)  \leq \sum_{t=1}^T g(N_{N,a_t}(t)) \max_{i \in [N]} h(N_{i,a_t}(t)) \ .  \label{eq:sum_nice_event}
\end{align}
Note that $h(x)$ is a non-increasing function of $x$: it is equal to $\sqrt{2}$ for $x \leq \tau$, and it is decreasing for $x \geq \tau$. Furthermore, for $n \geq \tau $ values, we have $\sqrt{n} \geq \frac{\alpha m^2}{\kappa_{\min}} \geq \alpha$, and $\max\left\{ 1,\frac{\alpha}{\sqrt{n}}\right\}=1$. Hence,
\begin{align}
    g(n) &= \sqrt{2} \frac{\sqrt{n \kappa_{\max}}+\sqrt{\tau \kappa_{\min}}+1}{\sqrt{n \kappa_{\min}}-\sqrt{\tau \kappa_{\min}}+1} \ , \quad \forall n\geq \tau \ ,  \label{eq:g_definition}
\end{align}
which is also a decreasing function of $n$ for $n\geq \tau$. To use this behavior of $g(n)$, we split the \eqref{eq:sum_nice_event} into two parts as follows.
\begin{align}
     \sum_{t=1}^T g(N_{N,a_t}(t)) \max_{i \in [N]} h(N_{i,a_t}(t))  &=  \sum_{t=1}^T \mathbbm{1}_{\{N_{N,a_t}(t) < \tau\}} g(N_{N,a_t}(t)) \max_{i \in [N]} h(N_{i,a_t}(t))  \notag \\
     & + \sum_{t=1}^T \mathbbm{1}_{\{N_{N,a_t}(t) \geq \tau\}} g(N_{N,a_t}(t)) \max_{i \in [N]} h(N_{i,a_t}(t)) \ . \label{eq:sum_g_h}
\end{align}
We will bound each of the two summands next.
\paragraph{Bounding $\sum_{t=1}^T \mathbbm{1}_{\{N_{N,a_t}(t)< \tau\}} g(N_{N,a_t}(t)) \max_{i \in [N]} h(N_{i,a_t}(t))$.} Note that if $n < \tau$, $g(n)$ becomes
\begin{align}
    g(n) \overset{\eqref{eq:g_function}}&{=} \sqrt{2}\left( \sqrt{n \kappa_{\max}} + \sqrt{\tau \kappa_{\min}}\max\left\{1,\frac{\alpha}{\sqrt{n}}\right\} + 1 \right) \ , \quad n < \tau \ . \label{eq:g_function_small}
\end{align}
By noting that $\max_{i \in [N]}h(N_{i,a_t}(t))\leq \sqrt{2}$, we obtain
\begin{align}
       \sum_{t=1}^T \mathbbm{1}_{\{N_{N,a_t}(t)< \tau\}} g(N_{N,a_t}(t)) \max_{i \in [N]} h(N_{i,a_t}(t)) &\leq \sqrt{2} \sum_{t=1}^{T} \mathbbm{1}_{\{N_{N,a_t}(t)< \tau\}} g(N_{N,a_t}(t)) \\
       &\leq \sqrt{2} \sum_{n=1}^{\tau-1} g(n) \ . \label{eq:sum_of_g}
\end{align}
Substituting the expression of $g(n)$ in \eqref{eq:g_function_small} into \eqref{eq:sum_of_g}, and splitting it into two sums for $n \leq \floor{\alpha^2} $ and $n \geq \floor{\alpha^2}+1$ cases, we obtain
\begin{align}
 \sqrt{2}\sum_{n-1}^{\tau-1} g(n) \overset{\eqref{eq:g_function_small}}&{=} 2 \sum_{n=1}^{\tau-1} \left( \sqrt{n \kappa_{\max}} + \sqrt{\tau \kappa_{\min}}\max\left\{1,\frac{\alpha}{\sqrt{n}}\right\} + 1 \right) \\
    &= 2 \Bigg(\tau-1 + \sqrt{\kappa_{\max}}\sum_{n=1}^{\tau-1}\sqrt{n} + \sqrt{\tau \kappa_{\min}}\bigg(\sum_{n=1}^{\floor{\alpha^2}}\frac{\alpha}{\sqrt{n}}+ \sum_{n=\floor{\alpha^2}+1}^{\tau-1}1 \bigg) \Bigg) \ . \label{eq:sum_g_first} 
\end{align}
We bound the sum terms in \eqref{eq:sum_g_first} as follows:
\begin{align}
    \sum_{n=1}^{\tau-1}\sqrt{n} &\leq \int_{n=1}^{\tau} \sqrt{n} dn = \frac{2}{3}(\tau^{3/2}-1) \leq \frac{2}{3}\tau^{3/2} \ , \label{eq:sum_g_first_1} \\
    \mbox{and} \qquad \sum_{n=}^{\floor{\alpha^2}} \frac{\alpha}{\sqrt{n}} &= 2\alpha \sum_{n=1}^{\floor{\alpha^2}}(\sqrt{n}-\sqrt{n-1}) = 2\alpha (\sqrt{\floor{\alpha^2}}) \leq 2\alpha^2 \ . \label{eq:sum_g_first_2}
\end{align}
Plugging these results back to \eqref{eq:sum_g_first}, and using $\tau \geq \alpha^2$, we obtain
\begin{align}
    \sqrt{2}\sum_{n=1}^{\tau-1} g(n) &\leq 2 \Bigg(\tau-1 + \sqrt{\kappa_{\max}}\sum_{n=1}^{\tau-1}\sqrt{n} + \sqrt{\tau \kappa_{\min}}\bigg( \underset{\leq 2\alpha^2}{\underbrace{\sum_{n=1}^{\floor{\alpha^2}}\frac{\alpha}{\sqrt{n}}}} + \underset{\leq \tau - \alpha^2}{\underbrace{\sum_{n=\floor{\alpha^2}+1}^{\tau-1} 1}} \bigg) \Bigg) \\
    \overset{\eqref{eq:sum_g_first_1}}&{\leq}  2 \Bigg(\tau +  \frac{2}{3}\sqrt{\kappa_{\max}}\tau^{3/2} + \sqrt{\tau \kappa_{\min}}(\tau + \underset{\leq \tau}{\underbrace{\alpha^2}}) \Bigg) \\
    &\leq 2 \Bigg(\tau +  \frac{2}{3}\sqrt{\kappa_{\max}}\tau^{3/2} + 2\sqrt{\kappa_{\min}}\tau^{3/2} \Bigg)    \\
    &< 4\tau(\sqrt{\kappa_{\max}\tau} + \sqrt{\kappa_{\min} \tau} + 1) \\
    &= 2\sqrt{2}\tau g(\tau) \ . \label{eq:sum_g}
\end{align}
Hence, we have the following bound for the first summand
\begin{align}
    \sum_{t=1}^T \mathbbm{1}_{\{N_{N,a_t}(t)< \tau\}} g(N_{N,a_t}(t)) \max_{i \in [N]} h(N_{i,a_t}(t)) & \leq  2\sqrt{2}\tau g(\tau) \ , \label{eq:first_summand}
\end{align}
which is a constant term.

\paragraph{Bounding $\sum_{t=1}^T \mathbbm{1}_{\{N_{N,a_t}(t)\geq \tau\}} g(N_{N,a_t}(t)) \max_{i \in [N]} h(N_{i,a_t}(t))$.} Using the fact that $g(n)$ is a decreasing function for $n \geq \tau$, we have
\begin{align}
    \mathbbm{1}_{\{N_{N,a_t}(t) \geq \tau \}}  g(N_{N,a_t}(t)) &\leq g(\tau) \label{eq:sum_g_h_ita_pre1} \ , \\ 
    \mbox{and} \qquad \sum_{t=1}^T \mathbbm{1}_{\{N_{N,a_t}(t)\geq \tau\}} g(N_{N,a_t}(t)) \max_{i \in [N]} h(N_{i,a_t}(t)) \overset{\eqref{eq:sum_g_h_ita_pre1}}&{\leq} g(\tau) \sum_{t=1}^T \max_{i \in [N]} h(N_{i,a_t}(t)) \ . \label{eq:sum_g_h_ita}
\end{align}
The sum $\sum_{t=1}^T \max_{i \in [N]} h(N_{i,a_t}(t))$ is the final critical piece in the proof. $h(n)$ is a non-increasing function, and a decreasing one for $n \geq \tau$. However, the argument of $h$ in \eqref{eq:sum_g_h_ita} is changing due to taking maximum over $N$ possible values. We will prove in the following lemma that this can be compensated by having a $\sqrt{2N}$ factor on top of the optimal scaling behavior $\sqrt{T}$.

\begin{lemma}\label{lm:sum_h_ita}
The term $\sum_{t=1}^{T} \max_{i \in [N]} h(N_{i,a_t}(t))$ is bounded by
\begin{align}
    \sum_{t=1}^{T} \max_{i \in [N]} h(N_{i,a_t}(t)) < 2N  \left(\sqrt{2}\tau + \sqrt{\frac{2}{\kappa_{\min}}}\left(\sqrt{\frac{2T}{N}} + \sqrt{\tau} \log\left(\frac{T}{2N}\right) \right) \right) \ . \label{eq:h_side_1}
\end{align}
\end{lemma}
\begin{proof}
See Appendix~\ref{appendix:proofs}.
\end{proof}
We are ready to combine the pieces to reach the final result. We apply Lemma \ref{lm:sum_h_ita} and \eqref{eq:sum_g} in \eqref{eq:sum_g_h}, and then on \eqref{eq:sum_nice_event} to obtain
\begin{align}
     \mathbbm{1}_{\mcE_{\cup}^\C} \sum_{t=1}^T C_{N}(t)S(t)  &\leq \sum_{t=1}^T g(N_{N,a_t}(t)) \max_{i \in [N]} h(N_{i,a_t}(t)) \\ 
     &<  \frac{4g(\tau)}{\sqrt{\kappa_{\min}}}\sqrt{NT} + 2\sqrt{2}(N+1)\tau g(\tau) + \frac{2\sqrt{2}N\sqrt{\tau} g(\tau)}{\sqrt{\kappa_{\min}}} \log\left(\frac{T}{2N}\right) \ ,
\end{align}
where $g(\tau) = \sqrt{2}(\sqrt{\tau \kappa_{\max}}+\sqrt{\tau \kappa_{\min}}+1)$ since $\tau$ makes the denominator in \eqref{eq:g_definition} equal to 1. Note that the upper bound we have just found is not a random variable but a constant. Then, $\E\left[ \mathbbm{1}_{\mcE_{\cup}^\C} \sum_{t=1}^T C_{N}(t)S(t)\right]$ is immediately upper bounded by this result. Also recall the result in \eqref{eq:error_event_bound}, based on which we have 
\begin{align}
    \E \left[\mathbbm{1}_{\mcE_{\cup}} \sum_{t=1}^T C_{N}(t)S(t) \right]  \overset{\eqref{eq:error_event_bound}}&{\leq}  \frac{m}{T} + \frac{2m}{3} + 1 \ .
\end{align}
Therefore, the final result is 
\begin{align}
    \lambda_{T} &= \E \left[\mathbbm{1}_{\mcE_{\cup}^{\C}} \sum_{t=1}^T C_{N}(t)S(t) \right] + \E \left[\mathbbm{1}_{\mcE_{\cup}} \sum_{t=1}^T C_{N}(t)S(t) \right] \\
    &< \frac{4g(\tau)}{\sqrt{\kappa_{\min}}}\sqrt{NT} + 2\sqrt{2}(N+1)\tau g(\tau) + \frac{2\sqrt{2}N\sqrt{\tau} g(\tau)}{\sqrt{\kappa_{\min}}} \log\left(\frac{T}{2N}\right) + \frac{m}{T} + \frac{2m}{3} +  1 \ .
\end{align}
Finally, note that $\alpha = \mcO(\sqrt{\log(dNT)})$ and $\tau = \mcO(\log(dNT))$. Also, $\kappa_{\max}$ and $\kappa_{\min}$ are independent of $T$. Hence, ignoring the logarithmic terms and constants,
\begin{align}
    \lambda_{T} &= \tilde \mcO(\sqrt{NT}) \ . 
\end{align}
\end{proof} 
Finally, we combine Theorem \ref{th:regret_ucb_part1} and \ref{th:regret_ucb_part2} to obtain the regret bound of our algorithm, formalized next.
\begin{theorem}\label{th:regret_ucb_final}
Under Assumption~\ref{assumption:boundedness}, the regret of LinSEM-UCB is
\begin{align}
    \E[R(T)] = \tilde \mcO( d^{L+\frac{1}{2}} \sqrt{NT}) \ .
\end{align}
\end{theorem}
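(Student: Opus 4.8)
The plan is to combine the two preceding theorems essentially mechanically, since together they already isolate all the dependence on the topology (through $d$ and $L$) from the dependence on the horizon and intervention space (through $\lambda_T$). First I would recall the bound of Theorem~\ref{th:regret_ucb_part1},
\begin{align}
  \E[R(T)] \leq 2m + \norm{\nu}\, 2^{L+1} L\, \beta_{T}^{L}\, d^{\frac{L}{2}}\, \lambda_{T} \ ,
\end{align}
and treat the two topology-free quantities $2m$ and $\norm{\nu}$ as $\tilde{\mcO}(1)$ under Assumption~\ref{assumption:boundedness}.

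The first genuine step is to estimate the confidence-width parameter $\beta_{T}$. From its definition $\beta_{T} = 1 + \sqrt{2\log(2NT) + d\log(1+m^2T/d)}$, the dominant contribution inside the square root is the $d\log(1+m^2T/d)$ term, so $\beta_{T} = \mcO(\sqrt{d\log T}) = \tilde{\mcO}(\sqrt{d})$. Raising to the $L$-th power gives $\beta_{T}^{L} = \tilde{\mcO}(d^{L/2})$, with the poly-logarithmic factor $(\log T)^{L/2}$ swallowed by $\tilde{\mcO}$. Multiplying by the explicit $d^{L/2}$ already present in the bound yields $\beta_{T}^{L} d^{L/2} = \tilde{\mcO}(d^{L})$, and folding in the $2^{L+1}$ factor produces the announced $\tilde{\mcO}((2d)^{L})$, keeping the factor $L$ explicit.

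The second step is to substitute the bound on $\lambda_{T}$ from Theorem~\ref{th:regret_ucb_part2}, which shows $\lambda_{T} = K_1\sqrt{NT} + K_2(\log T)^2 + K_3$ with $K_1 = 4g(\tau)/\sqrt{\kappa_{\min}}$ and $K_2, K_3$ constants independent of $T$. Since $g(\tau) = \mcO(\sqrt{\log T})$ and $\kappa_{\min}$ is a fixed problem constant, the leading term dominates and $\lambda_{T} = \tilde{\mcO}(\sqrt{NT})$. Chaining this with the previous paragraph gives
\begin{align}
  \E[R(T)] = \tilde{\mcO}\big((2d)^{L} L \sqrt{NT}\big) \ ,
\end{align}
as claimed.

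I do not expect any serious obstacle, as both inputs are already established; the only points needing care are bookkeeping ones. Specifically, one must verify that the $\sqrt{d}$ growth of $\beta_{T}$ combines with the explicit $d^{L/2}$ to give exactly $d^{L}$ (rather than $d^{L/2}$), and that all problem-dependent but $T$-independent constants—$m$, $\norm{\nu}$, $\kappa_{\min}$, and the $\kappa_{\max}$ hidden inside $g(\tau)$—are legitimately absorbed by $\tilde{\mcO}$, which by definition suppresses constant and poly-logarithmic factors. The lower-order additive terms $2m$, $K_2(\log T)^2$, and $K_3$ are dominated by the $\sqrt{NT}$ term for large $T$ and therefore drop out.
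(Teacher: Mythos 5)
Your proposal is correct and follows essentially the same route as the paper: both arguments simply combine Theorem~\ref{th:regret_ucb_part1} with Theorem~\ref{th:regret_ucb_part2}, observe that $\beta_T = \tilde{\mcO}(\sqrt{d})$ so that $\beta_T^L d^{L/2} = \tilde{\mcO}(d^L)$, and absorb all remaining $T$-independent constants and poly-logarithmic factors into $\tilde{\mcO}$. Your write-up is in fact slightly more explicit about the bookkeeping than the paper's two-line proof.
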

\begin{proof}
Since $\beta_{T} = m_B + \sqrt{2\log(2NT)+(d+1)\log(1+m^2T/(d+1))}$ and $m_B$ is constant, ignoring the poly-logarithmic factors, $(\beta_{T}+m_B)^{L+1}$ contributes $(d+1)^{\frac{L+1}{2}}$ factor to the result of Theorem \ref{th:regret_ucb_part1}. Factoring the result of Theorem \ref{th:regret_ucb_part2} for $\lambda_{T}$, we obtain $\E[R(T)] = \tilde \mcO( d^{L+\frac{1}{2}} \sqrt{NT})$.
\end{proof}

\section{\algonameTS{} Algorithm}\label{sec:TS}
\begin{algorithm}[t]
\caption{\algonameTS{}}
\label{alg:ts_algorithm}
\begin{algorithmic}[1]
\State \textbf{Input:} Causal graph $\mcG$, action set $\mcA$, hyperparameter $\sigma$, prior distribution~$\pi_0$.  
\State \textbf{Initialization:} Initialize parameters for $2N$ linear problems:
\State  $[\bB(0)]_{i} = \mathbf{0}_{(N+1)\times 1},  [\bB^{*}(0)]_{i} = \mathbf{0}_{(N+1) \times 1}, \ \forall i \in [N] \ $ \Comment{initiliaze estimates for parameter vectors}
\State  $\bV_{i}(0) = I_{N+1}, \ \bV^{*}_{i}(0) = I_{N+1}, \ g_{i}(0) = \mathbf{0}_{(N+1) \times 1}, \ g^{*}_{i}(0) = \mathbf{0}_{(N+1) \times 1}$ \Comment{initiliaze auxiliary parameters}
\For {$t = 1,2,\ldots T$}
    \State ${\breve \bW} \sim \pi_{t-1}(\bW \med X(1),\dots, X(t-1))$  
    \For {$a \in \mcA$}
        \State Construct $\breve \bB_{a}$ from rows of ${\breve \bW}$ similarly to \eqref{eq:Ba_construct}. \label{line:Ba_construct}
        \State $\hat \mu_a \leftarrow f(\breve \bB_{a})$ \Comment{expected reward under action $a$} \label{line:expected_rewards} 
    \EndFor
    \State $a_t = \argmax_{a} \hat \mu_a$ \Comment{select the action that maximizes expected reward} \label{line:select_best_arm}
    \State Pull $a_t$, observe $X(t) = (1,X_1(t),\dots, X_N(t))^\top$. 
    \State Update the posterior to $\pi_{t}$ under the linear model, similarly to Lines \ref{line:ucb_update_start}-\ref{line:ucb_update_end} of Algorithm \ref{alg:ucb_algorithm}. \label{line:ts_update}
\EndFor
\end{algorithmic}
\end{algorithm}%

Thompson Sampling ({\rm TS})-based algorithms gradually refine the posterior distributions of the reward for each action and select actions by sampling from their posterior distributions. The actions are selected sequentially in a way that they balance the exploitation and the exploration processes. 
To this end, a {\rm TS} algorithm needs to update the posterior distributions of all arms. Similar to \algonameUCB{}'s improvement over {\rm UCB}, the known causal structure can be leveraged to improve the performance of ${\rm TS}$ in causal bandits.

\begin{assumption}[Bounded prior]\label{assumption:boundedness:prior}
The domain of the prior distribution of parameters is bounded as $\mcW \triangleq \{\bW \in R^{2N \times N} : \norm{[\bW]_{i}}\leq m_B \ \ \forall i \in [2N]\}$.
\end{assumption}
We denote the prior distribution of $\bW$ by $\pi_0$, and denote its posterior given the filtration generated until time $t$ by $\pi_t$ for $t\in\N$. The \algonameTS{} algorithm at time $t$ samples from the posterior distribution $\pi_{t-1}$. Subsequently, it constructs the weight matrix for each intervention action and computes the corresponding expected reward. Next, the intervention with the highest expected reward is selected, and the graph instance $X(t)$ is observed. Finally, the posterior distributions of the $2N$ independent weight vectors that constitute $\bW$ are updated according to the linear model.

We note that \algonameTS{} has two advantages over \algonameUCB{}. First, the constrained optimization problem for computing ${\rm UCB}_a(t)$ has no closed-form solution to our knowledge. Hence, \algonameUCB{} is computationally expensive to implement in practical settings. \algonameTS{} does not suffer from this problem. Secondly, \algonameUCB{} requires to know bound $m$ on $\norm{X}$ and horizon $T$ since the radius of the confidence intervals $\beta_T$ depends on $m$ and $T$. In contrast, \algonameTS{} does not require to know $m$ or $T$. Hence, \algonameTS{} is an anytime algorithm.
Furthermore, we can ensure a performance guarantee similar to that of Theorem \ref{th:regret_ucb_final} for the Bayesian setting. This guarantee is formalized in the next theorem.
\begin{theorem}\label{th:regret_ts}
Under Assumption~\ref{assumption:boundedness} and Assumption~\ref{assumption:boundedness:prior}, the Bayesian regret of \algonameTS{} is
\begin{align}
    {\rm BR}(T) = \tilde \mcO( d^{L+\frac{1}{2}} \sqrt{NT}) \ .
\end{align}
\end{theorem}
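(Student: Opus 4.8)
The plan is to bound the Bayesian regret of \algonameTS{} by the frequentist width analysis of \algonameUCB{} via the posterior-sampling argument of Russo and Van Roy, reusing Theorems \ref{th:regret_ucb_part1} and \ref{th:regret_ucb_part2} essentially verbatim. The key observation is that the upper-confidence functional ${\rm UCB}_a(t)$ defined in \eqref{eq:ucb_definition} depends only on the Gram matrices $\bV_{i,a}(t-1)$ and the estimates $[\bB(t-1)]_{i},[\bB^{*}(t-1)]_{i}$, all of which are measurable with respect to the history $\mcF_{t-1} \triangleq \sigma(a_1,X(1),\dots,a_{t-1},X(t-1))$. I will use this same functional as the optimistic surrogate in the Bayesian setting, even though \algonameTS{} does not itself compute it.

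First I would establish the posterior-matching identity. Under \algonameTS{}, the sampled matrix $\breve \bW \sim \pi_{t-1}$ has the same conditional law as the true $\bW$ given $\mcF_{t-1}$. Since the selected action $a_t=\argmax_{a\in\mcA}\inner{f(\breve \bB_a)}{\nu}$ and the optimal action $a^*=\argmax_{a\in\mcA}\inner{f(\bB_a)}{\nu}$ are the \emph{same} deterministic function of their respective parameter matrices, they are identically distributed given $\mcF_{t-1}$; as ${\rm UCB}_a(t)$ is $\mcF_{t-1}$-measurable, this yields
\begin{align}
    \E\big[{\rm UCB}_{a^*}(t) \med \mcF_{t-1}\big] = \E\big[{\rm UCB}_{a_t}(t) \med \mcF_{t-1}\big] \ .
\end{align}
I then decompose the per-round Bayesian regret using ${\rm UCB}_a(t)$ as a pivot,
\begin{align}
    \mu_{a^*} - \mu_{a_t} = \underbrace{\big(\mu_{a^*} - {\rm UCB}_{a^*}(t)\big)}_{\text{optimism gap}} + \big({\rm UCB}_{a^*}(t) - {\rm UCB}_{a_t}(t)\big) + \underbrace{\big({\rm UCB}_{a_t}(t) - \mu_{a_t}\big)}_{\text{confidence width}} \ ,
\end{align}
and sum over $t$ under $\E_{\mcW}\E_{\epsilon}$. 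The middle term vanishes by the identity above (inserted via the tower rule).

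For the optimism gap, I invoke the good event $\mcE_{\cap}$ from the proof of Theorem \ref{th:regret_ucb_part1}: with $\delta=1/(2NT)$, \citet[Theorem 20.5]{lattimore2020bandit}---a guarantee that holds for every fixed $\bW\in\mcW$ and hence in Bayesian expectation---gives $\P(\mcE_{\cap}^{\C})\leq 1/T$, and on $\mcE_{\cap}$ every confidence set contains the true columns $[\bB_a]_{i}$, so ${\rm UCB}_{a^*}(t)\geq\inner{f(\bB_{a^*})}{\nu}=\mu_{a^*}$ and the optimism gap is nonpositive. Off $\mcE_{\cap}$ each summand is at most $2m$, contributing only the constant $2m$. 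The crux then reduces to the confidence-width term $\E\big[\sum_{t}({\rm UCB}_{a_t}(t)-\mu_{a_t})\big]$. Writing $\tilde\bB_{a_t}$ for the maximizer attaining ${\rm UCB}_{a_t}(t)$ (as in Theorem \ref{th:regret_ucb_part1}), this equals $\E\big[\sum_t\inner{f(\tilde\bB_{a_t})-f(\bB_{a_t})}{\nu}\big]$, which is precisely the quantity bounded from \eqref{eq:ucb_refer_in_ts} onward: under $\mcE_{\cap}$ the hypotheses of Lemma \ref{lm:bound_l_paths} and Corollary \ref{corollary_lemma2} hold for $\bB_{a_t}$ and $\tilde\bB_{a_t}$, giving the bound $\norm{\nu}\,2^{L+1}L\beta_T^{L}d^{L/2}\lambda_T$. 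Theorem \ref{th:regret_ucb_part2} controls $\lambda_T=\tilde\mcO(\sqrt{NT})$, and with $\beta_T^{L}=\tilde\mcO(d^{L/2})$ we conclude ${\rm BR}(T)=\tilde\mcO((2d)^{L}L\sqrt{NT})$.

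The main obstacle is the rigorous justification of the posterior-matching identity: one must verify that ${\rm UCB}_a(t)$ is genuinely $\mcF_{t-1}$-measurable and that a consistent argmax tie-breaking makes $a_t$ and $a^*$ measurable functions of $\breve\bW$ and $\bW$ respectively, so that equality of conditional laws transfers through the $\mcF_{t-1}$-measurable functional. Once this is secured, the remainder is a direct transcription of the \algonameUCB{} width analysis, because Lemma \ref{lm:bound_l_paths}, Corollary \ref{corollary_lemma2}, and Theorem \ref{th:regret_ucb_part2} are statements about propagation of estimation error through the causal paths and are entirely agnostic to the action-selection rule.
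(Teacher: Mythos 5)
Your proposal follows essentially the same route as the paper's proof: the Russo--Van Roy posterior-matching identity applied to the $\tilde\mcF_{t-1}$-measurable functional ${\rm UCB}_a(t)$, the good event $\mcE_{\cap}$ with $\P(\mcE_{\cap}^{\C})\leq 1/T$ handling the optimism gap, and a verbatim reuse of the width analysis from Theorem~\ref{th:regret_ucb_part1} together with Theorem~\ref{th:regret_ucb_part2}. The only detail you omit, which the paper handles explicitly, is that the constants inside $\lambda_T$ (namely $\kappa_{\min}$, $\kappa_{\max}$, $\tau$, $g(\tau)$) depend on the realized $\bW$, so before taking the outer expectation $\E_{\mcW}$ one must replace them by their worst-case values over the prior's support $\mcW$ (the paper's $\kappa_{\min}^{\mcW}$, $\tau_{\mcW}$, $\lambda_{\mcW,T}$); this does not change the $\tilde\mcO((2d)^L L\sqrt{NT})$ conclusion but is needed for the bound to be uniform under the prior.
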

\begin{proof}
The main tools of the proofs, e.g., Lemma \ref{lm:bound_l_paths}, are similar to that proof of frequentist result. However, the proof for Thompson sampling critically relies on the property that conditional distributions of different actions are almost surely the same given proper filtration. Importantly, this filtration is different than the one used for concentration inequalities in the proof of Lemma~\ref{lm:error_event_int}. Finally, the ranges of the parameters regarding the intervention space, e.g., $\kappa_{\min}$, are different from that of the frequentist analysis and are examined carefully. For detailed proof, see Appendix~\ref{appendix:proof_TS}.
\end{proof}

\subsection{\algonameTS{-Gaussian} Algorithm}\label{sec:TS_Gaussian}
We have provided \algonameTS{} and its performance guarantees under the bounded noise and bounded prior distributions. Sampling from the posteriors under bounded priors can be done via different numerical techniques yet they can be time-consuming. For instance, commonly used Gibbs sampling has a long convergence time, especially in the high-dimensional regime. To circumvent this, in this section, we present a modified algorithm that leverages Gaussian priors, which we refer to as \algonameTS{-Gaussian}. We note that the theoretical guarantees presented in Section \ref{sec:TS} do not apply to \algonameTS{-Gaussian} since boundedness assumptions are not satisfied.
\begin{algorithm}[t]
\caption{\algonameTS{-Gaussian}}
\label{alg:ts_algorithm_gaussian}
\begin{algorithmic}[1]
\State \textbf{Input:} Causal graph $\mcG$, action set $\mcA$, hyperparameter $\sigma$.
\State \textbf{Initialization:} Initialize parameters for $2N$ linear problems:
\State  $[\bB(0)]_{i} = \mathbf{0}_{N\times 1},  [\bB^{*}(0)]_{i} = \mathbf{0}_{N \times 1}, \ \forall i \in [N] \ $ \Comment{initiliaze estimates for parameter vectors}
\State  $\bV_{i}(0) = I_{N+1}, \ \bV^{*}_{i}(0) = I_{N+1}, \ g_{i}(0) = \mathbf{0}_{N \times 1}, \ g^{*}_{i}(0) = \mathbf{0}_{N \times 1}$ \Comment{initiliaze auxiliary parameters}
\For {$t = 1,2,\ldots T$}
    \For {$i \in \{1,\dots,N\}$}
        \State $[\breve \bB(t)]_{i} \sim \mcN([\bB(t-1)]_{i} , \sigma^2 [\bV_{i}(t-1)]^{-1})$ 
        \label{line:sample_weights_obs}
        \State $[\breve \bB^{*}(t)]_{i} \sim \mcN([\bB^{*}(t-1)]_{i} , \sigma^2 [\bV^{*}_{i}(t-1)]^{-1})$
        \label{line:sample_weights_int}
    \EndFor
    \For {$a \in \mcA$}
        \State Construct $\breve \bB_{a}$ from rows of ${\breve \bW \triangleq [{\breve \bB} \ \ {\breve \bB^{*}}]}$ similarly to \eqref{eq:Ba_construct}.
        \State $\hat \mu_a \leftarrow f(\breve \bB_{a})$
    \EndFor
    \State $a_t = \argmax_{a} \hat \mu_a$ \Comment{select the action that maximizes the expected reward}
    \State Pull $a_t$, observe $X(t) = (X_1(t),\dots, X_N(t))^\top$. 
    \State Update the Gaussian posteriors under the linear model. 
\EndFor
\end{algorithmic}
\end{algorithm}%

\paragraph{Gaussian posterior computation.} We use a Gaussian distribution for the prior distribution of the parameters and a Gaussian likelihood model for the data. We follow the standard Bayesian posterior inference \citep{agrawal2013thompson}. To summarize, consider the posterior computation of parameters $[\bB]_{i}$. The likelihood function of $X_{i}(t)$ is given by $\mcN(\langle [\bB]_{i}, X_{\Pa(i)}(t) \rangle , \sigma_{i}^2)$ according to the linear SEM. We start with the prior $\mcN(\mathbf{0}_{N\times 1},I_{N+1})$ for $[\bB]_{i}$. If the prior of $[\bB]_{i}$ at round $t$ is $\mcN([\bB(t-1)]_{i},\sigma^2 [\bV_{i}(t-1)]^{-1})$, then the posterior distribution at round $t$ is computed as $\mcN([\bB(t)]_{i}, \sigma^2 [\bV_{i}(t)]^{-1})$. A similar computation shows that the posterior distribution of $[\bB^{*}]_{i}$ at round $t$  is $\mcN([\bB^{*}(t)], \sigma^2 [\bV^{*}_{i}(t)]^{-1})$.

\paragraph{Algorithm details.} Algorithm \ref{alg:ts_algorithm_gaussian} presents the \algonameTS{-Gaussian} algorithm. Since we have exact forms of the posteriors of weight vectors, we explicitly write the sampling of parameters in lines \ref{line:sample_weights_obs}-\ref{line:sample_weights_int}. Estimated weight matrices and intervention selection are all formed similarly to those of Algorithm \ref{alg:ts_algorithm}. Finally, interventional and observational posteriors are updated according to the linear model specified earlier.

\section{Lower Bounds}\label{sec:lower-bound}
In this section, we provide minimax lower bounds on the regret. The main observation is that the dependence of these lower bounds on the graph parameters (maximum degree $d$ and longest causal path length $L$)  conforms to that of the achievable regret bounds. For this purpose, we consider a general class of graphs and prove a minimax lower bound by showing that there exists a causal bandit environment parameterized by a linear SEM, for which the regret of any algorithm is at least a constant factor of $d^{\frac{L}{2}-2}\sqrt{T}$.
\paragraph{A general class of graphs.} Note that given values of $d$ and $L$ necessitate that the graph size $N$ is at least $\max\{d+1,L+1\}$, and at most $\sum_{\ell=0}^{L}d^{\ell} < 2d^{L}$. On the other hand, $d$ and $L$ can vary independently without imposing restrictions on one another. Therefore, we aim to prove a lower bound for fixed $d$ and $L$ whereas $N$ can vary freely. In the proof of the following theorem, we construct  graph structures and bandit instances that are flexible enough to accommodate all possible choices of $d$ and $L$, and the graph size $N$ that can have varying sizes in the range $N\in\{(dL-d+1),\dots, d^L+(dL-d+1)\}$. Since $N < 2d^{L}$, this class contains large enough graphs that have the same order as the largest possible graphs for fixed $d$ and $L$.
\begin{theorem}\label{th:lower-bound}
Given the pair $(d,L)$, for any $N \in \{d(L-1)+2, \dots, d^L+d(L-1)+1\}$, there exists a causal bandit instance with graph parameters $(d,L,N)$ such that expected regret of any algorithm is at least
    \begin{align}
        \E[R(T)] \geq c d^{\frac{L}{2}-2}\sqrt{T} \ ,
    \end{align}
    where $c > 0$ is a constant.
\end{theorem}
\emph{Sketch of the proof.} 
The central idea of the proof is as follows. Two linear SEM causal bandit instances that differ by only one edge parameter are hard to distinguish. At the same time, we can construct them to have different optimal actions, indicating that a selection policy cannot incur small regret for both at the same time under the same data realization. 
Note that, the difference of the rewards, or equivalently the regrets, observed by these two bandit instances under the same action can be computed by tracing the effect of that differing edge parameter over all the paths that end at the reward node. We use a hierarchical structure that consists of $L$ layers and $d$ nodes at each layer to maximize the number of such paths. We also use $d$-ary trees to control the number of nodes in the graph without affecting the lower bound analysis. See Appendix~\ref{sec:proof-lower-bound} for the detailed proof.

\begin{remark}\label{remark:lower-bound}
We remark that optimism in the face of uncertainty linear bandit algorithm (OFUL) of \citep{abbasi-yadkori2011} has $\mcO(d\sqrt{T})$ regret, whereas the lower bound for linear bandit problem with finite arms is $\mcO(\sqrt{dT})$. Note that the linear bandit problem can be considered a special case of our framework with $L=1$. Since our confidence intervals rely on the same principles as OFUL, the gap between lower and upper bounds established in the paper, i.e., $d^{\frac{L}{2}}$ factor, is due to the aggregated effect of $L$ layers. Tighter regret results would have been possible by using a more sophisticated approach for building the confidence intervals along with elimination, such as the SupLinUCB algorithm of \citep{chu2011contextual}. However, this investigation is out of the scope of this paper.
\end{remark}

\section{Simulations}\label{sec:simulations}

The main novelty of our work is to use causal graph knowledge without knowing the interventional distributions. Furthermore, we use soft interventions on continuous variables. To the best of our knowledge, there does not exist any causal bandit algorithm that suits this setting. The standard linear bandit algorithms are also not applicable to causal bandits for linear SEMs due to the feature vectors being unknown. Therefore, we compare the performance of our method to that of the non-causal {\rm UCB} algorithm. For practical implementation, we use Algorithm \ref{alg:ts_algorithm_gaussian} for the simulations in this section.~\footnote{The codebase for reproducing the simulations are available at \url{https://github.com/bvarici/causal-bandits-linear-sem}.}
\paragraph{Parameterization.} We consider linear SEMs with soft interventions under the Bayesian setting. For a given causal graph structure, the prior distributions of the non-zero entries of observational parameters $\bB$ are sampled independently at random according to the uniform distribution on $[-1,-0.25] \cup [0.25, 1]$. Priors for interventional weights are set as negatives of observational weights, $\bB^{*} = - \bB$, so that the effect of an intervention is strong. To approximate the Bayesian setting, the true weights $[\bB]_{i}= -[\bB^{*}]_{i}$ are sampled 10 times from a Gaussian prior with small variances. The distribution of the individual noise terms $\epsilon_i$ is set to $\mcN(0,1)$ for all nodes. We note all parameters are unknown to the learner. Simulations for each sampled parameter set are repeated 10 times. We simulate causal graphs under two structural families.

\paragraph{Hierarchical Graphs:} We consider hierarchical graph structures with $L$ layers and degree $d$ constructed as follows. Each layer $\ell \in \{1,\dots,L\}$ contains $d$ nodes, each node at layer $\ell < L$ is a parent of the nodes in the next layer $(\ell+1)$, and the nodes in the final layer $L$ are the parents of the reward node. The total number of nodes is $N=dL+1$. Figure \ref{fig:hierarchical_sample} depicts the hierarchical structure for $d=3$ and $L=2$. Every node is set to be intervenable, leading to $|\mcA|=2^{dL+1}$ number of possible interventions. We construct this structure by observing that both our regret upper bound and the minimax lower bound are a function of $d$ and $L$.

Figure \ref{fig:hierarchical_d_3_l_2_compare} compares the cumulative regret of \algonameTS{-Gaussian} and non-causal {\rm UCB} algorithms for $d=3$ and $L=2$. We observe that \algonameTS{-Gaussian} algorithm significantly outperforms the standard bandit algorithm by exploiting the known causal structure. Next, we look into the graph structure's effect on the regret of \algonameTS{-Gaussian}. In Figure \ref{fig:hierarchical_vary}, we plot the cumulative regrets at $T=5000$ for different values of $d$ and $L$. We note that the curves in Figure \ref{fig:hierarchical_vary} become steeper as $L$ increases. For instance, the green curve ($L=4$) has a faster-growing regret than the blue curve ($L=2$). On the other hand, the growth of regret is much slower than the growth of the cardinality $|\mcA|=2^{dL+1}$. These observations imply that our theoretical upper bounds that scale with $d^{L}$ instead of $|\mcA|$ are realized empirically.

\begin{figure}[t]
    \centering
    \begin{subfigure}[t]{0.32\textwidth}
        \centering
        \includegraphics[width=0.6\linewidth]{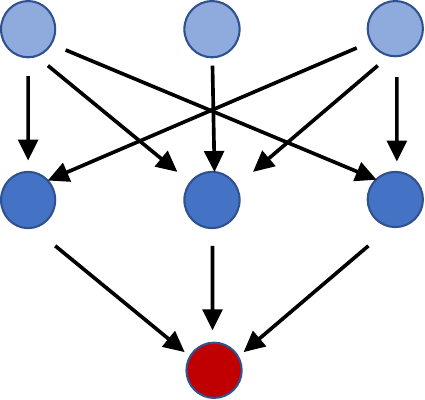}
        \caption{Hierarchical structure with \\ $L=2$ layers and degree $d=3$.}
        \label{fig:hierarchical_sample}
    \end{subfigure}
    \begin{subfigure}[t]{0.32\textwidth}
        \centering
        \includegraphics[width=0.9\linewidth]{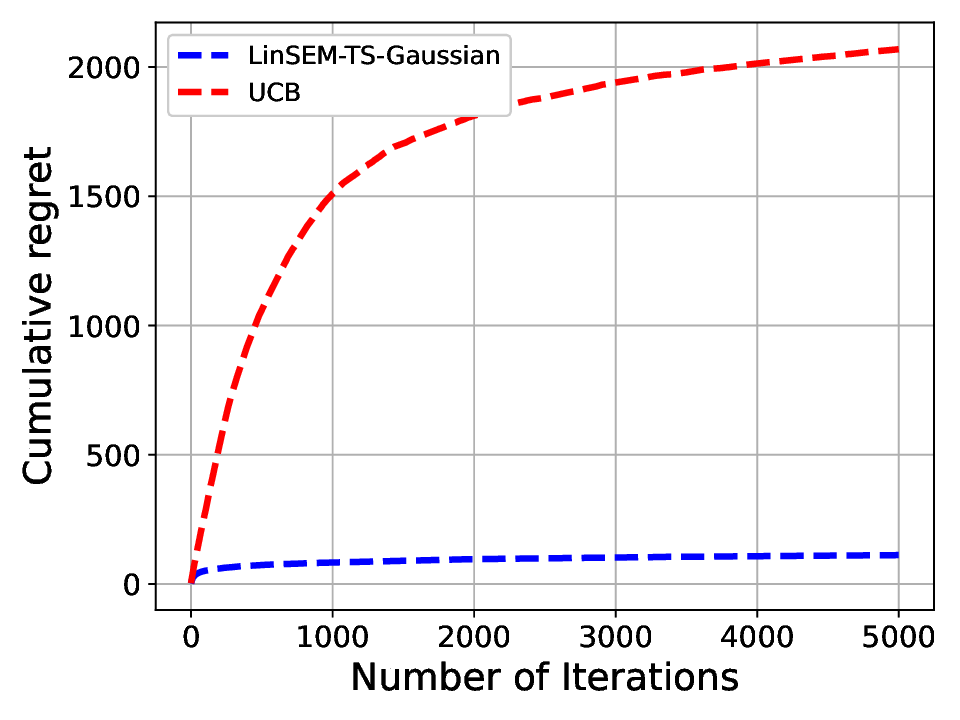}
        \caption{Cumulative regret comparison for $d=3$, $L=2$.}
        \label{fig:hierarchical_d_3_l_2_compare}
    \end{subfigure}    
    \begin{subfigure}[t]{0.32\textwidth}
        \centering
        \includegraphics[width=0.9\linewidth]{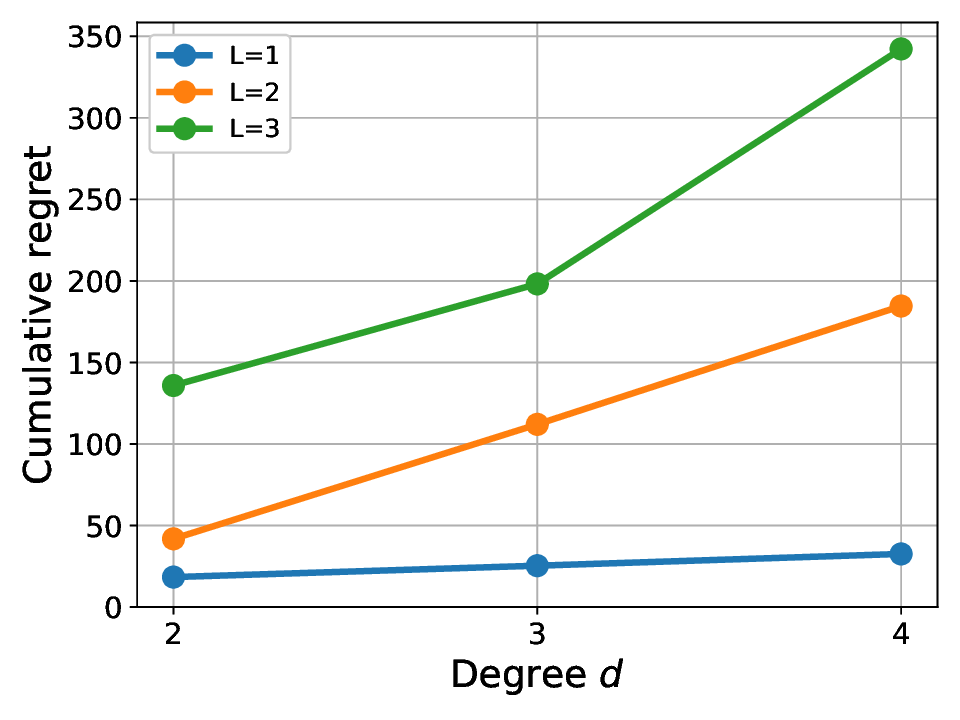}
        \caption{Cumulative regret at $T=5000$.}
        \label{fig:hierarchical_vary}
    \end{subfigure}   
    \caption{Simulation results for hierarchical causal graphs.}
\end{figure}

\paragraph{Enhanced parallel bandits:} We construct a structure similar to the parallel bandits model of \citet{lattimore2016causal} with additional edges. In this model, each node $\{1,\dots,N-1\}$ is a parent of the reward node $N$. In addition, we assign one parent to each node $i \in \{2,\dots,N-1\}$, which is chosen uniformly at random from the nodes $\{1,\dots,i-1\}$. Every node is intervenable, which gives $|\mcA|=2^{N}$ number of possible interventions. One example of this structure with 5 nodes is given in Figure \ref{fig:enhanced_parallel_sample}. We randomly generate 5 structures for each value of $N \in \{5,6,7,8,9\}$ and run \algonameTS{-Gaussian} and {\rm UCB} for each graph. 

Figure~\ref{fig:enhanced_parallel_N_5_compare} compares the cumulative regret of \algonameTS{-Gaussian} and {\rm UCB} algorithms for $N=5$. Similarly to the previous setting,  \algonameTS{-Gaussian} outperforms {\rm UCB} that does not use the causal relationships among the arms. Secondly, we fix horizon $T$ at 5000 and compare the performance of  \algonameTS{-Gaussian} as $N$ grows. We note that the maximum degree $d$ is equal to $(N-1)$ for enhanced parallel bandits. The largest path length $L$ is not strictly controlled by $N$ and can take values between $2$ and $(N-1)$. Figure \ref{fig:enhanced_vary_N_compare_ucb} shows that ${\rm UCB}$ algorithm’s regret grows exponentially as $N$ grows, as $|\mcA|$ does, whereas \algonameTS{-Gaussian} incurs a much smaller regret.
\begin{figure}[t]
    \centering
    \begin{subfigure}[t]{0.32\textwidth}
        \centering
        \includegraphics[width=0.9\linewidth]{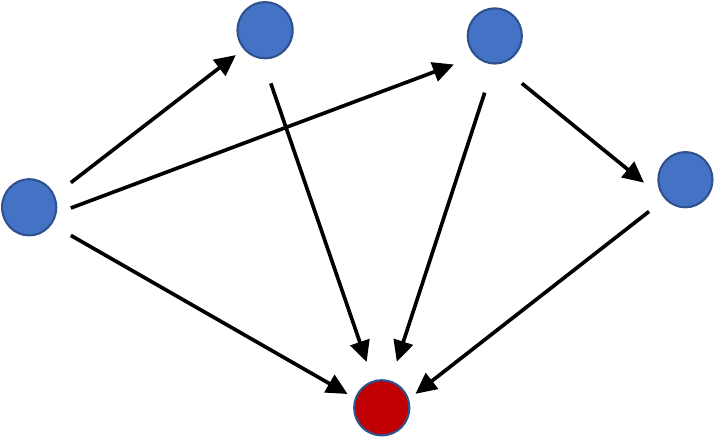}
        \caption{Sample structure with $N=5$}
        \label{fig:enhanced_parallel_sample}
    \end{subfigure}
    \begin{subfigure}[t]{0.32\textwidth}
        \centering
        \includegraphics[width=0.9\linewidth]{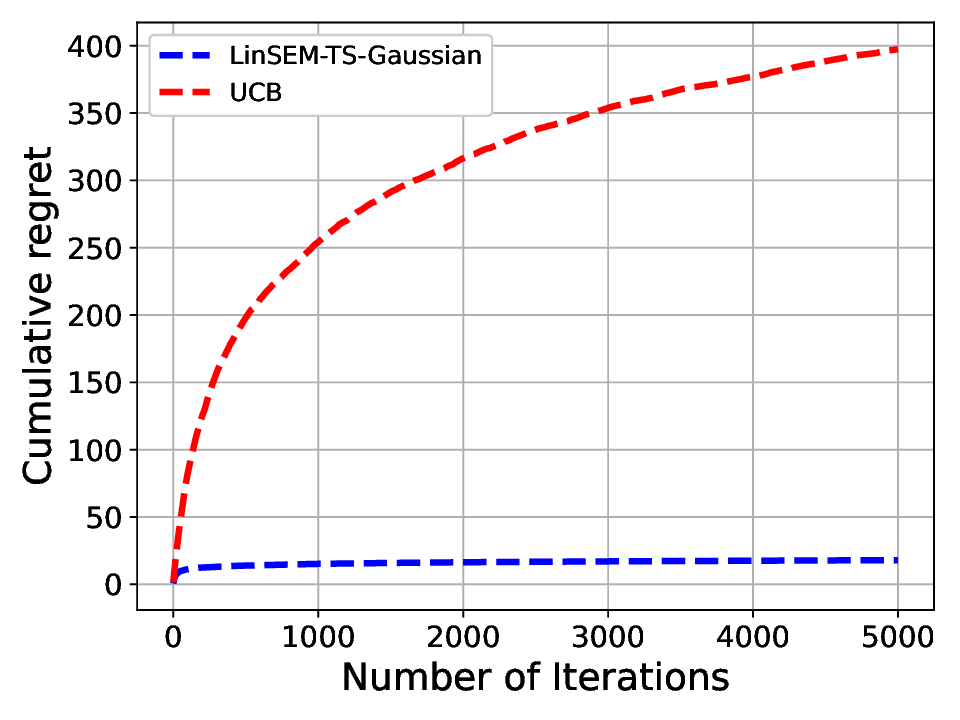}
        \caption{Cumulative regret comparison for $N=5$.}
        \label{fig:enhanced_parallel_N_5_compare}
    \end{subfigure}    
    \begin{subfigure}[t]{0.32\textwidth}
        \centering
        \includegraphics[width=0.9\linewidth]{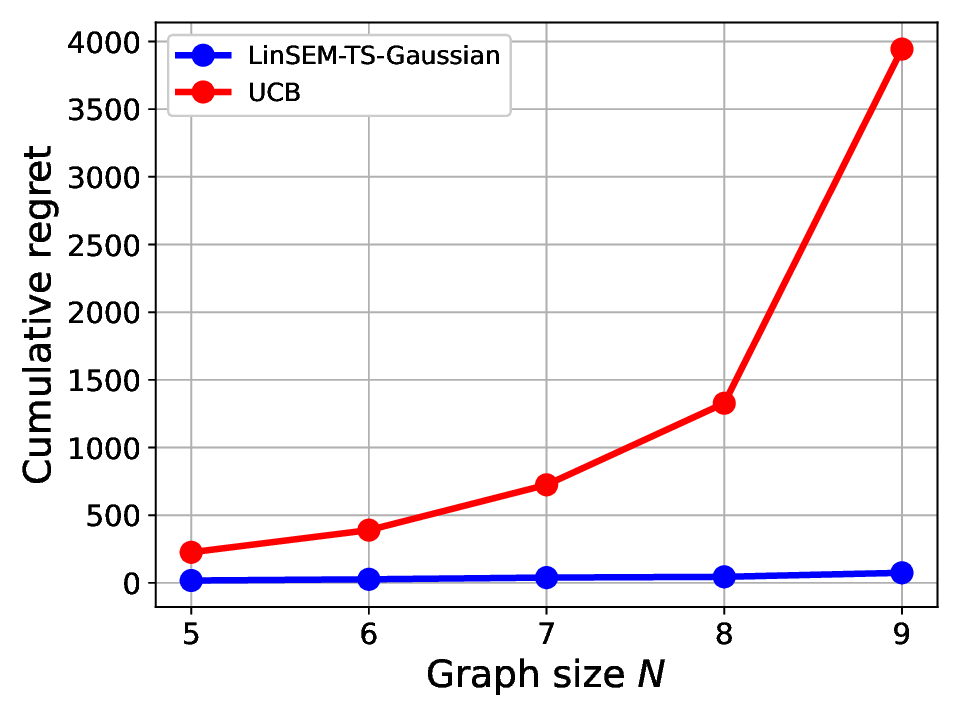}
        \caption{Cumulative regret at $T=5000$}
        \label{fig:enhanced_vary_N_compare_ucb}
    \end{subfigure}   
    \caption{Simulation results for enhanced parallel bandits.}
\end{figure}

\paragraph{Graph misspecification.}
In the simulations above, the learner is given the perfect knowledge of the causal structure following the theoretical analysis in previous sections. In the following simulations, we investigate a possible scenario in which the structure given to the learner deviates from the true structure by a few edges. This can be a practical scenario where the domain expert generously assigns edges from possible causes to effects. In other words, the learner can be given a graph that is a supergraph of the true graph with some additional edges. We simulate this graph misspecification setting in hierarchical graphs by randomly adding incorrect edges to the graph structure provided to the learner. The incorrect edges are chosen in a way that preserves acyclicity. The number of incorrect edges is set to $2$ for each graph instance, which is reasonable given that the graph structures have moderate sizes.

Figure~\ref{fig:hierarchical_vary_d_graph_mis} shows that our method still works when the knowledge of graph structure is perturbed by a small number of edges. We interpret this result by recalling our theoretical analysis. Since the true parents of a node $i$ is a subset of the assumed parents of it, the guarantees of estimating $[\bB]_i$ and $[\bB^*]_i$ would still follow through, albeit for a greater degree than the true degree of node $i$. The overall effect is a potential increase in the graph's $d$ and $L$ values. The results in Figure~\ref{fig:hierarchical_vary_d_graph_mis} align with our expectations as the regret under graph misspecification is only slightly worse than the case with perfect graph knowledge.

\begin{figure}[t]
    \centering
    \includegraphics[width=0.6\linewidth]{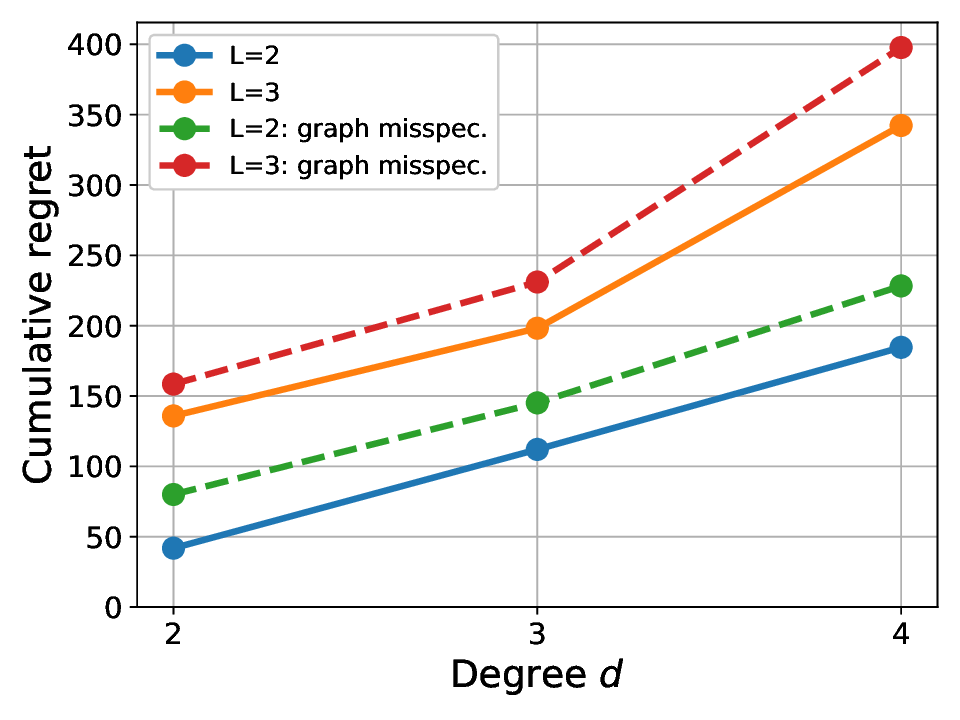}
    \caption{Cumulative regret comparison (at $T=5000$) of the main setting with perfect knowledge of the graph and graph misspecification setting for varying $d$ and $L$ values. Our algorithm incurs sublinear regret even when there are misspecified edges.}
    \label{fig:hierarchical_vary_d_graph_mis}
\end{figure}

\paragraph{Knowledge of interventional distributions}
Finally, we assess the effect of knowing or lacking the interventional distributions through simulations. The interventional distributions of the reward's parents $\{\P_{a}(\Pa(N)):a\in\mcA\}$ in linear SEMs correspond to knowing all of the weights $[\bB_a]_{i}$ for non-reward nodes $i \in [N-1]$. In this case, the estimation problem reduces to estimating only the two weight vectors $[\bB]_{N}$ and $[\bB^{*}]_{N}$ corresponding to the reward node. Hence, the causal bandit problem becomes significantly easier when the interventional distributions are given a priori.

Figure~\ref{fig:hierarchical_vary_d_compare_known_dist} shows that the effect of $L$ with a fixed value of $d$ on the hierarchical graph’s cumulative regret is more significant when the distributions $\{\P_{a}(\Pa(N)):a\in\mcA\}$ are unknown. This observation can be explained by the fact that there is no compounded estimation error through $L$-length paths when the knowledge of the distributions is given, and the problem essentially reduces to a linear bandit. These empirical results illustrate the relative difficulty of our problem setting, i.e., lacking the knowledge of interventional distributions.
\begin{figure}[t]
    \centering
    \includegraphics[width=0.6\linewidth]{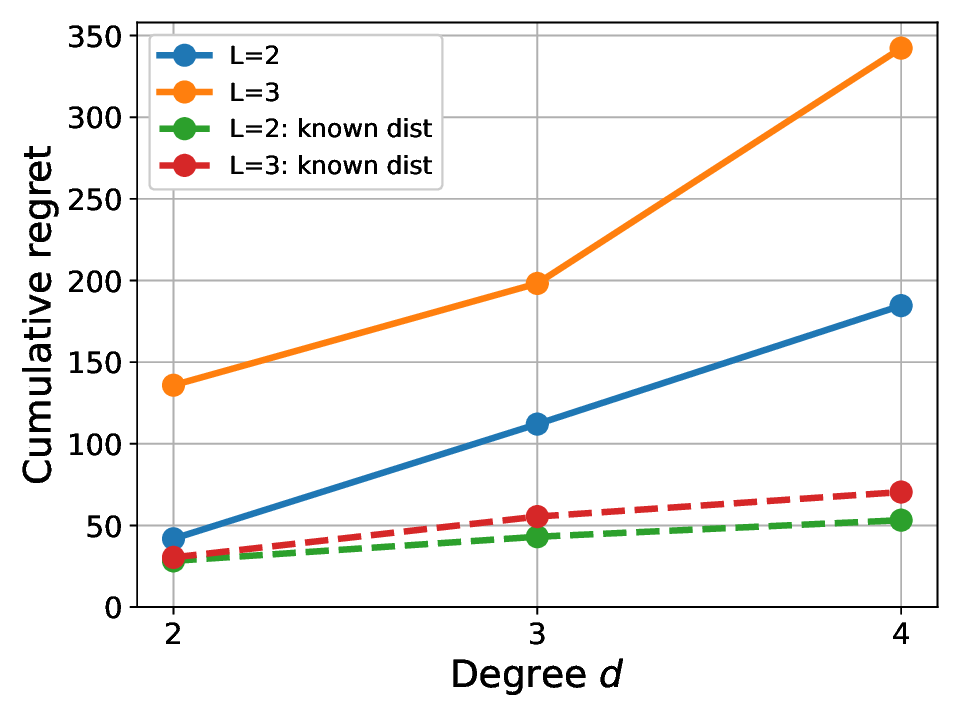}
    \caption{Cumulative regret at $T=5000$ with knowledge and without the knowledge of the interventional distributions for varying $d$ and $L$ values.}
    \label{fig:hierarchical_vary_d_compare_known_dist}
\end{figure}

\section{Discussion}

\paragraph{Background.} There have been considerable recent advances in understanding the fundamental performance limits of causal bandit problems under various settings. The hallmark of the results is establishing theoretical advantages over non-causal methods. The extent of such advantages critically hinges on the extent of information known about the topology of the causal graph and the interventional distributions. Recent studies have advanced in the direction of having minimal assumptions on these two key factors. Some of these studies, nevertheless, rely on other tacit or simplifying assumptions such as adopting single-node interventions and Boolean variables. Motivated by the applications in which domain knowledge suffices to construct a causal graph (e.g., the computational advertising model given in Figure~\ref{fig:soft-motivation}), an important direction of studies, which is also the focus of this paper, 
is assuming that the topology is known while the intervention distributions are unknown.

\paragraph{Contributions.} Our focus has been on designing an optimal sequence of interventions when the intervention space can grow with the graph. Specifically, mainly for analytical clarity, we have assumed one observational and one interventional model per node. That, however, can be readily extended to multiple interventional models per node, albeit at the expense of a higher degree of parameterization. We have addressed the open theoretical question of finding an optimal sequence of interventions for optimizing a graph reward value. This is done for the class of linear systems in which the causal relationships follow linear models. This renders that the nodes' conditional distributions have locally linear properties. We have allowed flexible intervention types (soft interventions that can be seamlessly extended to hard interventions), continuous variables, and exponentially large intervention space. Under these settings, we have characterized upper and lower bounds on the hitherto unknown cumulative regret under appropriate boundedness assumptions. We have shown that the achievable regret bounds depend on the properties of the causal graph but do not grow exponentially with the graph size, whereas the cardinality of the intervention space does. Furthermore, we have established a lower bound to show that this dependence on the graph properties is information-theoretic.

\paragraph{Future directions.} There are several open theoretical questions in the setting we have investigated. One pertains to relaxing the linear structural equation model. This can be done, for instance, by using reproducing kernel Hilbert space with proper approximation methods. Investigating the graphs with partially known structures, e.g., topological orders, is another important direction. Investigating the sensitivity of the results to model mismatch, contamination, or fluctuation is necessary to understand these algorithms' robustness. Finally, there is a scope for improving the dependence of our regret guarantees on the properties of intervention space. Given that regret scales with $\sqrt{T}$, a more sophisticated analysis would indicate that actions with autocorrelation matrix ($\Sigma_{i,a}$) that are closer to that of the optimal arm ($\Sigma_{i,a^*}$) are played overwhelmingly often. Then, $\frac{1}{\sqrt{\kappa_{\rm \min}}}$ factor, in which $\kappa_{\rm \min}$ is a lower bound over the intervention space, might be too conservative.

\newpage

\appendix

\section{Notations}\label{appendix:notations}
The following table summarizes the commonly used notations throughout the paper.
\begin{align*}
    \mcA, \mcE  &: \text{calligraphic letters for sets and events} \\
    X_{i} &: \text{random variables of the causal graph} \\
    X_{i}(t) &: \text{value of node $i$ at time $t$} \\
    X &: (X_1,\dots,X_N)^\top \\
    X(t) &: (1,X_{1}(t),\dots,X_{N}(t))^\top \quad \text{data vector at time $t$} \\    
    X_{\mcS}(t) &: X(t) \odot \mathbf{1}(\mcS) \quad \forall \mcS \subseteq \mcV \\
    X_{\Pa(i)}(t) &: \text{follows the previous line} \\
    \epsilon_i &: \text{noise variable for node $i$} \\
    \epsilon &: (1,\epsilon_1,\dots,\epsilon_N)^\top \\
     \epsilon(t) &: (1,\epsilon_{1}(t),\dots,\epsilon_{N}(t))^\top \quad \text{noise vector at time $t$} \\
     \bA, [\bA]_{i}, [\bA]_{i,j} &: \text{a matrix $\bA$, its column $i$, and its entry at row $i$ and column $j$, respectively} \\
    \bB_{a}, [\bB_{a}]_{i} &: \text{weight matrix for intervention $a$, and its $i$-th column} \\
    \bB_{a}(t), [\bB_{a}(t)]_{i} &: \text{estimate of $\bB_{a}$ at time $t$, and its $i$-th column} \\
    \bB_{a}^{\ell}, \big[\bB_{a}^{\ell}\big]_{i} &: \text{$\ell$-th power of $\bB_{a}$, and its $i$-th column} \\
    \bB_{a}^{\ell}(t), \big[\bB_{a}^{\ell}(t)\big]_{i} &: \text{$\ell$-th power of $\bB_{a}(t)$, and its $i$-th column}. \\ 
    \bDelta_{a}(t) &: \bB_{a}(t) - \bB_{a} \\
    [\bDelta_{a}(t)]_{i} &: [\bB_{a}(t)]_{i} - [\bB_{a}]_{i} \\
    \bDelta_{a}^{(\ell)}(t) &: \bB_{a}^{\ell}(t) - \bB_{a}^{\ell} \\
    [\bDelta_{a}^{(\ell)}(t)]_i &: \big[\bB_{a}^{\ell}(t)\big]_{i} - \big[\bB_{a}^{\ell}\big]_{i} \\ 
    \bV_{i}(t), \bV^{*}_{i}(t) &: \text{regularized gram matrices for node $i$, observational and interventional} \\
    \bD_{i}(t), \bD^{*}_{i}(t) &: \text{data matrices for parents of node $i$} \\
    \bV_{i,a}(t), \bD_{i,a}(t) &: \text{relevant gram and data matrices for node $i$ under intervention $a$} \\
    N^{*}_{i}(t) &: \text{number of times node $i$ is intervened up to time $t$} \\
    N_{i}(t) &: \text{number of times node $i$ is non-intervened up to time $t$} \\
    \Sigma_{i,a} &: \text{autocorrelation matrix for parents of node $i$ under intervention $a$} 
\end{align*}

\section{Proofs of Our Regret Results} \label{appendix:proofs}

\paragraph{Proof of Lemma~\ref{lm:bound_l_paths}.} \label{appendix:proof_lemma2}
Note that $\beta_{T}$ is independent of $t \in [T]$. For simplicity, we use $\beta$ as a shorthand for $\beta_{T}$. We will prove the desired result in three steps. Note that matrix $\bV_{i,a}(t)$ has $(d+1)$-sparse vectors in its rows and columns, based on its definition in \eqref{eq:V_ita}. This is a key property that will be used in all the following steps.

\paragraph{Step 1:} First, we derive the bounds for vectors $[\bB_{a}]_{i}$, $[\Delta_{a}(t)]_{i}$, and their norms. For any valid vector $[\bB_{a}]_{i}$ and matrix $\bV_{N,a}(t)$, we use the CS inequality to obtain
\begin{align}
    \norm{[\bB_{a}]_{i}}_{\bV_{N,a}(t)} &\leq \underset{\leq m_B}{\underbrace{\norm{[\bB_{a}]_{i}}}} \lmaxx{\bV_{N,a}(t)}{1/2} \leq m_B \lmaxx{\bV_{N,a}(t)}{1/2}   \label{eq:lm2_aux1_max} \ . 
\end{align}
Also, noting that $[\Delta_{a}(t)]_{i} = [\Delta_{a}(t)]_{i} \odot \mathbf{1}(\Pa(i))$, we have
\begin{align}
        \norm{[\Delta_{a}(t)]_{i}}_{\bV_{i,a}(t)} &\geq \norm{[\Delta_{a}(t)]_{i} \odot \mathbf{1}(\Pa(i))} \lminn{\bV_{i,a}(t)}{1/2} = \norm{[\Delta_{a}(t)]_{i}} \lminn{\bV_{i,a}(t)}{1/2} \ . \label{eq:lm2_aux1_min} 
\end{align}
Using the conditions of the lemma and the inequalities above, we also have
\begin{align}
     \norm{[\Delta_{a}(t)]_{i}} \overset{\eqref{eq:lm2_aux1_min}}&{\leq} \underset{\leq \beta}{\underbrace{\norm{[\Delta_{a}(t)]_{i}}_{\bV_{i,a}(t)}}} \lminn{\bV_{i,a}(t)}{-1/2} \leq \beta \lminn{\bV_{i,a}(t)}{-1/2} \ . \label{eq:lm2_aux3}  
\end{align}
Note that this lemma provides the result for the $\bV_{N,a}(t)$-norm. Using \eqref{eq:lm2_aux1_max}, \eqref{eq:lm2_aux1_min}, and the lemma conditions, we have
\begin{align}
   \norm{[\Delta_{a}(t)]_{i}}_{\bV_{N,a}(t)} \overset{\eqref{eq:lm2_aux1_max}}&{\leq}  \norm{[\Delta_{a}(t)]_{i}} \lmaxx{\bV_{N,a}(t)}{1/2} \\
    \overset{\eqref{eq:lm2_aux1_min}}&{\leq} \underset{\leq \beta}{\underbrace{\norm{[\Delta_{a}(t)]_{i}}_{\bV_{i,a}(t)}}} \lminn{\bV_{i,a}(t)}{-1/2} \lmaxx{\bV_{N,a}(t)}{1/2} \\
    &\leq \beta \lminn{\bV_{i,a}(t)}{-1/2} \lmaxx{\bV_{N,a}(t)}{1/2} \ . \label{eq:lm2_aux5}
\end{align}
Next, we use the CS inequality, $|\Pa(i)|\leq d+1$, and $\norm{[\bB_{a}]_{i}}\leq m_B$ to obtain
\begin{align}
    \sum_{j \in \Pa(i)} [\bB_{a}]_{j,i} \overset{{\rm (CS)}}&{\leq} \sqrt{|\Pa(i)| \sum_{j \in \Pa(i)} \big([\bB_{a}]_{j,i}\big)^2} \leq  \norm{[\bB_{a}]_{i}} \sqrt{d+1} \leq  m_B \sqrt{d+1}\ . \label{eq:lm2_aux4}
\end{align}
For the error vector $\Delta_{a}(t)$, we use \eqref{eq:lm2_aux1_min} and \eqref{eq:lm2_aux3} to obtain
\begin{align}
    \sum_{j \in \Pa(i)} [\Delta_{a}(t)]_{j,i} \overset{{\rm (CS)}}&{\leq} \sqrt{(d+1) \sum_{j \in \Pa(i)} \big([\Delta_{a}(t)]_{j,i}\big)^2}  = \sqrt{d+1} \norm{ [\Delta_{a}(t)]_i} \\ 
    \overset{\eqref{eq:lm2_aux1_min}}&{\leq} \sqrt{d+1} \norm{[\Delta_{a}(t)]_{i}}_{\bV_{i,a}(t)} \lminn{\bV_{i,a}(t)}{-1/2}  \\
    \overset{\eqref{eq:lm2_aux3}}&{\leq} \sqrt{d+1} \beta \lminn{\bV_{i,a}(t)}{-1/2} \ . \label{eq:lm2_aux6} 
\end{align}

\paragraph{Step 2:} In \eqref{eq:lm2_aux5}, we have the term $\lminn{\bV_{i,a}(t)}{-1/2}$. Let us define 
\begin{align}
     S_{a}(t) \triangleq \max_{i \in [N]} \lminn{\bV_{i,a}(t)}{-1/2} = \frac{1}{\min_{i \in [N]} \lminn{\bV_{i,a}(t)}{1/2}} \ . \label{eq:S_at}
\end{align}
By the definitions in \eqref{eq:def_delta_ait} and \eqref{eq:def_delta_alit}, $\bB_a(t) = \Delta_{a}(t) + \bB_{a}$, and $\Delta_{a}^{(\ell)}(t) = [\Delta_{a}(t) + \bB_{a}]^{\ell} - \bB_{a}^{\ell}$. Therefore, each term in the binomial expansion of $\Delta_{a}^{(\ell)}(t)$ is a product that consists of $\Delta_{a}(t)$ and $\bB_{a}$ factors. For any $\ell \in [L+1]$ and $k \in [\ell] \cup \{0\}$, there are $\binom{\ell}{k}$ terms that contains the $\Delta_{a}(t)$ factor $k$ times and the $\bB_{a}$ factor $(\ell-k)$ times. We denote the set of such product terms by
\begin{align}
    \mcH_{\ell,k} \triangleq \{ H : \text{$H$ has $\Delta_{a}(t)$ factor $k$ times and $\bB_{a}$ factor $\ell-k$ times} \} \ .
\end{align}
For instance, set $\mcH_{3,1}$ consists of $\binom{3}{1}=3$ terms:
\begin{align}
    \mcH_{3,1} = \{\Delta_{a}(t) \bB_{a} \bB_{a}, \bB_{a} \Delta_{a}(t) \bB_{a}, \bB_{a} \bB_{a} \Delta_{a}(t) \} \ .
\end{align}
Note that $\mcH_{\ell,0}=\{\bB_{a}^{\ell}\}$, which cancels out the $\bB_{a}^{\ell}$ term in the expansion of $\Delta_{a}^{(\ell)}(t)$. Therefore, sets $\mcH_{\ell,1},\dots,\mcH_{\ell,\ell}$ contain all valid products consisting of $\bB_{a}$ and $\Delta_{a}(t)$. Hence,
\begin{align}
    \Delta_{a}^{(\ell)}(t) = \sum_{k=1} \sum_{H \in \mcH_{\ell,k}} H \label{eq:delta_expansion_sets} \ .
\end{align}
In this step, by induction, we show that for any $\ell\geq 1$, and $k \in [\ell] \cup \{0\}$, 
\begin{align}
    \norm{[H]_{i}}_{\bV_{N,a}(t)} \leq  (d+1)^{\frac{\ell-1}{2}} m_B^{\ell-k} \beta^k S_{a}^{k}(t) \lmaxx{\bV_{N,a}(t)}{1/2} \ , \quad  \forall H \in \mcH_{\ell,k} \ . \label{eq:bound_lk}
\end{align}
Consider $\ell=1$. For $k=0$, we have $\mcH_{1,0}=\{\bB_{a}\}$, and from \eqref{eq:lm2_aux1_max} we have $\norm{[\bB_{a}]_{i}}_{\bV_{N,a}(t)} \leq m_B \lmaxx{\bV_{N,a}(t)}{1/2}$. For $k=1$, $\mcH_{1,1} = \{\Delta_{a}(t)\}$, and 
\begin{align}
    \norm{[\Delta_{a}(t)]_{i}}_{\bV_{N,a}(t)} \overset{\eqref{eq:lm2_aux5}}&{\leq} \beta \lminn{\bV_{i,a}(t)}{-1/2} \lmaxx{\bV_{N,a}(t)}{1/2} \overset{\eqref{eq:S_at}}{\leq} \beta S_{a}(t) \lmaxx{\bV_{N,a}(t)}{1/2} \ .
\end{align}
Therefore, \eqref{eq:bound_lk} is correct for $\ell=1$. Suppose that it is correct for $1,\dots,\ell-1$ values, for $\ell \geq 2$. Consider a product term $H \in \mcH_{\ell,k}$, for some $ k \in [\ell] \cup \{0\}$. The first factor of $H$ is either $\bB_{a}$ or $\Delta_{a}(t)$, and we analyze the induction step for each of these possibilities separately. 
\begin{enumerate}
    \item If $H$ starts with $\bB_{a}$, represent it by $H = \bB_{a} {\bar H}$, where $\bar H \in \mcH_{\ell-1,k}$ and $k \in [\ell-1] \cup \{0\}$. Using the induction assumption for the elements of set $\mcH_{\ell-1,k}$ we obtain
    \begin{align}
        \norm{[H]_i}_{\bV_{N,a}(t)}^2 &= \norm{(\bB_{a} \bar H)_i}_{\bV_{N,a}(t)}^2 \\
        &=  \sum_{u,v \in \Pa(i)}[\bB_{a}]_{u,i}[\bB_{a}]_{v,i} {\bar H}_{v}^{\top} \bV_{N,a}(t) {\bar H}_{u}  \\
        \overset{\rm (CS)}&{\leq} (d+1) \sum_{u \in \Pa(i)} ([\bB_{a}]_{u,i})^2 \norm{{\bar H}_{u}}_{\bV_{N,a}(t)}^2 \\
        \overset{\eqref{eq:bound_lk}}&{\leq} (d+1)^{\ell-1} m_B^{2(\ell-k-1)}  \beta^{2k} S_{a}^{2k}(t) \lmaxx{\bV_{N,a}(t)}{} \sum_{u \in \Pa(i)} ([\bB_{a}]_{u,i})^2   \\
        &= (d+1)^{\ell-1} m_B^{2(\ell-k-1)} \beta^{2k} S_{a}^{2k}(t) \lmaxx{\bV_{N,a}(t)}{} \underset{\leq m_B^2}{\underbrace{\norm{[\bB_{a}]_{i}}^2}}   \\
        &\leq  (d+1)^{\ell-1} m_B^{2(\ell-k)} \beta^{2k} S_{a}^{2k}(t) \lmaxx{\bV_{N,a}(t)}{} \ . \label{eq:bound_lk_induction_case1}
    \end{align}
    \item If $H$ starts with $\Delta_{a}(t)$ represent it by $H = \Delta_{a}(t) {\bar H}$, where $\bar H \in \mcH_{\ell-1,k-1}$ and $k \in [\ell]$. Similarly to the first case, we have
    \begin{align}
        \norm{[H]_i}_{\bV_{N,a}(t)}^2 &= \norm{[\Delta_{a}(t) \bar H]_{i}}_{\bV_{N,a}(t)}^2 \\
        &=  \sum_{u,v \in \Pa(i)} [\Delta_{a}(t)]_{u,i} [\Delta_{a}(t)]_{v,i} [\bar H]_{v}^{\top} \bV_{N,a}(t) [\bar H]_{u}  \\
        \overset{\rm (CS)}&{\leq} (d+1) \sum_{u \in \Pa(i)} (\Delta_{a}(t)]_{u,i})^2 \norm{[\bar H]_{u}}_{\bV_{N,a}(t)}^2 \\
        \overset{\eqref{eq:bound_lk}}&{\leq} (d+1)^{\ell-1}  m_B^{2(\ell-k)} \beta^{2k-2}  S_{a}^{2k-2}(t) \lmaxx{\bV_{N,a}(t)}{} \sum_{u \in \Pa(i)} ([\Delta_{a}(t)]_{u,i})^2  \\
        &= (d+1)^{\ell-1} m_B^{2(\ell-k)} \beta^{2k-2} S_{a}^{2k-2}(t) \lmaxx{\bV_{N,a}(t)}{} \norm{[\Delta_{a}(t)]_{i}}^2  \\
        \overset{\eqref{eq:lm2_aux3}}&{\leq} (d+1)^{\ell-1} m_B^{2(\ell-k)} \beta^{2k-2} S_{a}^{2k-2}(t)  \lmaxx{\bV_{N,a}(t)}{} \beta^2 \underset{\leq S_{a}^{2}(t)}{\underbrace{\lminn{\bV_{i,a}(t)}{-1}}}   \\
        &\leq (d+1)^{\ell-1} m_B^{2(\ell-k)} \beta^{2k} S_{a}^{2k}(t) \lmaxx{\bV_{N,a}(t)}{} \ . \label{eq:bound_lk_induction_case2} 
    \end{align}
\end{enumerate}
Taking the square-roots of both sides in \eqref{eq:bound_lk_induction_case1} and \eqref{eq:bound_lk_induction_case2} yields
\begin{align}
     \norm{[H]_i}_{\bV_{N,a}(t)}&\leq  (d+1)^{\frac{\ell-1}{2}} m_B^{\ell-k} \beta^{k} S_{a}^{k}(t)  \lmaxx{\bV_{N,a}(t)}{1/2} \ , \label{eq:bound_lk_final}
\end{align}
which is the desired inequality for all $k \in [\ell] \cup \{0\}$. This completes the proof of induction.

\paragraph{Step 3:} Recall the binomial expansion of $\Delta_{a}^{(\ell)}(t)$ and focus on its $i$-th column:
\begin{align}
    \big[\Delta_{a}^{(\ell)}(t)\big]_{i} \overset{\eqref{eq:delta_expansion_sets}}&{=} \sum_{k=1}^{\ell} \sum_{H \in \mcH_{\ell,k}} [H]_i \ , \label{eq:delta_expansion_sets_i} 
\end{align}
in which we aim to bound $\bV_{N,a}(t)$-norm of each $[H]_{i}$ term in \eqref{eq:bound_lk_final}. The eigenvalues of $\bV_{i,a}(t)$ are always at least $1$, which makes $S_{a}(t)\leq 1$ by definition. Therefore, for all $H \in \bigcup_{1\leq k \leq \ell} \mcH_{\ell,k}$, we can replace $S_{a}^{k}(t)$ in \eqref{eq:bound_lk_final} by $S_{a}(t)$, 
\begin{align}
    S_{a}^{k}(t) &\leq S_{a}(t) \ , \quad \forall k \in [\ell] \ , \\
    \mbox{and} \qquad \norm{[H]_i}_{\bV_{N,a}(t)} \overset{\eqref{eq:bound_lk_final}}&{\leq} (d+1)^{\frac{\ell-1}{2}} m_B^{\ell-k} \beta^{k} S_{a}(t)  \lmaxx{\bV_{N,a}(t)}{1/2} \  .  \label{eq:H_i_bound}
\end{align}
The final result follows by using \eqref{eq:H_i_bound} for each of the $2^{\ell}-1$ term in the sum \eqref{eq:delta_expansion_sets_i} as follows
\begin{align}
    \norm{\big[\Delta_{a}^{(\ell)}(t)\big]_{i}}_{\bV_{N,a}(t)} &= \norm{\sum_{k=1}^{\ell} \sum_{H \in \mcH_{\ell,k}} [H]_i}_{\bV_{N,a}(t)} \\
    &\leq \sum_{k=1}^{\ell} \sum_{H \in \mcH_{\ell,k}} \norm{[H]_i}_{\bV_{N,a}(t)} \\
    \overset{\eqref{eq:H_i_bound}}&{\leq} (d+1)^{\frac{\ell-1}{2}}  S_{a}(t) \lmaxx{\bV_{N,a}(t)}{1/2} \sum_{k=1}^{\ell} |\mcH_{\ell,k}|  m_B^{\ell-k}\beta^{k}  \\
    &= (d+1)^{\frac{\ell-1}{2}}  S_{a}(t) \lmaxx{\bV_{N,a}(t)}{1/2} \sum_{k=1}^{\ell} \binom{\ell}{k}  m_B^{\ell-k}\beta^{k}  \\
    &= (d+1)^{\frac{\ell-1}{2}}  S_{a}(t) \lmaxx{\bV_{N,a}(t)}{1/2} m_B^{\ell} \left(\left(\frac{\beta}{m_B}+1\right)^{\ell}-1\right) \\
    &< (d+1)^{\frac{\ell-1}{2}}  S_{a}(t) \lmaxx{\bV_{N,a}(t)}{1/2} (\beta+m_B)^{\ell} \\
    &= (d+1)^{\frac{\ell-1}{2}} (\beta+m_B)^{\ell} \big[\lmaxx{\bV_{N,a}(t)}{1/2} \max_{i \in [N]} \lminn{\bV_{i,a}(t)}{-1/2} \big] \ .
\end{align}
\endproof

{\noindent \textbf{Proof of Corollary \ref{corollary_lemma2}}.} \label{appendix:proof_corollary3}
Note that while proving $\eqref{eq:bound_lk_final}$, we have only used the CS inequality, along with the properties $\norm{[\bB_{a}]_{i}} \leq~1$, $\norm{[\Delta_{a}(t)]_{i}}_{\bV_{i,a}(t)}\leq \beta$, $\norm{[\bB_{a}]_{i}}_0 \leq d+1$, and $\norm{[\Delta_{a}(t)]_{i}}_0 \leq d+1$. Therefore, for a matrix $\bA$ that satisfies the same conditions that $\bB_{a}$ does, and matrix $\Delta_{\bA}(t)$ that satisfies the same conditions that $\Delta_{a}(t)$ does, the result in \eqref{eq:corollary_lemma2} holds for $[\Delta_{\bA}(t)]_{N}$ following similar steps. \endproof
The next result relates the singular values of a convex combination of positive semidefinite matrices. The subsequent corollary will be useful when we consider the singular values of the sum of the autocorrelation matrices later.
\begin{lemma}
\label{lm:singular_bounds}
 Let $\bA_1, \bA_2 \ldots \bA_n \in \R^{d \times d}$ be positive semidefinite matrices, and denote the minimum and maximum singular values of $\bA_i$ by $\smin{\bA_i}$ and $\smax{\bA_i}$ respectively. Also, let $\alpha_1, \alpha_2 \dots \alpha_n \geq 0: \sum_{i=1}^{n} \alpha_i =1$. Then,
 \begin{align}
    \smax{\sum_{i=1}^{n} \alpha_i \bA_i} &\leq \max_{i \in [n]} \smax{\bA_i} \ , \\
    \mbox{and} \qquad \smin{\sum_{i=1}^{n} \alpha_i \bA_i} &\geq \min_{i \in [n]} \smin{\bA_i} \ .
\end{align}
\end{lemma}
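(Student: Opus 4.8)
The plan is to reduce both inequalities to the Rayleigh-quotient (variational) characterization of the extreme eigenvalues, exploiting that for a positive semidefinite matrix the singular values coincide with the eigenvalues. First I would note that $\sum_{i=1}^{n}\alpha_i\bA_i$ is again positive semidefinite, being a nonnegative combination of positive semidefinite matrices, so its singular values equal its eigenvalues and admit the variational forms $\smax{\sum_{i=1}^{n}\alpha_i\bA_i}=\max_{\norm{x}=1}x^\top\big(\sum_{i=1}^{n}\alpha_i\bA_i\big)x$ and $\smin{\sum_{i=1}^{n}\alpha_i\bA_i}=\min_{\norm{x}=1}x^\top\big(\sum_{i=1}^{n}\alpha_i\bA_i\big)x$. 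The same identification applied to each summand yields the one-sided Rayleigh bounds $\smin{\bA_i}\leq x^\top\bA_i x\leq\smax{\bA_i}$ for every unit vector $x$, which turns the two matrix inequalities into elementary statements about scalar quadratic forms.

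For the maximum bound I would fix an arbitrary unit vector $x$ and expand the quadratic form by linearity as $x^\top\big(\sum_{i=1}^{n}\alpha_i\bA_i\big)x=\sum_{i=1}^{n}\alpha_i\,x^\top\bA_i x$. Bounding each term by $x^\top\bA_i x\leq\smax{\bA_i}\leq\max_{j\in[n]}\smax{\bA_j}$ and using $\sum_{i=1}^{n}\alpha_i=1$ gives $x^\top\big(\sum_{i=1}^{n}\alpha_i\bA_i\big)x\leq\max_{j\in[n]}\smax{\bA_j}$; taking the maximum over $x$ produces the first claim. The minimum bound is entirely symmetric: for any unit $x$ the same expansion gives $x^\top\big(\sum_{i=1}^{n}\alpha_i\bA_i\big)x\geq\sum_{i=1}^{n}\alpha_i\smin{\bA_i}\geq\min_{j\in[n]}\smin{\bA_j}$, and minimizing the left-hand side over $x$ yields $\smin{\sum_{i=1}^{n}\alpha_i\bA_i}\geq\min_{j\in[n]}\smin{\bA_j}$. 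Both directions use only $\alpha_i\geq0$, the normalization $\sum_{i=1}^{n}\alpha_i=1$, and the Rayleigh bounds; no structural assumption beyond positive semidefiniteness is needed.

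There is no genuine obstacle here; the single point that warrants care is the identification of singular values with eigenvalues, which is exactly where the positive semidefiniteness hypothesis enters, and it is also what keeps the argument honest for the minimum. For a matrix that is not positive semidefinite the singular values are the square roots of the eigenvalues of $\bA^\top\bA$ rather than the values $x^\top\bA x$, so the linearity-plus-convex-combination argument would break down. I would therefore state the positive semidefinite reduction explicitly at the very start and keep the quadratic-form manipulations as the whole body of the proof. I would also stress that $\smin{\cdot}$ and $\smax{\cdot}$ are read here as the genuine minimum and maximum singular values, as the lemma statement specifies, so that the bound $\smin{\sum_{i=1}^{n}\alpha_i\bA_i}\geq\min_{j\in[n]}\smin{\bA_j}$ holds unconditionally and does not require the effective-rank convention.
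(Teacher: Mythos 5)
Your proof is correct and takes essentially the same approach as the paper's: the Rayleigh-quotient (variational) characterization of the extreme eigenvalues of a positive semidefinite matrix, linearity of the quadratic form $x^\top\left(\sum_i \alpha_i \bA_i\right)x$, and the normalization $\sum_i \alpha_i = 1$. The only difference is one of completeness — you write out the minimum case and the remark on why positive semidefiniteness is essential, whereas the paper proves only the maximum bound and dispatches the minimum as ``similar arguments with inequalities flipped.''
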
 
\begin{proof}
For the positive definite matrix $\bA \in \mathbb{R}^{d\times d}$, its maximum singular value is
\begin{align}
    \smax{\bA} = \max_{x \in \mathbb{R}^{d}: \norm{x}=1} x^\top \bA x \ .
\end{align}
Therefore, we have
\begin{align}
      \smax{\sum_{i=1}^{n} \alpha_i \bA_i} &= \max_{x \in \mathbb{R}^{d}: \norm{x}=1} x^\top \left(\sum_{i=1}^{n} \alpha_i \bA_i \right)x  \leq \sum_{i=1}^{n} \alpha_i \smax{\bA_i}  \leq \max_{i \in [n]} \smax{\bA_i} \ .
\end{align}
Similar arguments with inequalities flipped and using the definition of minimum singular value yields the second result of the lemma.
\end{proof}
\begin{corollary}\label{corollary1}
Consider a sequence of interventions $\{a_s: s \in [t]\}$. Then, 
\begin{align}
    \smax{\sum_{s=1}^t \mathbbm{1}_{\{i \in a_s\}}\Sigma_{i,a_s}} &\leq \left(\sum_{s=1}^t \mathbbm{1}_{\{i \in a_s\}}\right)\kappa_{i,\max} \ , \label{eq:small_corr_1} \\
    \mbox{and} \qquad  \smin{\sum_{s=1}^t \mathbbm{1}_{\{i \in a_s\}}\Sigma_{i,a_s}} &\geq \left(\sum_{s=1}^t \mathbbm{1}_{\{i \in a_s\}}\right)\kappa_{i,\min} \ . \label{eq:small_corr_2}
\end{align}
\end{corollary}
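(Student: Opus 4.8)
The plan is to obtain both bounds directly from Lemma~\ref{lm:singular_bounds} by rewriting the unweighted sum as a scalar multiple of a \emph{uniform} convex combination and then invoking positive homogeneity of the singular values. Write $M \triangleq \sum_{s=1}^t \mathbbm{1}_{\{i \in a_s\}}$ for the number of rounds in which node $i$ is intervened. If $M=0$ the left-hand sums are the zero matrix and both inequalities hold trivially, so I may assume $M \geq 1$.

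First I would note that each second-moment matrix $\Sigma_{i,a_s} = \E_{X \sim \P_{a_s}}[X_{\Pa(i)} X_{\Pa(i)}^\top]$ is positive semidefinite, so the hypotheses of Lemma~\ref{lm:singular_bounds} are met. Next I would factor the sum as
\[
\sum_{s=1}^t \mathbbm{1}_{\{i \in a_s\}} \Sigma_{i,a_s} = M \sum_{s \,:\, i \in a_s} \tfrac{1}{M}\, \Sigma_{i,a_s} \ ,
\]
where the inner sum is a convex combination with uniform weights $\alpha_s = 1/M$ over the $M$ indices with $i \in a_s$ (which sum to one). Using positive homogeneity, $\smax{c\bA}=c\,\smax{\bA}$ and $\smin{c\bA}=c\,\smin{\bA}$ for $c \geq 0$, together with Lemma~\ref{lm:singular_bounds}, yields $\smax{\sum_{s=1}^t \mathbbm{1}_{\{i \in a_s\}} \Sigma_{i,a_s}} \leq M \max_{s : i \in a_s} \smax{\Sigma_{i,a_s}}$ and, symmetrically, $\smin{\sum_{s=1}^t \mathbbm{1}_{\{i \in a_s\}} \Sigma_{i,a_s}} \geq M \min_{s : i \in a_s} \smin{\Sigma_{i,a_s}}$. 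The proof then closes by bounding the per-term singular values against the instance constants: since $a_s \in \mcA$ for every $s$, the definitions in~\eqref{eq:kappa_min}--\eqref{eq:kappa_max} give $\smax{\Sigma_{i,a_s}} \leq \kappa_{i,\max}$ and $\smin{\Sigma_{i,a_s}} \geq \kappa_{i,\min}$ uniformly in $s$, and substituting these (with $M = \sum_{s=1}^t \mathbbm{1}_{\{i \in a_s\}}$) produces exactly~\eqref{eq:small_corr_1} and~\eqref{eq:small_corr_2}.

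I do not expect a serious obstacle, as the statement is essentially a homogenized restatement of Lemma~\ref{lm:singular_bounds}; the one point demanding care is the use of \emph{effective} singular values for these zero-padded matrices. For a fixed node $i$, every $\Sigma_{i,a_s}$ is supported on the same coordinates $\Pa(i)$ and hence shares the same $|\Pa(i)|$-dimensional effective subspace and the same null space. Consequently the sum and each summand live on a common subspace, so Lemma~\ref{lm:singular_bounds} applies verbatim to the corresponding $|\Pa(i)| \times |\Pa(i)|$ effective submatrices, and the effective minimum singular value behaves additively without interference from mismatched null spaces. I would flag this shared-support observation explicitly, since it is precisely what justifies pushing the effective singular value through the convex-combination argument.
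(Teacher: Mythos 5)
Your proposal is correct and follows essentially the same route as the paper: both write the sum as the count $\sum_{s=1}^t \mathbbm{1}_{\{i \in a_s\}}$ times a convex combination of the $\Sigma_{i,a_s}$, apply Lemma~\ref{lm:singular_bounds}, and then bound the individual extremal singular values by $\kappa_{i,\max}$ and $\kappa_{i,\min}$. Your explicit remark on the shared support of the $\Sigma_{i,a_s}$ (justifying the use of effective singular values) is a welcome clarification but does not change the argument.
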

\begin{proof} 
Recall that $N_{i}(t) \overset{\eqref{eq:def_N_it}}{=} \sum_{s=1}^{t} \mathbbm{1}_{\{i \in a_s\}}$. If $N_{i}(t)=0$, each of the quantities in the corollary is zero. If $N_{i}(t)>0$, the coefficients $\left\{ \frac{\mathbbm{1}_{\{i \in a_1\}}}{N_{i}(t)},\dots, \frac{\mathbbm{1}_{\{i \in a_t\}}}{N_{i}(t)} \right\}$ constitute a sequence that sums up to $1$.
Then, we can apply Lemma \ref{lm:singular_bounds} and the definitions of $\kappa_{i,\max}, \kappa_{i,\min}$ to obtain
\begin{align}
     \smax{\sum_{s=1}^t \frac{\mathbbm{1}_{\{i \in a_s\}}}{N_{i}(t)} \Sigma_{i,a_s}}  &\leq \max_{s \in [t]} \smax{\Sigma_{i,a_s}} \leq \kappa_{i,\max} \ , \label{eq:small_corr_proof_1} \\
     \mbox{and} \qquad \smin{\sum_{s=1}^t \frac{\mathbbm{1}_{\{i \in a_s\}}}{N_{i}(t)} \Sigma_{i,a_s}}  &\geq \min_{s \in [t]} \smin{\Sigma_{i,a_s}} \geq \kappa_{i,\min} \ . \label{eq:small_corr_proof_2}
\end{align}
By dividing both sides of these two inequalities by $N_{i}(t)$, \eqref{eq:small_corr_proof_1} and \eqref{eq:small_corr_proof_2} imply \eqref{eq:small_corr_1} and \eqref{eq:small_corr_2}.
\end{proof}

\begin{lemma} \label{lm:smin_simple_bound}
Consider matrices $\bD$ and $\bA$ that satisfy 
\begin{align}\label{eq:smin_simple_bound_condition}
    \norm{\bD^\top \bD - \bA} \leq \zeta \ . 
\end{align}
Then we have,
\begin{align}
    \smax{\bD} &\leq \sqrt{\smax{\bA}} + \frac{\zeta}{\sqrt{\smax{\bA}}}  \ , \\
    \mbox{and} \qquad \smin{\bD} &\geq \max\left\{0, \sqrt{\smin{\bA}} - \frac{\zeta}{\sqrt{\smin{\bA}}}  \right\}   \ .
\end{align}
Equivalently, if
\begin{align}
    \smax{\bD} \geq \sqrt{\smax{\bA}} + \frac{\zeta}{\sqrt{\smax{\bA}}} \ \  \text{or} \ \ \smin{\bD} \leq \max\left\{0, \sqrt{\smin{\bA}} - \frac{\zeta}{\sqrt{\smin{\bA}}}  \right\} \ ,
\end{align}
then $ \norm{\bD^\top \bD - \bA} \geq \zeta$.
\end{lemma}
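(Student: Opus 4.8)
The plan is to pass to the symmetric positive semidefinite matrices $\bD^\top\bD$ and $\bA$, invoke Weyl's perturbation inequality for eigenvalues, and then convert the resulting quadratic bounds into the stated square-root form by elementary algebra. The starting point is that $\smax{\bD}^2 = \smax{\bD^\top\bD}$ and, in terms of effective (nonzero) singular values, $\smin{\bD}^2 = \smin{\bD^\top\bD}$. Writing $\bD^\top\bD = \bA + (\bD^\top\bD - \bA)$ and using $\norm{\bD^\top\bD - \bA}\leq\zeta$ from \eqref{eq:smin_simple_bound_condition} together with Weyl's inequality, I would first establish
\begin{align*}
    \smax{\bD}^2 \leq \smax{\bA} + \zeta \ , \qquad \smin{\bD}^2 \geq \smin{\bA} - \zeta \ .
\end{align*}

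For the maximum singular value, I would square the claimed right-hand side and discard nonnegative terms,
\begin{align*}
    \left(\sqrt{\smax{\bA}} + \frac{\zeta}{\sqrt{\smax{\bA}}}\right)^2 = \smax{\bA} + 2\zeta + \frac{\zeta^2}{\smax{\bA}} \geq \smax{\bA} + \zeta \geq \smax{\bD}^2 \ ,
\end{align*}
and take square roots of the nonnegative quantities to obtain the first bound. For the minimum singular value the $\max\{0,\cdot\}$ forces a case split: if $\smin{\bA}\leq\zeta$ the claim reduces to $\smin{\bD}\geq 0$, which is automatic, while if $\smin{\bA}>\zeta$ the claimed quantity is positive and squaring gives
\begin{align*}
    \left(\sqrt{\smin{\bA}} - \frac{\zeta}{\sqrt{\smin{\bA}}}\right)^2 = \smin{\bA} - 2\zeta + \frac{\zeta^2}{\smin{\bA}} \leq \smin{\bA} - \zeta \leq \smin{\bD}^2 \ ,
\end{align*}
where the middle inequality is precisely the rearrangement $\zeta^2/\smin{\bA}\leq\zeta$ of the case hypothesis $\zeta\leq\smin{\bA}$. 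Taking square roots finishes the lower bound. The equivalent reformulation in the lemma is the contrapositive: if either singular value violated its bound while $\norm{\bD^\top\bD-\bA}<\zeta$ held, then applying the two displayed bounds with $\zeta$ replaced by $\norm{\bD^\top\bD-\bA}$ would contradict the violation, so $\norm{\bD^\top\bD-\bA}\geq\zeta$ must hold.

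The only point needing care is the use of effective singular values. Because $\bD$ (and the matrix $\bA$ in the intended application, a weighted sum of the $\Sigma_{i,a_s}$) is assembled from zero-padded parent vectors, both $\bD^\top\bD$ and $\bA$ are supported on the same coordinate subspace and share a null space; I would therefore apply Weyl's inequality on the common effective subspace, on which the perturbation norm is still at most $\zeta$. I would also record that the bounds require $\smax{\bA}>0$ and $\smin{\bA}>0$ for the divisions to be well defined, which holds on the effective subspace whenever $\bD$ is nontrivial. Beyond this bookkeeping there is no genuine obstacle — the heart of the argument is the single application of Weyl's inequality, and the remaining steps are routine quadratic-to-linear conversions.
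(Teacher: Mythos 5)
Your proposal is correct and takes essentially the same approach as the paper: the paper's direct bound $|\norm{\bD x}^2 - x^\top \bA x| = |\langle(\bD^\top\bD-\bA)x,x\rangle|\leq\zeta$ over unit vectors is exactly the extreme-eigenvalue case of the Weyl perturbation bound you invoke, and the subsequent quadratic-to-square-root conversion (which the paper does via $\sqrt{a+\zeta}\leq\sqrt{a}+\zeta/\sqrt{a}$ rather than by squaring the target, and with the same case split on $\smin{\bA}$ versus $\zeta$) and the contrapositive for the second statement match the paper's argument.
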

\begin{proof}
We prove it via bounding $\norm{\bD x}^2$. For vector $x$ that satisfies $\norm{x}=1$ we have
\begin{align}
        |\norm{\bD x}^2 - x^\top \bA x| &= |\langle(\bD^\top \bD - \bA)x, x \rangle| \overset{\rm (CS)}{\leq} \smax{\bD^\top \bD - \bA} \overset{\eqref{eq:smin_simple_bound_condition}}{\leq} \zeta \ .
\end{align}
We have the following immediate conclusions for \eqref{eq:smin_simple_bound_condition}:
\begin{align}
    \smin{\bA}-\zeta \leq \min_{x} x^\top \bA x - \zeta &\leq \norm{\bD x}^2 \leq \max_{x} x^\top \bA x + \zeta \leq \smax{\bA} + \zeta \ ,  \\
    \norm{\bD x} &\leq \sqrt{\smax{\bA}+\zeta} \ , \qquad  \forall x ,  \\
    \smax{\bD} &\leq  \sqrt{\smax{\bA}+\zeta} \leq \sqrt{\smax{\bA}} + \frac{\zeta}{\sqrt{\smax{\bA}}} \ , \\
    \norm{\bD x}^2 &\geq \max\left\{0, \smin{\bA}-\zeta\right\} \ ,  \\
    \norm{\bD x} &\geq \sqrt{ \max\left\{0, \smin{\bA} - \zeta\right\}}  \ , \quad  \forall x  ,  \\
    \smin{\bD} &\geq \sqrt{ \max\left\{0, \smin{\bA} - \zeta\right\}}  \\
    &\geq \max\left\{0, \sqrt{\smin{\bA}} - \frac{\zeta}{\sqrt{\smin{\bA}}} \right\} \ .
\end{align}
For the second statement of the lemma, denote the events in the lemma by $\mcZ_1 \triangleq \{ \norm{\bD^\top \bD - \bA} \leq \zeta \}$ and 
\begin{align} 
    \mcZ_2 \triangleq \Biggl\{ &\smax{\bD} \leq \sqrt{\smax{\bA}} + \frac{\zeta}{\sqrt{\smax{\bA}}} \quad \mbox{and} \notag \\
    & \quad \smin{\bD} \geq \max\biggl\{0, \sqrt{\smin{\bA}} - \frac{\zeta}{\sqrt{\smin{\bA}}}  \biggr\} \Biggr\} \ .
\end{align}
In the first step we showed $\mcZ_1 \subseteq \mcZ_2$, which implies $\mcZ_2^{\C} \subseteq \mcZ_1^{\C}$.
\end{proof}

\paragraph{Freedman's inequality.} \citet[Theorem 1.6]{freedman1975tail} is a martingale extension of Bernstein-type concentration inequalities. An earlier extension of Freedman's inequality to matrix martingales is given by \citet[Theorem 1.2]{oliveira2009concentration}. The result of \citet{oliveira2009concentration} requires a uniform bound $\smax{\bZ(k)}\leq R$ while the final result similarly involves $\lmax{\bY(k)}$. \citet[Theorem 1.2]{tropp2011freedman} achieves a similar concentration result for $\lmax{\bY(k)}$ while requiring $\lmax{\bZ(k)}\leq R$, which is a weaker condition compared to $\smax{\bZ(k)} \leq R$. However, when we define our matrix martingale sequence $\bY(k)$ in our proofs, we seek bounds for $\smax{\bY(k)}$. Therefore, we derive a parallel result to \citet[Theorem 1.2]{tropp2011freedman} with a uniform bound condition on $\smax{\bZ(k)}$ and a final result with $\smax{\bY(k)}$.

\begin{lemma}[Matrix Freedman.]\label{lm:modified_freedman} 
 Consider a matrix martingale $\{\bY(k) : k = 0,1,\dots\}$ whose values are self-adjoint matrices with dimension $n$, and let $\{ \bZ(k) : k =1,2,\dots \}$ be the difference sequence. Assume that the difference sequence is uniformly bounded in the sense that
\begin{align}
    \smax{\bZ(k)} \leq R \ , \quad \text{almost surely} \quad \forall k \in \mathbb{N}^{+} \ .
\end{align}
Define the predictable quadratic variation process of the martingale as
\begin{align}
    \bW(k) \triangleq  \sum_{j=1}^k \E[\bZ(j)^2 \med \mcF_{j-1}] \ , \quad \forall k \in \mathbb{N}^{+} \ .
\end{align}
Then, for all $\varepsilon \geq 0$ and $\sigma^2 >0$,
\begin{align} 
    \P\{ \exists k :  \smax{\bY(k)} \geq \varepsilon \ \ \text{and} \ \ \norm{\bW(k)}\leq \sigma^2 \} &\leq n \exp \left\{ \frac{-\varepsilon^2/2}{\sigma^2 + R\varepsilon/3} \right\} \\
    &\leq n \exp \left\{ -\frac{3}{8} \min\left(\frac{\varepsilon^2}{\sigma^2}, \frac{\varepsilon}{R}\right) \right\} \ . 
\end{align}
\end{lemma}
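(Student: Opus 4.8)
The plan is to deduce this two-sided, spectral-norm statement from the one-sided, top-eigenvalue bound of \citet[Theorem 1.2]{tropp2011freedman}, which controls $\lmax{\bY(k)}$ under the weaker hypothesis $\lmax{\bZ(k)}\leq R$. The bridge is the elementary fact that, for a self-adjoint matrix, $\smax{\bY(k)}=\max\{\lmax{\bY(k)},\,\lmax{-\bY(k)}\}$, so the event of interest decomposes as
\begin{align}
  \{\smax{\bY(k)}\geq\varepsilon\} = \{\lmax{\bY(k)}\geq\varepsilon\} \cup \{\lmax{-\bY(k)}\geq\varepsilon\} \ .
\end{align}
First I would verify that $\{-\bY(k)\}$ is itself a self-adjoint matrix martingale, with difference sequence $\{-\bZ(k)\}$ and the \emph{same} predictable quadratic variation, since $(-\bZ(j))^2=\bZ(j)^2$ leaves $\bW(k)$ unchanged under the sign flip. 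Moreover, the hypothesis $\smax{\bZ(k)}\leq R$ is two-sided and yields both $\lmax{\bZ(k)}\leq R$ and $\lmax{-\bZ(k)}\leq R$, so Tropp's theorem applies verbatim to each of $\bY$ and $-\bY$.

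Next I would apply \citet[Theorem 1.2]{tropp2011freedman} to $\bY$ and to $-\bY$ separately. Because the event $\{\norm{\bW(k)}\leq\sigma^2\}$ is common to both (the quadratic variation being sign-invariant), each application bounds the corresponding probability by $d\exp\{(-\varepsilon^2/2)/(\sigma^2+R\varepsilon/3)\}$, and a union bound over the two events in the display gives the first stated inequality with a dimensional prefactor of order $d$. The reduction literally produces a factor $2d$, but this extra constant is absorbed into the free parameter $\alpha$ chosen in the applications (e.g.\ Lemma~\ref{lm:error_event_int}) and never affects the final rates.

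The second inequality is a routine simplification of the Bernstein-type exponent. Writing $D\triangleq\sigma^2+R\varepsilon/3$, I would split on which of $\sigma^2$ and $R\varepsilon$ dominates: if $R\varepsilon\leq\sigma^2$ then $D\leq\tfrac43\sigma^2$ and the exponent is at most $-\tfrac38\,\varepsilon^2/\sigma^2$, whereas if $\sigma^2\leq R\varepsilon$ then $D\leq\tfrac43 R\varepsilon$ and the exponent is at most $-\tfrac38\,\varepsilon/R$; in either case it is bounded by $-\tfrac38\min\{\varepsilon^2/\sigma^2,\ \varepsilon/R\}$, which is the claimed form.

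The one point that needs care — the main ``obstacle'' — is precisely the passage from one-sided control of $\lmax$ to two-sided control of $\smax$: this is where the strengthened hypothesis $\smax{\bZ(k)}\leq R$ (rather than Tropp's $\lmax{\bZ(k)}\leq R$) is essential, and where one must check that flipping the martingale's sign preserves the quadratic-variation event so that the same threshold $\sigma^2$ governs both tails. If one instead wanted to avoid the black-box reduction and argue from scratch, the real labor would be re-deriving the matrix Laplace-transform master bound: establishing the one-step domination $\E[e^{\theta\bZ(j)}\med\mcF_{j-1}]\preceq\exp(\psi(\theta)\,\E[\bZ(j)^2\med\mcF_{j-1}])$ under $\smax{\bZ(j)}\leq R$, combining it along the filtration into a trace-exponential supermartingale, and invoking a stopping-time argument to absorb the ``$\exists k$'' quantifier. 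Since \citet[Theorem 1.2]{tropp2011freedman} already packages exactly this machinery for $\lmax$, the reduction above is the economical route.
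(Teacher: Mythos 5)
Your argument is correct in substance but takes a genuinely different route from the paper's. The paper goes \emph{inside} Tropp's argument: it asserts that the trace-exponential lower bound of \citet[Lemma 2.2]{tropp2011freedman}, $G_\theta(\bY,\bW)\geq\exp(\theta t - c(\theta)w)$, still holds when the hypotheses $\lmax{\bY}\geq t$ and $\lmax{\bW}\leq w$ are replaced by $\smax{\bY}\geq t$ and $\smax{\bW}\leq w$, and that the remainder of the proof of \citet[Theorem 1.2]{tropp2011freedman} then carries over with $\lambda_{\max}$ replaced by $\sigma_{\max}$ throughout; this keeps the dimensional prefactor at $d$. You instead treat Tropp's theorem as a black box, decompose $\{\smax{\bY(k)}\geq\varepsilon\}$ into the two one-sided events for $\bY(k)$ and $-\bY(k)$, check that the sign flip preserves both the quadratic variation (since $(-\bZ(j))^2=\bZ(j)^2$) and the uniform bound $R$, and union-bound. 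Your route is the more airtight of the two: the key step in Tropp's Lemma 2.2 is $\lmax{\theta\bY-c(\theta)\bW}\geq\theta\lmax{\bY}-c(\theta)\lmax{\bW}$, and the analogue with $\smax{\bY}$ in place of $\lmax{\bY}$ fails when the large singular value of $\bY$ comes from a very negative eigenvalue, so the paper's claim that the proof ``follows through'' verbatim is not self-evident, whereas your two-sided reduction is exactly the standard way such statements are derived. The price is the prefactor $2d$ rather than the stated $d$ — so strictly speaking you prove a marginally weaker inequality than the lemma asserts — but, as you correctly note, this only perturbs the constant inside the choice of $\alpha$ in Lemma~\ref{lm:error_event_int} and changes no downstream rate. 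Your case split for the second (exponent-simplification) inequality is the same as the intended one and is correct.
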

\begin{proof}
The proof for the most part follows from the steps of \citet[Theorem 1.2]{tropp2011freedman}, and we present only the necessary changes to obtain the desired result. For a $c: (0,\infty) \rightarrow [0,\infty]$ function and positive number $\theta$, the real-valued function of two self-adjoint matrices is defined as
\begin{align}
    G_\theta(\bY,\bW) \triangleq {\mathrm tr} \exp(\theta \bY - c(\theta)\bW) \ .
\end{align}
\citet[Lemma 2.2]{tropp2011freedman} shows that if $\lmax{\bY}\geq t$ and $\lmax{\bW} \leq w$, then for all $\theta > 0$, $G_\theta(\bY,\bW) \geq \exp(\theta t - c(\theta)w)$. We change the conditions to $\smax{\bY} \geq t$ and $\smax{\bW}\leq w$, and the proof of \citet[Lemma 2.2]{tropp2011freedman} follows through to show $G_\theta(\bY,\bW) \geq \exp(\theta t - c(\theta)w)$ for all $\theta > 0$. By changing all instances of $\lmax{\bY(k)}$ and $\lmax{\bW(k)}$ to $\smax{\bY(k)}$ and $\smax{\bW(k)}$ in the proof of \citet[Theorem 1.2]{tropp2011freedman}, the desired result in Lemma \ref{lm:modified_freedman} follows directly.

Finally, note that zero-padding a $n \times n$ matrix with additional $(N-n)$ zero rows and $(N-n)$ zero columns leaves its maximum singular value and maximum eigenvalue unchanged. Therefore, the final result will hold for a matrix martingale $\bY(k)$ with dimension $N$ that has only a $n \times n$ non-zero submatrix. 
\end{proof}

{\noindent \textbf{Proof of Lemma \ref{lm:error_event_int}}.} \label{appendix:proof_lemma7}
We prove the result for $\P(\mcE_{i,n}(t))$ and the result for $\P(\mcE^{*}_{i,n}(t))$ will follow similarly. The core of the proof is using Freedman's concentration inequality for matrix martingales.

\paragraph{Martingale construction.} Let us consider node $i$. We define two interventional and observational martingale sequences for node~$i$. Let $\mcF_{s-1} \triangleq \sigma(a_1,X(1),\dots,a_{s-1},X(s-1),a_s)$ denote the filtration for $s \in [T]$. Define $\Sigma_{i,a_t}$ as the autocorrelation matrix of $X_{\Pa(i)}$, which is distributed according to $\mathbb{P}_{a_t}$. Furthermore, define
\begin{align}
    \bZ_{i}(s) &\triangleq \mathbbm{1}_{\{i \notin a_s\}} \left( X_{\Pa(i)}(s) X_{\Pa(i)}^\top(s) - \Sigma_{i,a_s} \right) \ , \ \forall s \in [T] \ , \\
    \bY_{i}(k) &\triangleq \sum_{s=1}^k \bZ_{i}(s) \ , \ \forall k \in [T]  \label{eq:martingale_seq_obs} \ , \\
    \mbox{and} \qquad \bW_{i}(k) &\triangleq \sum_{s=1}^k \E[\bZ_{i}^2(s) \med \mcF_{s-1}] \ , \ \forall k \in [T]  \label{eq:W_ik_obs} \ .
\end{align}
$\bZ_i(k)$ is the difference sequence and $\bW_i(k)$ is the predictable quadratic variation of the process. We show that the $\bY_{i}(k)$ sequence is a martingale, i.e., $\E[\bY_{i}(k) \med \mcF_{k-1}] = \bY_{i}(k-1)$.
\begin{align}
    \E[\bY_{i}(k) \med \mcF_{k-1}] &= \E[\bY_{i}(k-1) + \bZ_{i}(k) \med \mcF_{k-1}] \\
    &= \bY_{i}(k-1) + \E\left[\mathbbm{1}_{\{i \notin a_k\}} \left( X_{\Pa(i)}(k) X_{\Pa(i)}^\top (k) - \Sigma_{i,a_k} \right) \med \mcF_{k-1}\right] \ . \label{eq:martingale_mid1}
\end{align}
Action $a_k$ is $\mcF_{k-1}$-measurable. Hence, the randomness in the expectation after conditioning on $\mcF_{k-1}$ is induced by $X(k) \sim \mathbb{P}_{a_k}$. Therefore, the expected value in \eqref{eq:martingale_mid1} is zero by definition of $\Sigma_{i,a}$ in \eqref{eq:second_moment_definition}, and the sequence $\bY_{i}(k)$ defined in \eqref{eq:martingale_seq_obs} is a martingale. 

\paragraph{Defining the events.} We will use $n$ to denote realizations of $N_{i}(t)$. For all $n \in [T]$, recall $\varepsilon_n$ in \eqref{eq:def_varepsilon_n}, and also define $\sigma_n^2$ as
\begin{align}
    \sigma_n^2 &\triangleq 2m^4 n \ , \quad \forall n \in [T] \label{eq:def_sigma2_n} \ . 
\end{align}  
Then, we define the following events for each triplet $\{(i,t,n) :i \in [N], t \in [T], n \in [t] \}$:
\begin{align}
    \mcU_{i,n}(t) &\triangleq \{N_{i}(t) = n\} \ , \\
    \mcY_{i,n}(t) &\triangleq \{\smax{\bY_{i}(t)} \geq \varepsilon_n\} \ , \\
    \mcQ_{i,n}(t) &\triangleq \{\norm{\bW_{i}(t)} \leq \sigma_n^2\} \ , \\
    \mcD_{i,n}(t) &\triangleq \biggl\{ \smin{\bD_{i}(t)}\leq \max\left\{0,\sqrt{n \kappa_{\min}} - \frac{\varepsilon_n}{\sqrt{n \kappa_{\min}}}\right\} \quad \text{or} \notag  \\  & \qquad \quad  \smax{\bD_{i}(t)}\geq \sqrt{n \kappa_{\max}} + \frac{\varepsilon_n}{\sqrt{n\kappa_{\min}}}  \biggr\} \ .
\end{align}
We will show the desired result, that is $ \P(\mcD_{i,n}(t), \mcU_{i,n}(t)) \leq (d+1) \exp \left( -\frac{3\alpha^2}{16} \right)$ in four steps.
\paragraph{Step 1: Show that $\P(\mcQ_{i,n}(t) \med \mcU_{i,n}(t))=1$.}
The summands of $\bW_{i}(t)$ are the conditional expectations of the following $\bZ_{i}^2(s)$ terms
\begin{align}
    \bZ_{i}^2(s) &= \mathbbm{1}_{\{i \notin a_s\}}\left([X_{\Pa(i)}(s) X_{\Pa(i)}^{\top}(s)]^2 - 2X_{\Pa(i)}(s) X_{\Pa(i)}^{\top}(s)\Sigma_{i,a_s} + \Sigma_{i,a_s}^2  \right) \ . \label{eq:err_step1_3}
\end{align}
Note that $\Sigma_{i,a_s}$ is $\mcF_{s-1}$-measurable. Hence,
\begin{align}
    \E[X_{\Pa(i)}(s) X_{\Pa(i)}^{\top}(s)\Sigma_{i,a_s} \med \mcF_{s-1}] &= \Sigma_{i,a_s} \E[X_{\Pa(i)}(s) X_{\Pa(i)}^{\top}(s)] = \Sigma_{i,a_s}^2 \ . \label{eq:err_step1_4}
\end{align}
Using Assumption~\ref{assumption:boundedness}, we also have
\begin{align}
    \norm{[X_{\Pa(i)}(s) X_{\Pa(i)}^{\top}(s)]^2} &\leq \norm{X_{\Pa(i)}(s) X_{\Pa(i)}^{\top}(s)}^2 \leq m^4 \ . \label{eq:err_step1_5}
\end{align}
Taking the norm of the expected values on both sides in \eqref{eq:err_step1_3}, we obtain
\begin{align}
    \norm{\E[\bZ_{i}^2(s) \med \mcF_{s-1}]} \overset{\eqref{eq:err_step1_4}}&{=} \norm{\E[\mathbbm{1}_{\{i \notin a_s\}} (X_{\Pa(i)}(s) X_{\Pa(i)}^{\top}(s))^2 \med \mcF_{s-1}] - \Sigma_{i,a_s}^2 \E[\mathbbm{1}_{\{i \notin a_s\}} \med \mcF_{s-1} ]} \ . \label{eq:err_step1_6} \\
    &\leq \norm{\E[\mathbbm{1}_{\{i \notin a_s\}} (X_{\Pa(i)}(s) X_{\Pa(i)}^{\top}(s))^2 \med \mcF_{s-1}]} + \underset{\leq m^4}{\underbrace{\Sigma_{i,a_s}^2}} \E[\mathbbm{1}_{\{i \notin a_s\}} \med \mcF_{s-1}] \\
    \overset{\eqref{eq:err_step1_5}}&{\leq} 2m^4 \E[\mathbbm{1}_{\{i \notin a_s\}} \med \mcF_{s-1}]  \label{eq:err_step1_7} \\
    &= 2m^4 \mathbbm{1}_{\{i \notin a_s\}} \ , \label{eq:err_step1_7x}
\end{align}
Note that \eqref{eq:err_step1_7x} is correct since $ \mathbbm{1}_{\{i \notin a_s\}}$ is $\mcF_{s-1}$-measurable.
Subsequently, $\norm{\bW_{i}(t)}$ is bounded by
\begin{align}
    \norm{\bW_{i}(t)} &=  \norm{\sum_{s=1}^t \E[\bZ_{i}^2(s) \med \mcF_{s-1}]} \\ 
    &\leq \sum_{s=1}^{t} \norm{ \E[\bZ_{i}^2(s) \med \mcF_{s-1}]} \\
    \overset{\eqref{eq:err_step1_7x}}&{\leq} \sum_{s=1}^{t}2m^4 \mathbbm{1}_{\{i \notin a_s\}} \label{eq:norm_w_1}\\
    &= 2m^4  N_{i}(t) \ . \label{eq:err_step1_8}
\end{align}
Given that under the event $\mcU_{i,n}(t)$ we have $N_{i}(t) = n$, we obtain
\begin{align}
    \norm{\bW_{i}(t)} \overset{\eqref{eq:err_step1_8}}&{\leq} 2m^4 N_{i}(t) = 2m^4n = \sigma_n^2 \ , \\
    \mbox{and} \qquad \P(\mcQ_{i,n}(t) \med \mcU_{i,n}(t)) &= 1 \label{eq:err_step1_final} \ ,
\end{align}
since event $\mcU_{i,n}(t)$ implies event $\mcQ_{i,n}(t)$.

\paragraph{Step 2: Show that $\P(\mcY_{i,n}(t) \med \mcD_{i,n}(t), \mcU_{i,n}(t)) = 1$.}
Let us define $\bA = \sum_{s=1}^{t} \mathbbm{1}_{\{i \notin a_s\}} \Sigma_{i,a_s}$. From the definition of the martingale sequence $\bY_{i}(k)$ in \eqref{eq:martingale_seq_obs} we have
\begin{align}
    \bY_{i}(t) \overset{\eqref{eq:martingale_seq_obs}}&{=} \sum_{s=1}^{t} \mathbbm{1}_{\{i \notin a_s\}} \left( X_{\Pa(i)}(s) X_{\Pa(i)}^\top(s) - \Sigma_{i,a_s} \right) \\
    \overset{\eqref{eq:D_it_obs}}&{=}  \bD_{i}^{\top}(t) \bD_{i}(t) - \sum_{s=1}^{t} \mathbbm{1}_{\{i \notin a_s\}} \Sigma_{i,a_s} \\ 
    &=   \bD_{i}^{\top}(t) \bD_{i}(t) - \bA \ .
\end{align}
Given that under the event $\mcU_{i,n}(t)$ we have $N_{i}(t) = n$, Corollary \ref{corollary1} indicates that
\begin{align}
    \smax{\bA} = \smax{ \sum_{s=1}^{t} \mathbbm{1}_{\{i \notin a_s\}} \Sigma_{i,a_s}}  &\leq N_{i}(t) \kappa_{\max} = n \kappa_{\max} \ , \label{eq:err_step2_1} \\
    \mbox{and} \qquad \smin{\bA} = \smin{ \sum_{s=1}^{t} \mathbbm{1}_{\{i \notin a_s\}} \Sigma_{i,a_s}}  &\geq N_{i}(t) \kappa_{\min} = n \kappa_{\min} \ . \label{eq:err_step2_2}
\end{align}
Therefore, the event $\mcD_{i,n}(t)$ implies that at least one of the following two inequalities is correct:
\begin{align}
    \smin{\bD_{i}(t)} \leq \max\left\{0,\sqrt{n \kappa_{\min}} - \frac{\varepsilon_n}{\sqrt{n \kappa_{\min}}}\right\} \overset{\eqref{eq:err_step2_1}}&{\leq} \max\left\{0, \sqrt{\smin{\bA}} - \frac{\varepsilon_n}{\sqrt{\smin{\bA}}} \right\} \ , \label{eq:err_step2_3} \\
    \mbox{and} \quad  \smax{\bD_{i}(t)} \geq \sqrt{n \kappa_{\max}} + \frac{\varepsilon_n}{\sqrt{n \kappa_{\min}}} \overset{\eqref{eq:err_step2_2}}&{\geq} \sqrt{\smax{\bA}} + \frac{\varepsilon_n}{\smin{\bA}}  \label{eq:err_step2_4} \ . 
\end{align}
Given the events $\mcU_{i,n}(t)$ and $\mcD_{i,n}(t)$, we invoke the second statement of Lemma \ref{lm:smin_simple_bound} to obtain
\begin{align}
    \norm{\bD_{i}^{\top}(t) \bD_{i}(t) - \bA} \geq \varepsilon_n \ ,
\end{align}
which is the event $\mcY_{i,n}(t)$. Therefore, we have 
\begin{align}
    \P(\mcY_{i,n}(t) \med \mcD_{i,n}(t), \mcU_{i,n}(t)) = 1 \ . \label{eq:err_step2_5}
\end{align}

\paragraph{Step 3: Show that $\P(\mcY_{i,n}(t),\mcQ_{i,n}(t)) \leq (d+1) \exp\left(-\frac{3\alpha^2}{16}\right)$.} The norm of the difference sequence $\bZ_{i}(s)$ for martingale $\bY_{i}(k)$ is bounded as
\begin{align}
    \norm{\bZ_{i}(s)} &= \norm{\mathbbm{1}_{\{i \notin a_s\}}\left( X_{\Pa(i)}(s) X_{\Pa(i)}^{\top}(s) - \Sigma_{i,a_s} \right)} \\
    &\leq \norm{ X_{\Pa(i)}(s) X_{\Pa(i)}^{\top}(s) - \Sigma_{i,a_s} } \\
    &\leq \norm{ X_{\Pa(i)}(s) X_{\Pa(i)}^{\top}(s)} + \norm{\Sigma_{i,a_s} } \\
    &\leq \norm{X_{\Pa(i)}(s)}^2 + \underset{\leq m^2}{\underbrace{\kappa_{i,\max}}} \\
    & \leq 2m^2 \ . \label{eq:err_step1_2}
\end{align}
Next, we apply Lemma \ref{lm:modified_freedman} (matrix Freedman) with $R=2m^2$ to obtain
\begin{align}
\P(\mcY_{i,n}(t),\mcQ_{i,n}(t))&= \P\{\smax{\bY_{i}(t)} \geq \varepsilon_n \ \ \text{and} \ \ \norm{\bW_{i}(t)}  \leq \sigma_n^2 \}  \\
    &\leq  \P\{ \exists k \in [T] :  \smax{\bY_{i}(k)} \geq \varepsilon_n \ \ \text{and} \ \ \norm{\bW_{i}(k)}\leq \sigma_n^2 \} \\
    &\leq  (d+1) \exp \left( -\frac{3}{8} \min\left\{\frac{\varepsilon_n^2}{\sigma_n^2}, \frac{\varepsilon_n}{2m^2}\right\} \right) \\
    &=  (d+1) \exp \left( -\frac{3}{8} \min\left\{\frac{\alpha^2 \max\left\{n,\alpha^2\right\}}{2n}, \frac{\alpha \max\left\{\sqrt{n},\alpha\right\}}{2}\right\} \right) \\
    &=  (d+1) \exp \left( -\frac{3\alpha^2}{16} \right) \ . \label{eq:err_step3_1}
\end{align}
Finally, note that it can be easily verified that in both cases of  $\alpha\geq~\sqrt{n}$ and $\alpha < \sqrt{n}$ we have
\begin{align}
    \frac{\alpha^2}{2}=\min\left\{\frac{\alpha^2 \max\left\{n,\alpha^2\right\}}{2n}, \frac{\alpha \max\left\{\sqrt{n},\alpha\right\}}{2}\right\}\ ,
\end{align}
which, in turn, implies \eqref{eq:err_step3_1}.

\paragraph{Step 4: Show that $\P(\mcD_{i,n}(t),\mcU_{i,n}(t)) \leq (d+1) \exp\left(-\frac{3\alpha^2}{16}\right)$.} Now we are ready to combine the last three steps and establish the desired result. Since $\P(\mcY_{i,n}(t),\mcQ_{i,n}(t))\leq (d+1) \exp\left(-\frac{3\alpha^2}{16}\right)$, it suffices to show that $\P(\mcD_{i,n}(t),\mcU_{i,n}(t)) \leq \P(\mcY_{i,n}(t),\mcQ_{i,n}(t))$. First, due to the Step 1 result $\P(\mcQ_{i,n}(t) \med \mcU_{i,n}(t)) = 1$, we have
\begin{align}
    \P(\mcQ_{i,n}(t) \med \mcY_{i,n}(t), \mcU_{i,n}(t)) &= \P(\mcQ_{i,n}(t) \med \mcU_{i,n}(t)) = 1 \ , \label{eq:err_step4_1} \\
    \mbox{and} \quad \P(\mcY_{i,n}(t),\mcU_{i,n}(t),\mcQ_{i,n}(t)) &= \P(\mcY_{i,n}(t), \mcU_{i,n}(t)) \P(\mcQ_{i,n}(t) \med \mcY_{i,n}(t), \mcU_{i,n}(t)) \\
    \overset{\eqref{eq:err_step4_1}}&{=} \P(\mcY_{i,n}(t), \mcU_{i,n}(t)) \ . \label{eq:err_step4_2} 
\end{align}
Furthermore, using the Step 3 result $\P(\mcY_{i,n}(t),\mcQ_{i,n}(t)) \leq (d+1) \exp\left(-\frac{3\alpha^2}{16}\right)$, we have
\begin{align}
    \P(\mcY_{i,n}(t),\mcU_{i,n}(t)) \overset{\eqref{eq:err_step4_2}}&{=} \P(\mcY_{i,n}(t),\mcU_{i,n}(t),\mcQ_{i,n}(t)) \\ 
    &\leq \P(\mcY_{i,n}(t),\mcQ_{i,n}(t)) \\
    &\leq (d+1) \exp \left( -\frac{3\alpha^2}{16} \right) \ . \label{eq:err_step4_3} 
\end{align}
Next, using the Step 2 result $\P(\mcY_{i,n}(t) \med \mcD_{i,n}(t), \mcU_{i,n}(t))=1$, we obtain
\begin{align}
    \P(\mcY_{i,n}(t),\mcD_{i,n}(t),\mcU_{i,n}(t)) &= \P(\mcD_{i,n}(t),\mcU_{i,n}(t)) \P(\mcY_{i,n}(t) \med \mcD_{i,n}(t),\mcU_{i,n}(t)) \\
    \overset{\eqref{eq:err_step2_5}}&{=} \P(\mcD_{i,n}(t),\mcU_{i,n}(t)) \ . \label{eq:step4_4} 
\end{align}
Finally, using $\P(\mcY_{i,n}(t),\mcU_{i,n}(t)) \leq (d+1) \exp\left(-\frac{3\alpha^2}{16}\right) $, we have
\begin{align}
    \P(\mcD_{i,n}(t),\mcU_{i,n}(t)) \overset{\eqref{eq:step4_4}}&{=}\P(\mcY_{i,n}(t),\mcD_{i,n}(t),\mcU_{i,n}(t)) \\ 
    &\leq \P(\mcY_{i,n}(t),\mcU_{i,n}(t)) \\
    \overset{\eqref{eq:err_step4_3}}&{\leq} (d+1) \exp \left( -\frac{3\alpha^2}{16} \right)\ ,
\end{align}
which is the desired result. The interventional counterpart result, i.e., $\P(\mcE^{*}_{i,n}(t)) \leq (d+1) \exp \left( -\frac{3\alpha^2}{16} \right)$, can be shown similarly.
\endproof

{\noindent \textbf{Proof of Lemma \ref{lm:sum_h_ita}}.} \label{appendix:proof_lemma8}
We will use the fact that $h$ is a non-increasing function. First, note that there may exist multiple nodes that achieve $\max_{i \in [N]} h(N_{i,a_t}(t))$. Without loss of generality, we select the smallest solution as $\argmax$ (or $\argmin$) when the $\max$ (or $\min$) of a function over set $[N]$ has more than one solution. Then, since $h$ is a non-increasing function, we have
\begin{align}
    \argmax_{i \in [N]} h(N_{i,a_t}(t)) &= \argmin_{i \in [N]} N_{i,a_t}(t) \ . \label{eq:h_lemma_1}
\end{align}
Also, by definition,
\begin{align}
    N_{i,a_t}(t) \overset{\eqref{eq:def_N_iat}}&{=} \mathbbm{1}_{\{i \in a_t\}} N^{*}_{i}(t) +  \mathbbm{1}_{\{i \notin a_t\}} (t-N^{*}_{i}(t)) \ .
\end{align}
Therefore, the argument of \eqref{eq:h_lemma_1} is the node that has the smallest relevant counter variable $N^{*}_{i}(t)$ for the nodes $i \in a_t$, and $(t-N^{*}_{i}(t))$ for the nodes $i \notin a_t$. We denote this node by $i_t$ as follows
\begin{align}
    i_t &\triangleq \argmax_{i \in [N]} h(N_{i,a_t}(t)) \overset{\eqref{eq:h_lemma_1}}{=} \argmin_{i \in [N]} N_{i,a_t}(t) \ .
\end{align}
Note that $i_t$ does not capture whether $i \in a_t$ or $i \notin a_t$. In other words, $i_t$ does not specify whether $N_{i_t,a_t}(t) = N^{*}_{i_t}(t)$ or $N_{i_t,a_t}(t) = N_{i_t}(t)$. Driven by addressing the challenge that it causes, define the set of time indices where $i_t=i$ for each of these two cases as follows
\begin{align}
    \mcS_{i} &\triangleq \{t \in [T] : i_t = i, \ i \notin a_t \} \ , \quad \forall i \in [N] \ , \\
    \mbox{and} \qquad \mcS^{*}_i &\triangleq \{t \in [T] : i_t = i, \ i \in a_t \} \ , \quad \forall i \in [N] \ . 
\end{align}
Subsequently, $h(N_{i_t,a_t}(t))$ becomes
\begin{align}
    \max_{i \in [N]}h(N_{i,a_t}(t)) &= h(N_{i_t,a_t}(t)) = \begin{cases} h(N^{*}_{i}(t)) \ , \quad \forall t \in \mcS^{*}_{i}(t) \\ h(N_{i}(t)) \ , \quad \forall t \in \mcS_{i}(t) \end{cases} \ .
\end{align}
Denote the elements of $\mcS_{i}$ by $S_{i,1},\dots, S_{i,|\mcS_{i}|}$. Note that until time $S_{i,n}$, the event $\{i_t=i, i \notin a_t\}$ occurs exactly $n$ times. Similarly $\{i_t=i, i \in a_t\}$ occurs $n$ times until time $S^{*}_{i,n}$. Then, 
\begin{align}
    n = \sum_{t=1}^{S_{i,n}} \mathbbm{1}_{\{i_t=i, i \notin a_t\}} &\leq \sum_{t=1}^{S_{i,n}} \mathbbm{1}_{\{i \notin a_t\}} = N_{i}(S_{i,n}) \ , \label{eq:h_side_2} \\
    \mbox{and} \qquad n = \sum_{t=1}^{S^{*}_{i,n}} \mathbbm{1}_{\{i_t=i, i \in a_t\}} &\leq \sum_{t=1}^{S^{*}_{i,n}} \mathbbm{1}_{\{i \in a_t\}} = N^{*}_{i}(S^{*}_{i,n}) \ . \label{eq:h_side_3}
\end{align}
Using the results above and noting that $h$ is a non-increasing function, we obtain
\begin{align}
    \sum_{t=1}^{T} \max_{i \in [N]}h(N_{i_t,a_t}(t)) &= \sum_{t=1}^{T} h(N_{i_t,a_t}(t)) \\
    &=\sum_{i=1}^{N} \sum_{t: t \in \mcS_{i}} h(N_{i}(t)) + \sum_{i=1}^{N} \sum_{t: t \in \mcS^{*}_i} h(N_{i}^{*}(t)) \\ 
    &= \sum_{i=1}^{N} \sum_{n=1}^{|\mcS_{i}|} \underset{\overset{\eqref{eq:h_side_2}}{\leq} h(n)}{\underbrace{h(N_{i}(S_{i,n}))}} + \sum_{i=1}^{N} \sum_{n=1}^{|\mcS^{*}_i|} \underset{\overset{\eqref{eq:h_side_3}}{\leq} h(n)}{\underbrace{h(N^{*}_{i}(S^{*}_{i,n}))}} \\
    &\leq \sum_{i=1}^{N} \sum_{n=1}^{|\mcS_{i}|} h(n) + \sum_{i=1}^{N} \sum_{n=1}^{|\mcS^{*}_i|} h(n) \ .
\end{align}
To bound the discrete sums through integrals, we define
\begin{align}
    H(y) &= \int_{x=0}^{y} h(x)dx \ , \quad y\geq 0 \ . \label{eq:Hx_definition}
\end{align}
Since $h(x)$ is a positive, non-increasing function, for any $k \in \mathbb{N}^{+}$ we have
\begin{align}
    \sum_{n=1}^{k} h(n) \leq \int_{x=1}^{k + 1} h(x)dx &\leq \int_{x=0}^{k} h(x)dx = H(k) \label{eq:h_side_4} \ ,
\end{align}
and subsequently,
\begin{align}
    \sum_{t=1}^{T} \max_{i \in [N]} h(N_{i,a_t}(t)) &\leq \sum_{i=1}^{N} \sum_{n=1}^{|\mcS_{i}|} h(n) + \sum_{i=1}^{N} \sum_{n=1}^{|\mcS^{*}_i|} h(n) \\ \overset{\eqref{eq:h_side_4}}&{\leq}  \sum_{i=1}^{N} H(|\mcS_{i}|) + \sum_{i=1}^{N} H(| \mcS^{*}_{i}|) \ . \label{eq:h_side_5}
\end{align}
We verify that $H$ is a concave function as follows
\begin{align}
    H'(n) &= h(n) - h(0) \\ 
    &= \sqrt{2}\left(\frac{1}{\max\left\{0,\sqrt{n \kappa_{\min}}-\sqrt{\tau \kappa_{\min}}\right\}+1}-1 \right) \  \\
     &= 
    \begin{cases}
    0 \ , \quad 0\leq n < \tau \\
    \sqrt{2}(\sqrt{n \kappa_{\min}}-\sqrt{\tau \kappa_{\min}}+1)^{-1} \ , \quad n \geq \tau
    \end{cases} \ , \\
    \mbox{and} \quad H''(n) &= 
    \begin{cases}
    0 \ , \quad 0 \leq n < \tau \\
    -\sqrt{\frac{\kappa_{\min}}{2n}}(\sqrt{n \kappa_{\min}}-\sqrt{\tau \kappa_{\min}}+1)^{-2} \ , \quad n \geq \tau
    \end{cases} \ .
\end{align}
Since $H''(n) \leq 0$ for the domain of $H$, $H$ is a concave function. Also, $\sum_{i=1}^{N} |\mcS_{i}| + \sum_{i=1}^{N} |\mcS^{*}_{i}| = T$. Then, we use Jensen's inequality for concave functions to obtain
\begin{align}
    \sum_{i=1}^{N} H(|\mcS_{i}|) + \sum_{i=1}^{N} H(|\mcS^{*}_{i}|) &\leq 2N \times H \left(\frac{1}{2N} \sum_{i=1}^{N} |\mcS_{i}| +  \frac{1}{2N} \sum_{i=1}^{N} |\mcS^{*}_{i}| \right) \\
    &= 2N \times H\left(\frac{T}{2N} \right) \ . \label{eq:h_side_6}
\end{align}
Next, we note that  $h(x)=\sqrt{2}$ for $x\leq \tau$ and decompose $H\left(\frac{T}{2N}\right)$ into two parts as
\begin{align}
    H\left(\frac{T}{2N}\right) &= \int_{x=0}^{\frac{T}{2N}} h(x) dx \\
    & = \int_{x=0}^{\tau} h(x) dx + \int_{x=\tau}^{\frac{T}{2N}} h(x) dx \\
    & = \sqrt{2}\tau + \int_{x=\tau}^{\frac{T}{2N}} h(x) dx \ . \label{eq:misc_late1}
\end{align}
Next, we need to bound $\int_{x=\tau}^{\frac{T}{2N}} h(x) dx$. Let us define $y \triangleq \sqrt{\tau}-\frac{1}{\sqrt{\kappa_{\min}}}$. Hence,
\begin{align}
    \int_{x=\tau}^{\frac{T}{2N}} h(x) dx &= \sqrt{2} \int_{x=\tau}^{\frac{T}{2N}} \frac{1}{\sqrt{x \kappa_{\min}}-\sqrt{\tau \kappa_{\min}}+1} dx \\
    &= \sqrt{\frac{2}{\kappa_{\min}}} \int_{x=\tau}^{\frac{T}{2N}} \frac{1}{\sqrt{x}-y} dx \\
    &= 2\sqrt{\frac{2}{\kappa_{\min}}} \left( \sqrt{\frac{T}{2N}}+y \log\left(\sqrt{\frac{T}{2N}}-y\right) - \sqrt{\tau} - y \log(\sqrt{\tau}-y) \right) \\
    &< \sqrt{\frac{2}{\kappa_{\min}}} \left(\sqrt{\frac{2T}{N}} + y\log \left(\frac{T}{2N}\right)\right) \\
    &<  \sqrt{\frac{2}{\kappa_{\min}}} \left(\sqrt{\frac{2T}{N}} +\sqrt{\tau} \log \left(\frac{T}{2N}\right)\right) \ .  \label{eq:h_side_7}
\end{align}
Finally, plugging \eqref{eq:h_side_7} into \eqref{eq:misc_late1}, we obtain the desired result
\begin{align}
    \sum_{t=1}^{T} \max_{i \in [N]} h(N_{i,a_t}(t)) &\leq 2N \times \left(\tau +  \int_{x=\tau}^{\frac{T}{2N}} h(x) dx\right) \\
    &< 2N  \left(\sqrt{2}\tau + \sqrt{\frac{2}{\kappa_{\min}}}\left(\sqrt{\frac{2T}{N}} + \sqrt{\tau} \log\left(\frac{T}{2N}\right) \right) \right) \ .
\end{align}
\endproof

{\noindent \textbf{Proof of Theorem \ref{th:regret_ts}}.} \label{appendix:proof_TS}
Before starting the proof, we comment on the changes from the frequentist setting.

\paragraph{Global lower and upper bounds for singular values.}
We have defined global bounds for singular values in \eqref{eq:kappa_min} and \eqref{eq:kappa_max} for the frequentist analysis. However, the probability measure $\P_a$ changes with respect to the sampled parameters $\bW$ for the Bayesian setting. Therefore, we need to expand the definition of lower and upper bounds for singular values to the domain of $\bW$, i.e., $\mcW$. We redefine the probability measure for an intervention $a$ by also accounting for the dependence on parameters $\bW$,
and denote it by $\P_{a}^{\bW}$. Accordingly, we define $\Sigma_{i,a}^{\bW} \triangleq \E_{X \sim \P_{a}^{\bW}}[X_{\Pa(i)} X_{\Pa(i)}^\top]$ and denote the lower and upper bounds of these moments' singular values by
\begin{align}
    \kappa_{\mathrm{max}}^{\bW} &\triangleq \max \limits_{i \in [N]} \max \limits_{a \in \mcA} \smax{\Sigma_{i,a}^{\bW}} \label{eq:kappa_W_max_ts} \ , \\
  \kappa_{\mathrm{max}}^{\mcW} &\triangleq \max_{\bW \in \mcW}  \kappa_{\mathrm{max}}^{\bW}  \label{eq:kappa_max_ts} \ ,  \\ 
     \kappa_{\mathrm{min}}^{\bW} &\triangleq \min \limits_{i \in [N]}\min \limits_{a \in \mcA} \smin{\Sigma_{i,a}^{\bW}} \ , \label{eq:kappa_W_min_ts} \\ 
    \mbox{and} \qquad \kappa_{\mathrm{min}}^{\mcW} &\triangleq \min_{\bW \in \mcW} \kappa_{\mathrm{min}}^{\bW} \label{eq:kappa_min_ts} \ .
\end{align}
Similar to the definition of $\tau$ in \eqref{eq:tau_ucb} for the frequentist setting, we define
\begin{align}\label{eq:tau_ts}
     \tau_{\bW} \triangleq \frac{\alpha^2 m^4}{(\kappa_{\mathrm{min}}^{\bW})^2} \ , \quad \mbox{and} \quad \tau_{\mcW} \triangleq \frac{\alpha^2 m^4}{(\kappa_{\mathrm{min}}^{\mcW})^2} \ .
\end{align}
We start by finding bounds on the expected regret for fixed $\bW$ using the tools that we have developed in Section \ref{sec:UCB}. Then, we analyze the terms in the result that are affected by the choice of $\bW,$ and obtain the final Bayesian regret. Consider
\begin{align}
     \E_{\epsilon} [R_{\bW}(T)] &= \E \left[ \sum_{t=1}^T \E[\mu_{a^*} - \mu_{a_t} \med \tilde\mcF_{t-1}] \right] \ , \label{eq:ts_proof_regret1}
\end{align}
where $\tilde\mcF_{t}= \sigma(a_1,X(1),\dots,a_{t},X(t))$. Note that $\tilde\mcF_{t}$ is different from $\mcF_{t}$ defined and used earlier. The next few steps are similar to those of the proof of Theorem \ref{th:regret_ucb_part1}. We use the same $\beta$ as before, i.e.,
\begin{align}
    \beta = m_B+ \sqrt{2\log(2NT)+(d+1)\log(1+m^2T/(d+1))} \ ,
\end{align}
and the upper confidence bound ${\rm UCB}(t)$ in \eqref{eq:ucb_definition}. Defining the event $\mcE_{\cap}$ similarly to \eqref{eq:ucb_conf_interval_event_union} in the proof of Theorem \ref{th:regret_ucb_part1}, we have $\P(\mcE_{\cap}^{\C})\leq \frac{1}{T}$. Next, we decompose the regret in \eqref{eq:ts_proof_regret1} as 
\begin{align}
    \sum_{t=1}^T \E[\mu_{a^*} - \mu_{a_t} \med \tilde\mcF_{t-1}]  &= \sum_{t=1}^T \E[\mathbbm{1}_{\mcE_{\cap}^{\C}}  \underset{\leq 2m}{\underbrace{(\mu_{a^*}-\mu_{a_t})}}  \med \tilde\mcF_{t-1}] + \sum_{t=1}^T \E[\mathbbm{1}_{\mcE_{\cap}} (\mu_{a^*}-\mu_{a_t}) \med \tilde\mcF_{t-1}]  \\
    &\leq 2m T \underset{\leq 1/T}{\underbrace{\P(\mcE_{\cap}^{\C}\med \tilde\mcF_{t-1})} }  + \sum_{t=1}^T \E[\mathbbm{1}_{\mcE_{\cap}} (\mu_{a^*}-\mu_{a_t}) \med \tilde\mcF_{t-1}]  \\
    &\leq 2m + \sum_{t=1}^T \E[\mathbbm{1}_{\mcE_{\cap}} (\mu_{a^*}-\mu_{a_t}) \med \tilde\mcF_{t-1}]  \ .  \label{eq:p8}
\end{align}
Thompson Sampling has the property that $\P(a^* = a \med \tilde\mcF_{t-1}) = \P(a_t = a \med \tilde\mcF_{t-1})$. Therefore, 
\begin{align}
    {\rm UCB}_{a^*}(t) &= {\rm UCB}_{a_t}(t) \ , \label{eq:ts_property_ucb} \ ,
\end{align}
and consequently,
\begin{align}
 \E[\mathbbm{1}_{\mcE_{\cap}} (\mu_{a^*}-\mu_{a_t}) \med \tilde\mcF_{t-1}] &= \E \left[\mathbbm{1}_{\mcE_{\cap}} \big( \underset{\leq 0}{\underbrace{\mu_{a^*} - {\rm UCB}_{a^*}(t)}} + {\rm UCB}_{a_t}(t) - \mu_{a_t}\big) \med \tilde\mcF_{t-1} \right] \label{eq:posterior_matching} \\
    &\leq \E \left[ \mathbbm{1}_{\mcE_{\cap}} \big({\rm UCB}_{a_t}(t) - \mu_{a_t}\big) \med \tilde\mcF_{t-1} \right] \ . \label{eq:posterior_matching_2}
\end{align}
Define ${\rm UCB}_{a}(t) = f(\tilde \bB_{a})$. By combining \eqref{eq:p8} and \eqref{eq:posterior_matching_2}, we obtain
\begin{align}
    \E_{\epsilon}[R_{\bW}(T)] &= \E \left[\sum_{t=1}^{T} \E[\mu_{a^*} - \mu_{a_t} \med \tilde\mcF_{t-1}] \right] \\
    &\leq 2m + \E\left[\sum_{t=1}^{T} \E \left[\mathbbm{1}_{\mcE_{\cap}} \big({\rm UCB}_{a_t}(t) - \mu_{a_t}\big) \med \tilde\mcF_{t-1} \right] \right] \\
    &= 2m + \E\left[\sum_{t=1}^{T} \E \left[\mathbbm{1}_{\mcE_{\cap}} (f(\tilde \bB_{a_t})-f(\bB_{a_t}))  \med \tilde\mcF_{t-1} \right] \right] \\
    &= 2m + \E\left[\sum_{t=1}^{T}  \mathbbm{1}_{\mcE_{\cap}} (f(\tilde \bB_{a_t})-f(\bB_{a_t}))  \right]  \ . \label{eq:p9}
\end{align}
Following similar steps to the proof of Theorem \ref{th:regret_ucb_part1}, we can bound the expected value in \eqref{eq:p9}, and obtain
\begin{align}
    \E_{\epsilon}[R_{\bW}(T)] &\leq 2m + 2(\beta + m_B)^{L+1} (d+1)^{\frac{L}{2}}\lambda_{\bW,T} \ , \label{eq:bound_ts_bW}
\end{align}
where
\begin{align}
    \lambda_{\bW,T} &< \frac{4g(\tau_{\bW})}{\sqrt{\kappa_{\bW,\min}}}\sqrt{NT} + 3(N+1)\tau_{\bW} g(\tau_{\bW}) + \frac{2\sqrt{2}N\sqrt{\tau_{\bW}} g(\tau_{\bW})}{\sqrt{\kappa_{\bW,\min}}} \log\left(\frac{T}{2N}\right)  \notag \\ 
    &\qquad + \frac{m}{T} + \frac{2m}{3} +  1 \ .
\end{align}
Finally, we replace $\bW$-specific terms with global $\mcW$ terms as follows
\begin{align}
    \kappa_{\mcW,\min} &\leq \kappa_{\bW,\min} \ , \\
    \tau_{\bW} \overset{\eqref{eq:tau_ts}}&{\leq} \tau_{\mcW}  \ , \\
    \sqrt{\tau_{\bW}}g(\tau_{\bW}) &\leq \sqrt{\tau_{\mcW}}g(\tau_{\mcW}) \ , \\
    \lambda_{\bW,T} &\leq \lambda_{\mcW,T} \ .
\end{align}
Since $\lambda_{\bW,T}$ is the only term in \eqref{eq:bound_ts_bW} that depends on $\bW$, we obtain
\begin{align}
     {\rm BR}(T) &= \E_{\mcW} \E_{\epsilon} [R_{\bW}(T)] \\
     &\leq 2m + 2(\beta + m_B)^{L+1} (d+1)^{\frac{L}{2}}\lambda_{\mcW,T} \ . 
\end{align}
Similar to the proof of Theorem \ref{th:regret_ucb_final}, using the same $\beta$ as in Theorem \ref{th:regret_ucb_part1}, 
and ignoring poly-logarithmic terms and constants, we obtain ${\rm BR}(T) = \tilde \mcO((d+1)^{L+\frac{1}{2}} \sqrt{NT})$.
\endproof

\section{Proof of Theorem~\ref{th:lower-bound}} \label{sec:proof-lower-bound}

\begin{figure}[t]
    \centering
    \begin{subfigure}[t]{0.4\textwidth}
        \centering
        \includegraphics[height=2in]{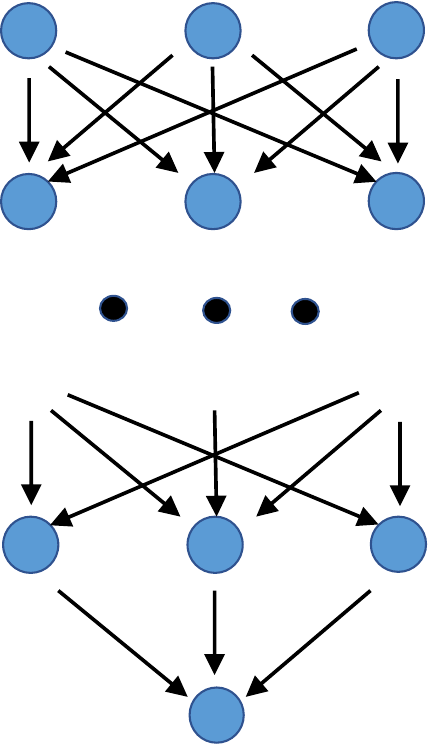}
        \caption{Hierarchical block with $d=3$.}
        \label{fig:block-hierarchical}
    \end{subfigure}
    \begin{subfigure}[t]{0.4\textwidth}
        \centering
        \includegraphics[height=2in]{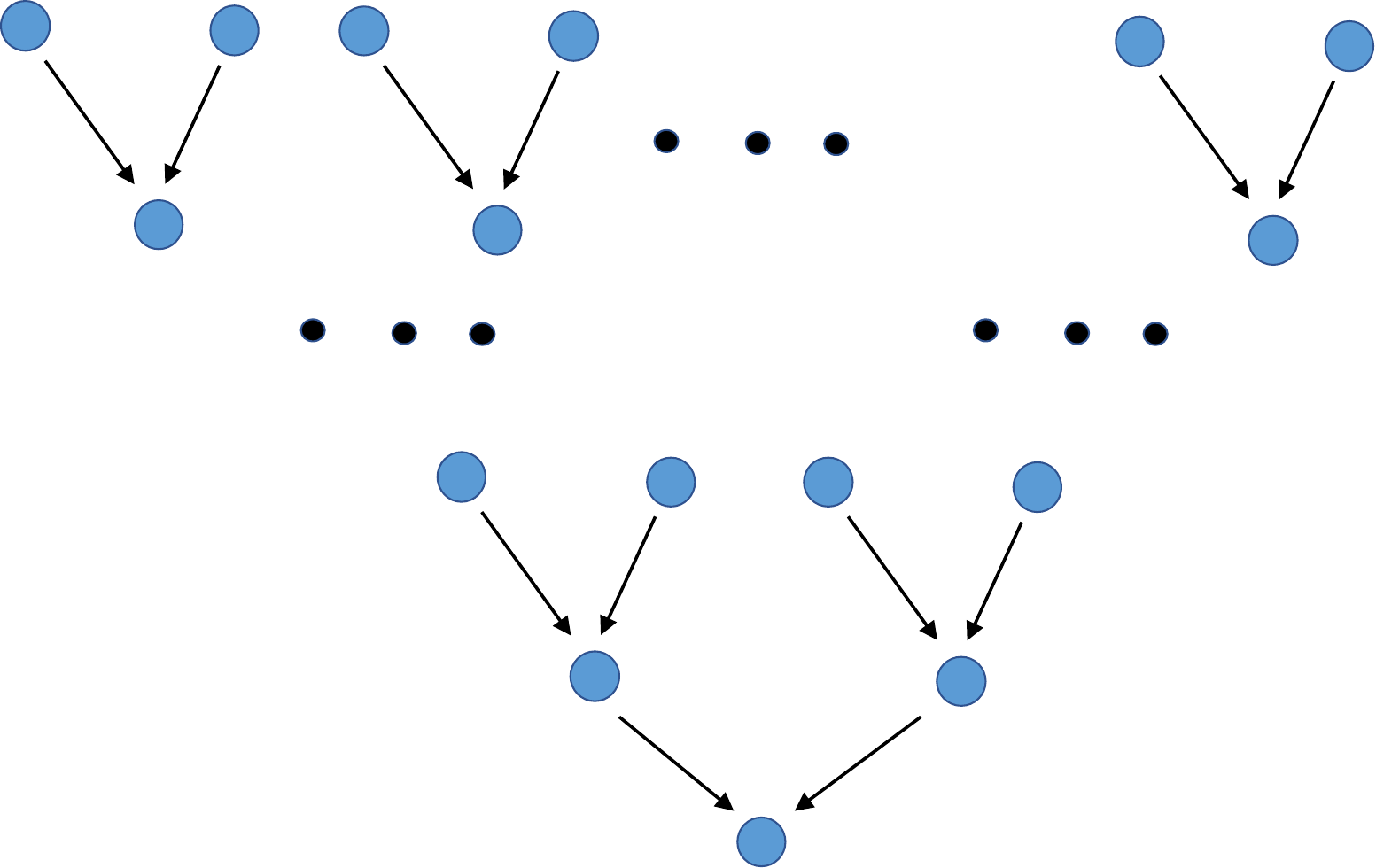}
        \caption{$d$-ary tree with $d=2$.}
        \label{fig:block-tree}
    \end{subfigure}
    \hfill
    \bigskip
    \vspace{0.1in}
    \begin{subfigure}[t]{0.5\textwidth}
        \centering
        \includegraphics[width=1\linewidth]{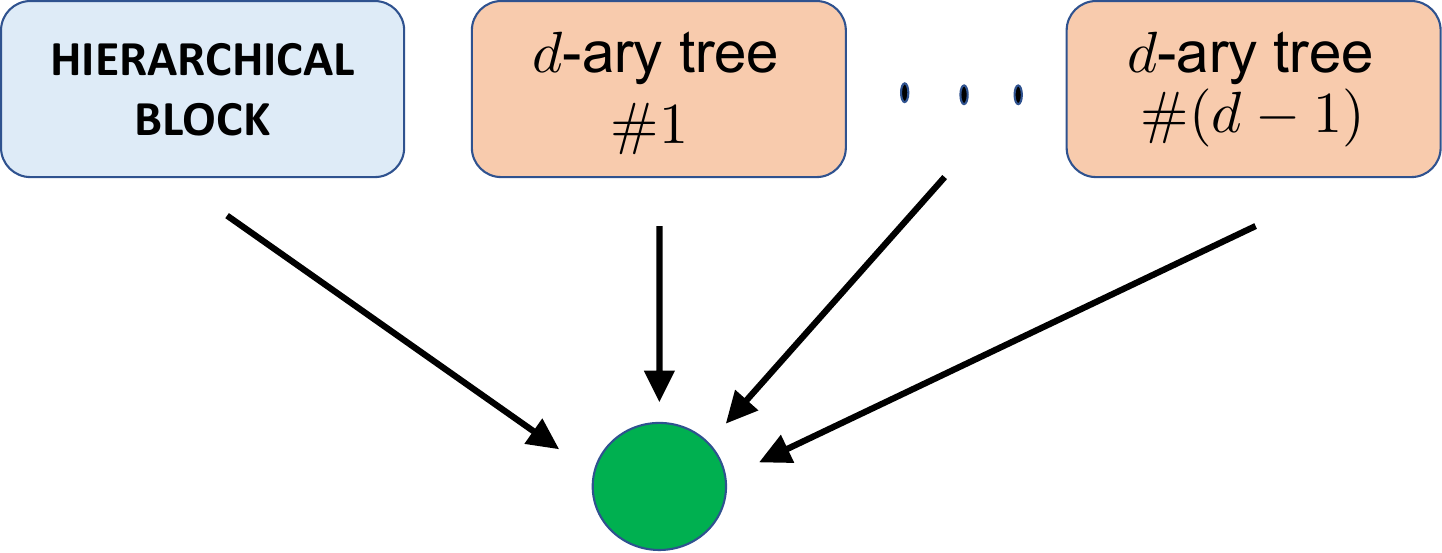}
        \caption{Causal graph instance.}
        \label{fig:lower-bound}
    \end{subfigure}    
    \caption{Sample diagrams for the graphs used in the proof of Theorem~\ref{th:lower-bound}.}
\end{figure}

\emph{Sketch of the proof.} To find a lower bound on the minimax regret, we provide a proof as follows. Let $\Pi$ be the set of all policies on set of stochastic bandit environments $\mcI$. The minimax regret is defined as
\begin{align}
    \inf_{\pi \in \Pi} \sup_{\mcI^* \in \mcI} \E_{\pi,\mcI^*}[R(T)] \ ,
\end{align}
in which $\E_{\pi,\mcI^*}[R(T)]$ denotes the expected regret of policy $\pi$ on the bandit instance $\mcI^*$. We will consider a set $\tilde \mcI$ that contains two bandit instances. By definition of minimax regret, a lower bound for the regret of any policy on $\tilde\mcI$ also is
a lower bound for the minimax regret since
\begin{align}
    \inf_{\pi \in \Pi} \sup_{\mcI^* \in \mcI} \E_{\pi,\mcI^*}[R(T)] \geq \inf_{\pi \in \Pi} \sup_{\mcI^* \in \tilde\mcI} \E_{\pi,\mcI^*}[R(T)] \ . 
\end{align}
After this intuition, the central idea of the proof is as follows. Two linear SEM causal bandit instances that differ by only one edge parameter are hard to distinguish. At the same time, we can construct them to have different optimal actions, indicating that a selection policy cannot incur small regret for both at the same time under the same data realization. Note that, the difference of the rewards, or equivalently the regrets, observed by these two bandit instances under the same action can be computed by tracing the effect of that differing edge parameter over all the paths that end at the reward node. We carefully build graphs to maximize the number of such paths for given $d$ and $L$. In this section, we provide details of these steps.

We consider two linear SEM causal bandit instances that share the same graph $\mcG$ and are parameterized by $\mcI_1\triangleq \{\bH,\bH^{*},\nu, \nu^{*},\epsilon\}$, and $\mcI_2\triangleq\{\bar \bH, \bar \bH^{*}, \bar \nu, \bar \nu^*, \epsilon\}$. For both instances, let the weights of all observational edges be $b$, and all interventional edges be $b-\delta$ such that $0 < \delta < b \leq 1$.  In other words, if for $i,j\in[N]$
\begin{align}
    (i \rightarrow j) \in \mcG\ ,
\end{align}
then 
\begin{align}
[\bH]_{i,j} = [\bar \bH]_{i,j} = b \ , \qquad \mbox{and} \qquad [\bH^*]_{i,j} = [\bar \bH^*]_{i,j} = b-\delta\ .     
\end{align}
We assume the noise terms have standard Gaussian distributions, i.e., $\epsilon_i \sim \mcN(0,1)$ for all $i \in [N]$. Furthermore, for all $k \in \{2,\dots,N\}$ set 
\begin{align}
    \nu_k = \nu^*_k = \bar \nu_k = \bar \nu^*_k = 0\ .
\end{align}
 The only difference between the two instances is the affine terms of node $1$, for which $\nu_1 = \bar \nu_1^{*} = 1$, and $\nu^*_1 = \bar \nu_1 = 1 - \delta$. Note that this parameterization can be applied to any graph $\mcG$. Next, consider a fixed bandit policy $\pi$ that generates the following filtration over time 
 \begin{align}
     \mcF_t\triangleq \{a_1,X(1),\dots,a_{t},X(t)\}\ .
 \end{align}
 The decision of $\pi$ at time $t$ is $\mcF_{t-1}$-measurable. 
 Accordingly, define $\P_t$ and $\bar\P_t$ as the probability measures induced by $\mcF_t$ by $t$ rounds of interaction between $\pi$ and the two bandit instances. When it is clear from context, we use the shorthand terms $\P$ and $\bar \P$ for $\P_T$ and $\bar\P_T$, respectively. We will show that $\pi$ cannot suffer small regret in both instances at the same time and under the same filtration $\mcF_T$.

Next, we construct our sample graph for fixed $d$ and $L$ values using the following two building blocks.
\begin{enumerate}
    \item One hierarchical block as depicted in Figure \ref{fig:block-hierarchical}, which consists of $(L-1)$ layers each with $d$ nodes. Adjacent layers are fully connected. There exists a final layer with one node fully connected to layer $L-1$.
    \item One $d$-ary tree with $L$ layers as depicted in Figure \ref{fig:block-tree}.
\end{enumerate}
  We use one hierarchical block and $(d-1)$ number of $d$-ary trees and connect their sink nodes to form a reward node. Hence, in this graph, the hierarchical block consists of $d(L-1)+1$ nodes, a $d$-ary tree consists of $\sum_{\ell=0}^{L-1} d^\ell$ nodes, and the total number of nodes is 
\begin{align}
    N &= d(L-1)+1 + (d-1)\sum_{\ell=0}^{L-1} d^\ell = d^L + d(L-1) \ .
\end{align}
The nodes in the hierarchical blocks are labeled by $\{1,\dots,d(L-1)+1\}$, beginning from the top layer. All weights are in the set $\{b,b-\delta\}$ and are positive in both bandit instances. Hence, by Lemma~\ref{lm:expected_reward}, the optimal action is the one that maximizes the value of each entry of $\bB_a$ and $\bar \bB_a$. As a result, the optimal actions are $a^* = \emptyset$ for the bandit instance $\mcI_1$ and $\bar a^* = \{1\}$ for bandit instance $\mcI_2$. Define $\mcE_{\rm lb}$ as the event in which node $1$ is intervened at least $\frac{T}{2}$ times after $T$ rounds, i.e.,
\begin{align} \label{eq:event}
    \mcE_{\rm lb} \triangleq \left\{ N^{*}_{1}(T) \geq \frac{T}{2}\right\} \ .
\end{align}
We note that the event $\mcE_{\rm lb}$ is defined on the sigma algebra defined by the filtration $\mcF_t$, that induces both $\P_t$ and $\bar \P_t$. We compute the expected instantaneous regret when node $1$ is intervened in the first bandit. Note that since $\nu_k = \nu^*_k$ and $\epsilon_k$  has a zero mean for $k \geq 2$, only the paths that start at node $1$ and end at the reward node $N$ contribute to the expected regret. Furthermore, since every weight is positive, in $\mcI_1$, we have $\mu_{\{1\}} \geq \mu_{a}$ for any $a$ that contains $1$. Therefore, for $\mcI_1$, the expected value of instantaneous regret is at least $\mu_{\emptyset}-\mu_{\{1\}}$ if $1$ is intervened. Then, by definition of $\mcE_{\rm lb}$, we have
\begin{align}
    \E_{\P}[R(t)] &= \E_{\P}\left[\sum_{t=1}^{T} r(t)\right] \\ &\geq   \E_{\P}\left[\sum_{t \in [T]: 1 \notin a_t} (\mu_{\emptyset} - \mu_{\{1\}})\right] \\
    &\geq \P(\mcE_{\rm lb}) \frac{T}{2}(\mu_{\emptyset} - \mu_{\{1\}})  \\ 
    &=  \P(\mcE_{\rm lb}) \frac{T}{2}  \delta b^L d^{L-2} \;  \ . \label{eq:lower_bound_regret_1}
\end{align}
The final equality holds because there exists $d^{L-2}$ paths from node $1$ to node $N$, and the difference between $\nu_1 - \nu^{*}_1 = \delta$ is multiplied with a $b$ factor for every edge along a path. Similarly, for $\mcI_2$, we have $\bar \mu_{\emptyset} \geq \bar \mu_{a}$ for any $a$ that does not contain $1$, and expected value of instantaneous regret is at least $\bar \mu_{\{1\}}-\bar\mu_{\emptyset}$ if $1$ is not intervened. Applying the same steps, we obtain
\begin{align}
    \E_{\bar\P}[R(t)] &= \E_{\bar\P}\left[\sum_{t=1}^{T} r(t)\right] \\ &\geq   \E_{\bar\P}\left[\sum_{t \in [T]: 1 \in a_t} (\bar\mu_{\{1\}} - \bar\mu_{\emptyset} )\right] \\
    &\geq \bar\P(\mcE_{\rm lb}^{\C}) \frac{T}{2}(\bar\mu_{\{1\}} - \bar\mu_{\emptyset} )  \\ 
    &=  \bar\P(\mcE_{\rm lb}^{\C}) \frac{T}{2}  \delta b^L d^{L-2} \;  \ . \label{eq:lower_bound_regret_2}
\end{align}
By combining \eqref{eq:lower_bound_regret_1} and \eqref{eq:lower_bound_regret_2} we have
\begin{align}
    \E_{\P}[R(t)] + \E_{\bar \P}[R(t)] \geq \frac{T}{2}\;  \delta  b^L d^{L-2} [\P(\mcE_{\rm lb})+\bar \P(\mcE_{\rm lb}^{\C})]  \ . \label{eq:lower_bound_regret_mid1}
\end{align}
By setting $b = \sqrt{1/d}$, we ensure that $m_B = 1$, and \eqref{eq:lower_bound_regret_mid1} becomes
\begin{align}
    \E_{\P}[R(t)] + \E_{\bar \P}[R(t)] \geq \frac{T}{2}\; \delta d^{\frac{L}{2}-2} [\P(\mcE_{\rm lb})+\bar \P(\mcE_{\rm lb}^{\C})] \ . \label{eq:lower_bound_regret_mid2}
\end{align}
Next, we characterize a lower bound  on $(\P(\mcE_{\rm lb})+\bar \P(\mcE_{\rm lb}^{\C}))$ that involves the Kullback-Leibler (KL) divergence between $\P$ and $\bar \P$, denoted by ${\rm D}_{\rm KL}(\P \kl \bar \P)$. For this purpose, we leverage the following theorem. 
\begin{theorem}[Bretagnolle-Huber inequality]
\label{th:pinsker}
Let $\P$ and $\bar \P$ be probability measures on the same measurable space $(\Omega,\mcF)$ and let $A \in \mcF$ be an arbitrary event. Then,
\begin{align}
    \P(A) + \bar \P(A^{\C}) \geq \frac{1}{2}\exp(-{\rm D}_{\rm KL}(\P \kl \bar \P)) \ .
\end{align}
\end{theorem}
By invoking Theorem~\ref{th:pinsker}, from \eqref{eq:lower_bound_regret_mid2} we obtain
\begin{align}
    \E_{\P}[R(t)] + \E_{\bar \P}[R(t)] \geq \frac{T}{4}\; \delta d^{\frac{L}{2}-2} \; \exp(-{\rm D}_{\rm KL}(\P \kl \bar \P)) \ . \label{eq:lower_bound_regret_mid3}
\end{align}
It remains to compute $ \exp(-{\rm D}_{\rm KL}(\P \kl \bar \P))$ to conclude our proof, for which we leverage the following result.
\begin{proposition}\label{prop:kl} The KL divergence between $\P$ and $\bar \P$ is equal to 
\begin{align}
    {\rm D}_{\rm KL}(\P \kl \bar \P) = T \delta^2 \ . 
\end{align}
\end{proposition}
\begin{proof}
Note that a Bayesian network factorizes as
\begin{align}
    \P(X_1,\dots,X_N) = \prod_{i=1}^{N} \P(X_i \med X_{\Par(i)}) \ . 
\end{align}
Additionally, the two bandit instances differ only in the mechanism of node $1$, which is a root node. Then, ${\rm D}_{\rm KL}(\P \kl \bar \P)$ can be simplified as
\begin{align}
    {\rm D}_{\rm KL}(\P \kl \bar \P) = \sum_{i=1}^{N}  {\rm D}_{\rm KL}(\P(X_i \med X_{\Par(i)})\; \| \;  \bar \P(X_i \med X_{\Par(i)})) = {\rm D}_{\rm KL}(\P(X_1) \kl \bar \P(X_1)) \ .
\end{align}
Hence, we only need to consider ${\rm D}_{\rm KL}(\P(X_1) \kl \bar \P(X_1))$ under two cases: when $1$ is observed, and $1$ is intervened. We have that
\begin{align}\label{eq:X_1}
    X_1 \sim \begin{cases}
        \mcN(\nu_1,1) \ , \quad & \mbox{ under } \P \mbox{ when } 1 \notin a \  \\
        \mcN(\nu_1-\delta,1) \ , \quad &\mbox{ under } \P \mbox{ when }  1 \in a \  \\   
        \mcN(\nu_1-\delta,1) \ , \quad &\mbox{ under } \bar\P \mbox{ when } 1 \notin a \   \\
        \mcN(\nu_1,1) \ , \quad & \mbox{ under } \bar\P \mbox{ when } 1 \notin a \ 
    \end{cases}\ .
\end{align}
By noting that
\begin{align}
    {\rm D}_{\rm KL}(\mcN(\nu_1,1) \kl \mcN(\nu_1-\delta,1)) = {\rm D}_{\rm KL}(\mcN(\nu_1-\delta,1) \kl \mcN(\nu_1,1)) = \frac{\delta^2}{2} \ ,
\end{align}
from~\eqref{eq:X_1} we obtain
\begin{align}
    & {\rm D}_{\rm KL}(\P(X_1) \kl \bar \P(X_1)) \notag \\ 
    &= \sum_{t \in [T]: 1 \notin a_t} {\rm D}_{\rm KL}(\mcN(\nu_1,1),\mcN(\nu_1-\delta,1))   + \sum_{t \in [T]: 1 \in a_t}  {\rm D}_{\rm KL}(\mcN(\nu_1-\delta,1),\mcN(\nu_1,1))   \\ 
    &= N^{*}_{1}(T) \; \frac{\delta^2}{2} +  (T-N^{*}_{1}(T)) \; \frac{\delta^2}{2} \\ 
    & =  \frac{T \delta^2}{2} \ . 
\end{align}
\end{proof}
By applying Proposition~\ref{prop:kl} on \eqref{eq:lower_bound_regret_mid3} and setting $\delta=\sqrt{2/T}$, we obtain
\begin{align}
\max\{\E_{\P}[R(t)], \E_{\bar \P}[R(t)]\} &\geq 
    \frac{1}{2}(\E_{\P}[R(t)] + \E_{\bar \P}[R(t)]) \\
    &\geq \frac{T}{8} \;  \delta d^{\frac{L}{2}-2} \; \exp(-T \delta^2 / 2 ) \\
    &= \frac{\exp(-1)}{8\sqrt{2}} \; d^{\frac{L}{2}-2} \; \sqrt{T} \ .
\end{align}
Hence, for $c = \frac{\exp(-1)}{8\sqrt{2}}$, the policy $\pi$ incurs a regret $c d^{\frac{L}{2}-2} \sqrt{T}$ in at least one of the two bandit instances. Finally, note that removing the nodes from the tree blocks of the constructed graph does not affect the analysis. Hence, $N$ can take any value in the range $\{(d(L-1)+2),\dots,d^L+d(L-1)+1\}$. \endproof

\section{Extension to Hard Interventions}\label{sec:hard}
In Section~\ref{sec:introduction}, we discussed that most of the prior work on causal bandits assumes hard interventions. Nevertheless, our formulation of soft interventions can be beneficial in practice. Furthermore, our results can be readily extended to hard interventions.

An intervention $a\in\mcA$ in this setting assigns pre-specified constant values to nodes in $a$. Denote the vector of these constant values by $Z \triangleq [z_1,\dots,z_N]$. Subsequently, a hard intervention is denoted by $a \triangleq do(X_a = Z_a)$, which means if $i \in a_t$, then $X_i(t) = Z_i$. Since the dependence of an intervened node on its parents is broken, a hard intervention on $i$ sets $[\bB^*]_{j,i}$ to zero for all $j \in [N]$, the term $[\bB^*]_{0,j}$ becomes $Z_i$, and $\epsilon_i(t)$ becomes zero. Then, \algonameUCB{} can be modified using this updated $\bB^*$ while constructing $\bB_a$'s. The analysis in the previous proofs is still valid by changing the upper bound on $\norm{X}$ and redefining $\kappa_{\min}$ and $\kappa_{\max}$ to accommodate the $Z$ terms.

\section{A Depiction of the Non-linearity}\label{appendix:non-linear}

In Lemma~\ref{lm:expected_reward}, we have shown that the reward $X_N$ is a linear function of $\epsilon$ variables but not a linear function of the edge weights. Specifically, the effect of a node on the reward is compounded via the edge weights along the paths from that node to the reward node. This renders our problem completely different from the linear bandit problem. Consider the example in Figure \ref{fig:expansion-example}. Since the dummy noise variable is $\epsilon_0=1$ and $\nu$ values are put into the dummy $0$-th row of $\bB$,  the reward node $X_5$ can be written as
\begin{figure}[b]
    \centering
    \includegraphics[width= 1.5 in]{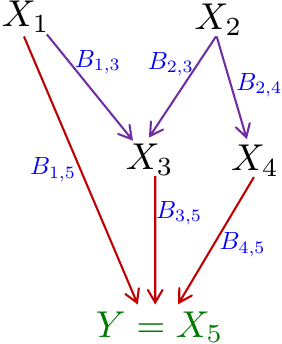}
    \caption{Sample graph with $N=5$ nodes and its edge weights.}
    \label{fig:expansion-example}
\end{figure}
\begin{align}
    X_5 &= [\bB]_{1,5} X_1 + [\bB]_{3,5} X_3 + [\bB]_{4,5} X_4 + (\nu_5 + \epsilon_5) \\
    &= [\bB]_{1,5} (\nu_1 + \epsilon_1) + [\bB]_{3,5} \left([\bB]_{1,3}(\nu_1 + \epsilon_1\right) + [\bB]_{2,3}(\nu_2+\epsilon_2) + \nu_3 + \epsilon_3) \notag \\
    &\quad + [\bB]_{4,5}\left([\bB]_{2,4}(\nu_2 + \epsilon_2) + \nu_4 + \epsilon_4\right)  + (\nu_5 + \epsilon_5)\\
    &= \underset{=1}{\underbrace{\epsilon_0}} \bigl([\bB]_{0,1}[\bB]_{1,5} + [\bB]_{0,1}[\bB]_{1,3}[\bB]_{3,5} + [\bB]_{0,2}[\bB]_{2,3}[\bB]_{3,5} + [\bB]_{0,2}[\bB]_{2,4}[\bB]_{4,5} \notag \\
    &\quad\quad+[\bB]_{0,3}[\bB]_{3,5} + [\bB]_{0,4}[\bB]_{4,5} +[\bB]_{0,5} \bigr) \notag \\
    &\quad + \epsilon_1\left([\bB]_{1,5} + [\bB]_{1,3} [\bB]_{3,5}\right)  \notag \\
    &\quad + \epsilon_2 \left([\bB]_{2,3}[\bB]_{3,5} + [\bB]_{2,4}[\bB]_{4,5}\right) \notag \\ 
    &\quad + \epsilon_3 [\bB]_{3,5} \notag \\ 
    &\quad + \epsilon_4 [\bB]_{4,5} \notag \\
    &\quad + \epsilon_5 \ .
\end{align}
We also demonstrate this non-linearity aspect via a simple simulation. Note that, we observe only vector $X$. Hence, if we are to estimate reward $X_N$ as a linear function, our only choice is to use $X$. To do so, we create a hierarchical graph with $N=10$ nodes, degree $d=3$, and $L=3$ layers. The parameters are randomly chosen similarly to the simulations in Section~\ref{sec:simulations}. We generate $5000$ training and $5000$ test samples. Then, we perform linear regression to estimate $X_N$ from $\{1,X_1,\dots,X_{N-1}\}$ using training data. We use the estimated parameters to predict $X_N$ on test data. Figure~\ref{fig:non-linear} shows the distribution of actual reward versus predicted reward on two different trials. Unsurprisingly, even for a simple model with $N=10$ nodes and $5000$ data samples, the reward is highly non-linear with respect to $X_N$, and equivalently to edge weight parameters.
\begin{figure}[t]
    \centering
    \begin{subfigure}[t]{0.4\textwidth}
        \centering
        \includegraphics[width=\linewidth]{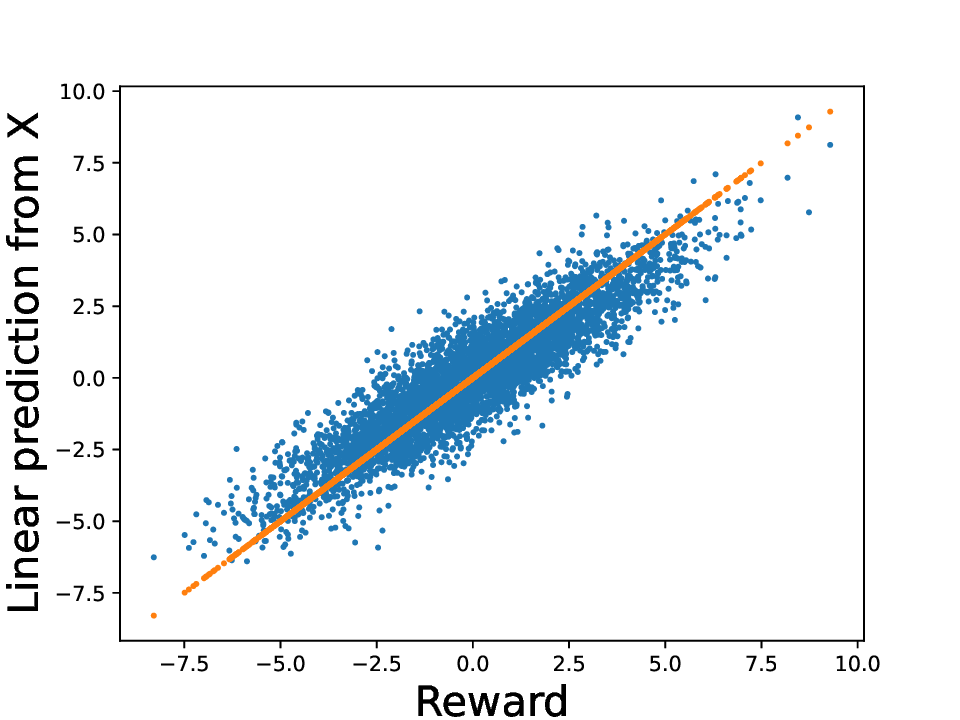}
        \caption{Example 1.}
        \label{fig:non-linear-1}
    \end{subfigure} 
    \begin{subfigure}[t]{0.4\textwidth}
        \centering
        \includegraphics[width=\linewidth]{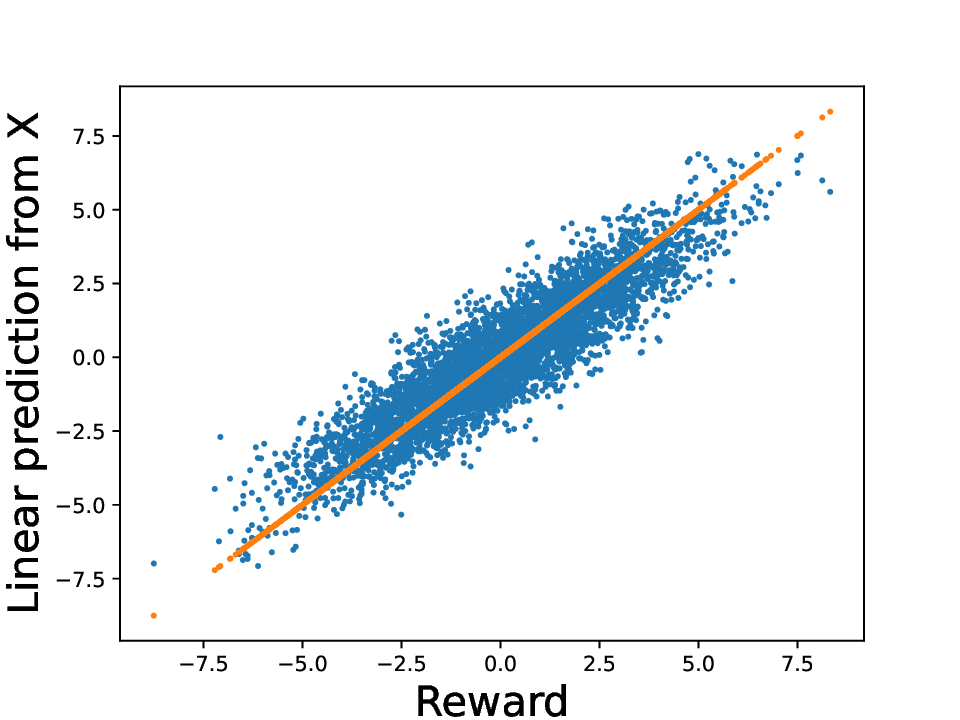}
        \caption{Example 2.}
        \label{fig:non-linear-2}
    \end{subfigure}   
    \caption{Sample results of estimating reward from $X$.}
    \label{fig:non-linear}
\end{figure}

\newpage

\bibliography{arxiv-v3}
\bibliographystyle{plainnat}

\end{document}